\pgfplotsset{compat=1.8}
\tikzset{elegant/.style={smooth,thick,samples=500,magenta}}
\theoremstyle{plain}
\newtheorem{theorem}{Theorem}[section]
\newtheorem{lemma}[theorem]{Lemma}
\newtheorem{remark}[theorem]{Remark}
\newtheorem{proposition}[theorem]{Proposition}
\theoremstyle{definition}
\newtheorem{definition}[theorem]{Definition}
\newtheorem{claim}[theorem]{Claim}
\crefname{assumption}{Assumption}{Assumptions}
\crefname{lemma}{lemma}{lemmas}
\Crefname{lemma}{Lemma}{Lemmas}
\crefname{definition}{definition}{definitions}
\Crefname{definition}{Definition}{Definitions}
\crefname{proposition}{proposition}{proposition}
\Crefname{proposition}{Proposition}{Proposition}
\theoremstyle{plain}
\newtheorem*{thm*}{Theorem}
\theoremstyle{plain}
\newcommand{\cS}{\mathcal{S}}
\newcommand{\cA}{\mathcal{A}}
\renewcommand{\cite}[1]{\citep{#1}}
\definecolor{cm}{RGB}{0,0,200}
\definecolor{purple}{RGB}{200,0,200}
\newcommand{\vast}{\bBigg@{2.5}}
\newcommand{\Vast}{\bBigg@{5}}
\def\poly{\operatorname{poly}}
\newcommand{\AC}{\mathbf{AC}}
\newcommand{\Z}{\mathbb{Z}}
\newcommand{\R}{\mathbb{R}}
\newcommand{\val}{\mathrm{val}}
\newcommand{\PAR}{\mathsf{PARITY}}
\newcommand{\MAJ}{\mathsf{MAJORITY}}
\newcommand{\ADD}{\mathsf{ADDITION}}
\newcommand{\MAX}{\mathsf{MAX}}
\newcommand{\rew}{\mathrm{reward}}
\newcommand{\model}{\mathrm{model}}
\newcommand{\qfunc}{\mathrm{value}}
\newcommand{\E}{\mathbb{E}}
\newcommand{\transit}{\mathbb{T}}
\newcommand{\PP}{\mathbb{P}}
\newcommand{\states}{\mathcal{S}}
\newcommand{\F}{\mathcal{F}}
\newcommand{\T}{\mathcal{T}}
\newcommand{\actions}{\mathcal{A}}
\newcommand{\mdps}{\mathcal{M}}
\newcommand{\1}{\mathbf{1}}
\newcommand{\0}{\mathbf{0}}
\definecolor{b2}{RGB}{51,153,255}
\definecolor{mygreen}{RGB}{80,180,0}
\title{On Representation Complexity of Model-based and Model-free Reinforcement Learning}
\author{
}
\date{}
\author{Hanlin Zhu$^*$, Baihe Huang$^*$, Stuart Russell  \\
EECS, UC Berkeley\\
\texttt{\{hanlinzhu,baihe\_huang,russell\}@berkeley.edu} \\
}
\begin{document}

\ifdefined\isarxiv
  \maketitle
  \begin{abstract}
  \end{abstract}

\else
\maketitle

\begin{abstract}
We study the representation complexity of model-based and model-free reinforcement learning (RL) in the context of circuit complexity. We prove theoretically that there exists a broad class of MDPs such that their underlying transition and reward functions can be represented by constant depth circuits with polynomial size, while the optimal $Q$-function suffers an exponential circuit complexity in constant-depth circuits. By drawing attention to the approximation errors and building connections to complexity theory, our theory provides unique insights into why model-based algorithms usually enjoy better sample complexity than model-free algorithms from a novel representation complexity perspective: in some cases, the ground-truth rule (model) of the environment is simple to represent, while other quantities, such as $Q$-function, appear complex. We empirically corroborate our theory by comparing the approximation error of the transition kernel, reward function, and optimal $Q$-function in various Mujoco environments, which demonstrates that the approximation errors of the transition kernel and reward function are consistently lower than those of the optimal $Q$-function. To the best of our knowledge, this work is the first to study the circuit complexity of RL, which also provides a rigorous framework for future research.

\end{abstract}

\fi

\def\thefootnote{*}\footnotetext{Equal contributions.}\def\thefootnote{\arabic{footnote}}

\section{Introductions}
\label{sec:intro}

In recent years, reinforcement learning (RL) has seen significant advancements in various real-world applications~\citep{mnih2013playing,silver2016alphago,moravvcik2017deepstack,shalev2016safe_drive,yurtsever2020survey,yu20healthcare,kober14robotics}. Roughly speaking, the RL algorithms can be categorized into two types: model-based algorithms \citep{draeger1995model,rawlings2000tutorial,luo2018algorithmic,chua2018deep, nagabandi2020deep, moerland2023model} and model-free algorithms \citep{mnih2015human,lillicrap2015continuous,van2016deep,haarnoja2018soft}. Model-based algorithms typically learn the underlying dynamics of the model (i.e., the transition kernel and the reward function) and then learn the optimal policy utilizing the knowledge of the ground-truth model. On the other hand, model-free algorithms usually derive optimal policies through different quantities, such as $Q$-function and value function, without directly assimilating the underlying ground-truth models. 

In statistical machine learning, the efficiency of an algorithm is usually measured by sample complexity. 
For the comparison of model-based and model-free RL algorithms, many previous works also focus on their sample efficiency gap, and model-based algorithm usually enjoys a better sample complexity than model-free algorihms~\citep{jin2018q,zanette2019tighter,tu2019gap,sun2019model}. In general, the error of learning~\footnote{For example, the performance difference between the learned policy and the optimal policy, which can be translated to sample complexity.} can be decomposed into three parts: optimization error, statistical error, and approximation error. Many previous efforts focus on optimization errors~\citep{singh2000convergence,agarwal2019optimality,zhan2023policy} and statistical errors~\cite{kakade2003sample,strehl2006pac,auer2008near,azar2017minimax,rashidinejad2022optimal,zhu2024provably}, while approximation error has been less explored. Although a line of previous work studies the sample complexity additionally caused by the approximation error under the assumption of a bounded model misspecification error~\citep{jin2019provably,wang2020provably,zhu2023provably,huang2021going,zhu2024importance}, they did not take a further or deeper step to study for what types of function classes (including transition kernel, reward function, $Q$-function, etc.), it is reasonable to assume a small model misspecification error. 

In this paper, we study approximation errors through the lens of representation complexities. Intuitively, if a function (transition kernel, reward function, $Q$-function, etc.) has a low representation complexity, it would be relatively easy to learn it within a function class of low complexity and misspecification error, thus implying a better sample complexity. The previous work \citet{dong2020expressivity} studies a special class of MDPs with state space $\cS = [0,1]$, action space $\cA = \{ 0, 1\}$, transition kernel piecewise linear with constant pieces and a simple reward function. By showing that the optimal $Q$-function requires an exponential number of linear pieces to approximate, they provide a concrete example that the $Q$-function has a much larger representation complexity than the transition kernel and reward function, which implies that model-based algorithms enjoy better sample complexity. However, the MDP class they study is restrictive. Thus, it is unclear whether it is a universal phenomenon that the underlying models have a lower representation complexity than other quantities such as $Q$-functions. Moreover, their metrics of measuring representation complexity, i.e., the number of pieces of piece-wise linear functions, is not fundamental, rigorous, or applicable to more general functions.

Therefore, we study representation complexity via circuit complexity~\citep{shannon1949synthesis}, which is a fundamental and rigorous metric that can be applied to arbitrary functions stored in computers. Also, since the circuit complexity has been extensively explored by numerous previous works~\citep{shannon1949synthesis,karp1982turing,furst1984parity,razborov1989method,smolensky1987algebraic,vollmer1999introduction,leighton2014introduction} and is still actively evolving, it could offer us a deep understanding of the representation complexity of RL, and any advancement of circuit complexity might provide new insights into our work. Theoretically, we show that there exists a general class of MDPs called Majority MDP (see more details in \Cref{sec:theory}), such that their transition kernels and reward functions have much lower circuit complexity than their optimal $Q$-functions. This provides a new perspective for the better sample efficiency of model-based algorithms in more general settings. We also empirically validate our results by comparing the approximation errors of the transition kernel, reward function, and optimal $Q$-function in various Mujoco environments (see \Cref{sec:exp} for more details).

We briefly summarize our main contributions as follows:

\begin{itemize}
    \item We are the first to study the representation complexity of RL under a circuit complexity framework, which is more fundamental and rigorous than previous works.
    \item We study a more general class of MDPs than previous work, demonstrating that it is common in a broad scope that the underlying models are easier to represent than other quantities such as $Q$-functions.
    \item We empirically validate our results in real-world environments by comparing the approximation error of the ground-truth models and $Q$-functions in various MuJoCo environments.
\end{itemize}
\subsection{Related work}
\label{sec:related_work}

\paragraph{Model-based v.s. Model-free algorithms.} Many previous results imply that there exists the sample efficiency gap between model-based and model-free algorithms in various settings, including tabular MDPs~\citep{strehl2006pac,azar2017minimax,jin2018q,zanette2019tighter}, linear quadratic regulators~\citep{dean2018regret,tu2018gap,dean2020sample}, contextual decision processes with function approximation~\citep{sun2019model}, etc. The previous work~\citet{dong2020expressivity} compares model-based and model-free algorithms through the lens of the expressivity of neural networks. They claim that the expressivity is a different angle from sample efficiency. On the contrary, in our work, we posit the representation complexity as one of the important reasons causing the gap in sample complexity.

\paragraph{Ground-truth dynamics.} The main result of our paper is that, in some cases, the underlying ground-truth dynamics (including the transition kernels and reward models) are easier to represent and thus learn, which inspires us to utilize the knowledge of the ground-truth model to boost the learning algorithms. This is consistent with the methods of many previous algorithms that exploit the dynamics to boost model-free quantities~\citep{buckman2018sample,feinberg2018model,luo2018algorithmic,janner2019trust}, perform model-based planning~\citep{oh2017value,weber2017imagination,chua2018deep,wang2019exploring,piche2018probabilistic,nagabandi2018neural,du2019task} or improve the learning procedure via various other approaches~\citep{levine2013guided,heess2015learning,rajeswaran2016epopt,silver2017predictron,clavera2018model}.

\paragraph{Approximation error.} A line of previous work~\citep{jin2019provably,wang2020provably,zhu2023provably} study the sample complexity of RL algorithms in the presence of model misspecification. This bridges the connection between the approximation error and sample efficiency. However, these works directly assume a small model misspecification error without further justifying whether it is reasonable. Our results imply that assuming a small error of transition kernel or reward function might be more reasonable than $Q$-functions. Many other works study the approximation error and expressivity of neural networks~\citep{bao2014approximation,lu2017expressive,dong2020expressivity,lu2021deep}. Instead, we study approximation error through circuit complexity, which provides a novel perspective and rigorous framework for future research.

\paragraph{Circuit complexity.} Circuit complexity is one of the most fundamental concepts in the theory of computer science (TCS) and has been long and extensively explored
\citep{savage1972computational,valiant1975non,trakhtenbrot1984survey,furst1984parity,hastad1986almost,smolensky1987algebraic,Razborov1987LowerBO,razborov1989method,boppana1990complexity,arora2009computational}. In this work, we first introduce circuit complexity to reinforcement learning to study the representation complexity of different functions including transition kernel, reward function and $Q$-functions, which bridges an important connection between TCS and RL.

\subsection{Notations}
Let $\1_b = (1,\dots,1) \in \R^b$  denote the all-one vector and let $\0_b = (0,\dots,0) \in \R^b$  denote the all-zero vector. For any set $\mathcal{X}$ and any function $f: \mathcal{X} \to \mathcal{X}$, $f^{(k)}(x)$ is the value of $f$ applied to $x$ after $k$ times, i.e., $f^{(1)}(x) = f(x)$ and $f^{(k)}(x) = \underbrace{f(f(\dots f}_{k}(x))\dots ))$. 

Let $\lceil x\rceil$ denote the smallest integer greater than or equal to $x$, and let $\lfloor x\rfloor$ denote the greatest integer less than or equal to $x$. We use $\{0,1\}^n$ to denote the set of $n$-bits binary strings. Let $\delta_x$ denote the Dirac measure: $\delta_x(A) = \begin{cases}
    1, &~ x \in A\\0, &~ x \notin A
\end{cases}$ and we use $\mathbbm{1}(\cdot)$ to denote the indicator function. Let $[n]$ denote the set $\{1,2,\dots,n\}$ and let $\mathbb{N} = \{0, 1, 2, \ldots\}$ denote the set of all natural numbers.
\section{Preliminaries}
\label{sec:prelim}

\subsection{Markov Decision Process}
An episodic Markov Decision Process (MDP) is defined by the tuple $\mdps = (\states,\actions,H,\transit,r)$ where $\states$ is the state space, $\actions$ is the action set, $H$ is the number of time steps in each episode, $\transit$ is the transition kernel from $\states \times \actions$ to $\Delta(\states)$ and $r = \{ r_h\}_{h=1}^H$ is the reward function. When $\transit(\cdot|s,a) = \delta_{s'}$, i.e., $\transit$ is deterministic, we also write $\transit(s,a) = s'$. 
In each episode, the agent starts at a fixed initial state $s_1$ and at each time step $h \in [H]$ it takes action $a_h$, receives reward $r_h(s_h,a_h)$ and transits to $s_{h+1} \sim \transit(\cdot|s_h,a_h)$. Typically, we assume $r_h(s_h,a_h) \in [0,1]$.

A policy $\pi$ is a length-$H$ sequence of functions $\pi = \{\pi_h : \states \mapsto \Delta(\actions)\}_{h=1}^H$. Given a policy $\pi$, we define the value function $V_h^\pi(s)$ as the expected cumulative reward under policy $\pi$ starting from $s_h = s$ (we abbreviate $V^* := V^*_0$):
\begin{align*}
    V_h^\pi(s) := \E\left.\left[\sum_{t = h}^H r_t(s_t,a_t) \right| s_h = s, \pi\right]
\end{align*}
and we define the $Q$-function $Q_h^\pi(s,a)$ as the the expected cumulative reward taking action $a$ in state $s_h = s$ and then following $\pi$ (we abbreviate $Q^* := Q^*_0$):
\begin{align*}
    Q_h^\pi(s,a):= \E\left.\left[\sum_{t = h}^H r_t(s_t,a_t)\right|s_h = s,a_h = a, \pi\right].
\end{align*}
The Bellman operator $\T_h$ applied to $Q$-function $Q_{h+1}$ is defined as follow
\begin{align*}
    \T_h(Q_{h+1})(s,a) := r_h(s,a)  + \E_{s' \sim \transit(\cdot|s,a)}[\max_{a'} Q_{h+1}(s',a')].
\end{align*}
There exists an optimal policy $\pi^\ast$ that gives the optimal value function for all states, i.e. $V^{\pi^\ast}_h(s) = \sup_{\pi}V_h^\pi(s)$ for all $h \in [H]$ and $s \in \states$ (see, e.g., \citet{agarwal2019reinforcement}). For notation simplicity, we abbreviate $V^{\pi^\ast}$ as $V^{\ast}$ and correspondingly $Q^{\pi^\ast}$ as $Q^\ast$. Therefore $Q^\ast$ satisfies the following Bellman optimality equations for all $s \in \states$, $a \in \actions$ and $h \in [H]$:
\begin{align*}
Q^\ast_h(s,a) = \T_h(Q^\ast_{h+1})(s,a).
\end{align*}

\subsection{Function approximation}

In value-based (model-free) function approximation, the learner is given a function class $\F = \F_1 \times \cdots \times \F_H$, where $\F_h \subset \{ f: \states \times \actions \mapsto [0, H]\}$ is a set of candidate functions to approximate the optimal Q-function $Q^\ast$. 

In model-based function approximation, the learner is given a function class $\F = \F_1 \times \cdots \times \F_H$, where $\F_h \subset \{ f: \states \times \actions \mapsto \Delta(\states)\}$ is a set of candidate functions to approximate the underlying transition function $\transit$. Additionally, the learner might also be given a function class $\mathcal{R} = \mathcal{R}_1 \times \cdots \times \mathcal{R}_H$ where  $\R_h \subset \{ f: \states \times \actions \mapsto [0, 1]\}$ is a set of candidate functions to approximate the reward function $r$. Typically, the reward function would be much easier to learn than the transition function.

To learn a function with a large representation complexity, one usually needs a function class with a large complexity to ensure that the ground-truth function is (approximately) realized in the 
given class. A larger complexity (e.g., log size, log covering number, etc.) of the function class would incur a larger sample complexity. Our main result shows that it is common that an MDP has a transition kernel and reward function with low representation complexity while the optimal $Q$-function has a much larger representation complexity. This implies that model-based algorithms might enjoy better sample complexity than value-based (model-free) algorithms.

\subsection{Circuit complexity}

To provide a rigorous and fundamental framework for representation complexity, in this paper, we use circuit complexity to measure the representation complexity.

Circuit complexity has been extensively explored in depth. In this section, we introduce concepts related to our results. One can refer to \citet{arora2009computational,vollmer1999introduction} for more details.

\begin{definition}[Boolean circuits, adapted from Definition 6.1, \cite{arora2009computational}]
For every $m,n \in \Z_+$, a \emph{Boolean circuit $C$ with $n$ inputs and $m$ outputs} is a directed acyclic graph with $n$ sources and $m$ sinks (both ordered). All non-source vertices are called gates and are labeled with one of $\wedge$ (AND), $\vee$ (OR) or $\neg$ (NOT). For each gate, its \emph{fan-in} is the number of incoming edges, and its \emph{fan-out} is the number of outcoming edges. 
The \emph{size} of $C$ is the number of vertices in it. The \emph{depth} of $C$ is the length of the longest directed path in it. 
A \emph{circuit family} is a sequence $\{C_n\}_{n \in \Z_+}$ of Boolean circuits where $C_n$ has $n$ inputs. 

If $C$ is a Boolean circuit, and $x = (x_1,\dots,x_n) \in \{0,1\}^n$ is its input, then the output of $C$ on $x$, denoted by $C(x)$, is defined in the following way: for every vertex $v$ of
$C$, a value $\val(v)$ is assigned to $v$ such that $\val(v)$ is given recursively by applying $v$’s logical operation on the
values of the vertices pointed to $v$; the output $C(x)$ is a $m$-bits binary string $y = (y_1,\dots,y_m)$, where $y_i$ is the value of the $i$-th sink.
\end{definition}

\begin{definition}[Circuit computation]
A \emph{circuit family} is a sequence $C = (C_1,C_2,\ldots, C_n, \ldots)$, where for every $n \in \Z_+$, $C_n$ is a Boolean circuit with $n$ inputs. Let $f_n$ be the function computed by $C_n$ . Then we say that \emph{$C$ computes the function $f: \{0,1\}^* \to \{0,1\}^*$}, which is defined by
\begin{align*}
    f(w) := C_{|w|}(w), ~ \forall w \in \{0,1\}^*
\end{align*}
where $|w|$ is the bit length of $w$. 
More generally, we say that a function $f: \mathbb{N} \to \mathbb{N}$ can be computed by $C$ if $f$ can be computed by $C$ where the inputs and the outputs are represented by binary numbers.
\end{definition}

\begin{definition}[$(k,m)$-DNF]
A $(k,m)$-DNF is a disjunction
of conjuncts, i.e., a formula of the form
\begin{align*}
    \bigvee_{i=1}^n \left(\bigwedge_{j=1}^{k_i} X_{i,j}\right)
\end{align*}
where $k_i \leq k$ for every $i \in [n]$, 
$\sum_{i=1}^n k_i \leq m$, 
and $X_{i,j}$ is either a primitive Boolean variable or the negation of a primitive Boolean variable.
\end{definition}

\begin{definition}[$\AC^0$]
$\AC^0$ is the class of all boolean functions $f: \{0,1\}^* \to \{0,1\}^*$ for which there is a circuit family with unbounded fan-in, $n^{O(1)}$ size, and constant depth that computes $f$.
\end{definition}

In this paper, we study representation complexity within the class of constant-depth circuits. Our results depend on two ``hard'' functions, parity and majority functions, which require exponential size circuits to compute and thus are not in $\AC^0$. Below, we formally define these two functions respectively.

\begin{definition}[Parity]
For every $n \in \Z_+$, the \emph{$n$-variable parity function} $\PAR_n: \{0,1\}^n \to \{0,1\}$ is defined by $\PAR_n(x_1, \dots , x_n) = \sum_{i=1}^n x_i ~\mod 2$. 
The \emph{parity function} $\PAR: \{0,1\}^* \to \{0,1\}$ is defined by
\begin{align*}
    \PAR(w) := \PAR_{|w|}(w), ~ \forall w \in \{0,1\}^*.
\end{align*}
\end{definition}

\begin{proposition}[\cite{furst1984parity}]
\label{thm:parity_ac}
$\PAR \notin \AC^0$.
\end{proposition}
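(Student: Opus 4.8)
The plan is to argue by contradiction using the method of random restrictions together with a switching lemma, which is the standard route to the Furst--Saxe--Sipser / H\aa stad theorem. Suppose toward a contradiction that $\PAR \in \AC^0$, so that there are a constant $d$ and a family $\{C_n\}$ of circuits of depth at most $d$ and size $s = n^{O(1)}$ with $C_n$ computing $\PAR_n$. First I would put each $C_n$ into a convenient normal form: make it a layered, alternating AND/OR circuit with all negations pushed to the inputs via De Morgan's laws, so that the bottom two levels can be read as a collection of depth-two subformulas, i.e.\ $(k,m)$-DNFs or $(k,m)$-CNFs. A preliminary random restriction lowers the bottom fan-in $k$ of these subformulas to a small value with high probability, so that the switching lemma applies cleanly at every subsequent stage.

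The core tool is the switching lemma: if $g$ is computed by a $k$-DNF and $\rho$ is a random restriction that leaves each variable free independently with probability $p$ (otherwise fixing it to $0$ or $1$), then with probability at least $1 - (Cpk)^t$ the restricted function $g|_\rho$ is computable by a decision tree of depth $< t$, hence is expressible as a $t$-CNF. Applying this simultaneously to every bottom subformula and taking a union bound over the at most $s$ gates, I can --- with positive probability over $\rho$ --- rewrite all bottom DNFs as $t$-CNFs. This swap lets me merge the new CNF layer with the AND layer directly above it, collapsing the depth from $d$ to $d-1$ while keeping the bottom fan-in bounded by a fixed $t$.

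Iterating this restrict-and-switch step $d-1$ times drives the depth down to a single decision tree of bounded depth. The bookkeeping is to choose $p$ at each stage (roughly $p \approx 1/(Ck\sqrt{n})$) so that two things hold at once: the total failure probability, at most $s\cdot(Cpk)^t$, stays below $1$, so that a good restriction exists; while the number of surviving free variables, which shrinks by a factor $p$ per stage, remains large --- say $n^{\Omega(1)}$ after the constantly many stages. A Chernoff bound guarantees that some restriction both succeeds on all gates and leaves at least $m$ variables free, where $m = n^{\Omega(1)} \to \infty$.

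The contradiction then comes from the fact that parity is maximally sensitive: for any restriction $\rho$, the function $\PAR_n|_\rho$ equals $\PAR$ (or its negation) on the $m$ surviving variables, and any decision tree computing parity on $m$ variables must have depth exactly $m$, since every surviving variable must be queried on every root-to-leaf path. But the collapsed circuit computes $\PAR_n|_\rho$ by a decision tree of constant depth, contradicting $m \to \infty$. Hence no constant-depth polynomial-size family can compute parity, i.e.\ $\PAR \notin \AC^0$. The main obstacle is the switching lemma itself: establishing the $(Cpk)^t$ bound requires the delicate canonical-decision-tree encoding argument that injectively maps each ``bad'' restriction (one whose canonical tree is deep) to a restriction on fewer free variables together with a short advice string, and then carefully counts these encodings. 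Everything else --- the union bounds, the Chernoff estimates, and the elementary decision-tree lower bound for parity --- is routine by comparison.
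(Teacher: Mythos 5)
The paper does not prove this proposition at all: it is imported as a black-box citation to Furst--Saxe--Sipser (1984), so there is no in-paper proof to compare against. Your sketch is a correct outline of the standard argument from the cited literature --- random restrictions plus the switching lemma, iterated to collapse a constant-depth circuit to a shallow decision tree, contradicting the fact that parity restricted to its surviving variables is still (the negation of) parity and hence needs decision-tree depth equal to the number of free variables. Strictly speaking, what you describe is H{\aa}stad's refinement (also cited in the paper as \cite{hastad1986almost}) rather than the original FSS random-restriction argument, which gave only quasi-polynomial bounds, but both follow the same restriction paradigm and either suffices for $\PAR \notin \AC^0$. Your parameter bookkeeping (taking $p$ polynomially small so the residual decision-tree depth stays bounded while $n^{\Omega(1)}$ variables survive the constantly many rounds) is consistent, and you correctly flag that the only genuinely hard ingredient you are assuming without proof is the switching lemma itself.
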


\section{Theoretical Results}
\label{sec:theory}

We show our main results in this section that there exists a broad class of MDPs that demonstrates a separation between the representation complexity of the ground-truth model and that of the value function. This reveals one of the important reasons that model-based algorithms usually enjoy better sample efficiency than value-based (model-free) algorithms.

The previous work~\citet{dong2020expressivity} also conveyed similar messages. However, they only study a very special MDP while we study a more general class of MDPs. Moreover, \citet{dong2020expressivity} measures the representation complexity of functions by the number of pieces for a piecewise linear function, which is not rigorous and not applicable to more general functions.

To study the representation complexity of MDPs under a more rigorous and fundamental framework, we introduce circuit complexity to facilitate the study of representation complexity. In this work, we focus on circuits with constant-depth. We first show a warm-up example in \Cref{sec:theory_warm_up}, and then extend our results to a broader class of MDPs in \Cref{sec:theory_broader_family}. In \Cref{app:sec_stochastic_POMDO}, we also briefly discuss possible extensions to POMDP and stochastic MDP.

\subsection{Warm up example}
\label{sec:theory_warm_up}

In this section we show a simple MDP that demonstrates a separation between the circuit complexity of the model function and that of the value function.
\begin{definition}[Parity MDP]
An \emph{$n$-bits parity MDP} is defined as follows: the state space is $\{0,1\}^n$, the action space is $\{(i,j): i,j \in [n]\}$, and the planning horizon is $H = n$. Let the reward function be defined by: $r(s,a) = \begin{cases}
    1, &~ s = \0_n\\
    0, &~ \text{otherwise}
\end{cases}$. Let the transition function be defined as follows: for each state $s$ and action $a = (i,j)$, transit with probability $1$ to $s'$ where $s'$ is given by flipping the $i$-th and $j$-th bits of $s$.
\end{definition}

Both $\transit$ and $r$ can be computed by a circuit with polynomial-size and constant depth. Indeed, consider the following circuit:
\begin{align*}
    C_{\rew}(s) = s_1 \wedge s_2 \wedge \cdots \wedge s_n.
\end{align*}
It can be verified that $C_{\rew} = r$ and it has size $n$. For the model transition function, we consider the binary representation of the action: for each $a = (i,j)$, let $(a_1,\dots,a_b)$ denote the binary representation of $i$ and let $(a_{b+1},\dots,a_{2b})$ denote the binary representation of $j$, where $b = \lceil \log (n+1)\rceil $. Then define the following circuit:
\begin{align*}
    C_{\model}(s,a) = \left(s_k \oplus \delta_k(a_1,\dots,a_b)\oplus \delta_k(a_{b+1},\dots,a_{2b})\right)_{k=1}^n
\end{align*}
where $\oplus$ is the XOR gate and $\delta_k(x_1,\dots,x_b) = \mathbbm{1}\left(k = \sum_{i=1}^b 2^{b-i} \cdot x_i\right)$. We visualize this circuit in \Cref{fig:parity_model}.
\begin{figure}
    \centering
    \includegraphics[width=0.9\textwidth]{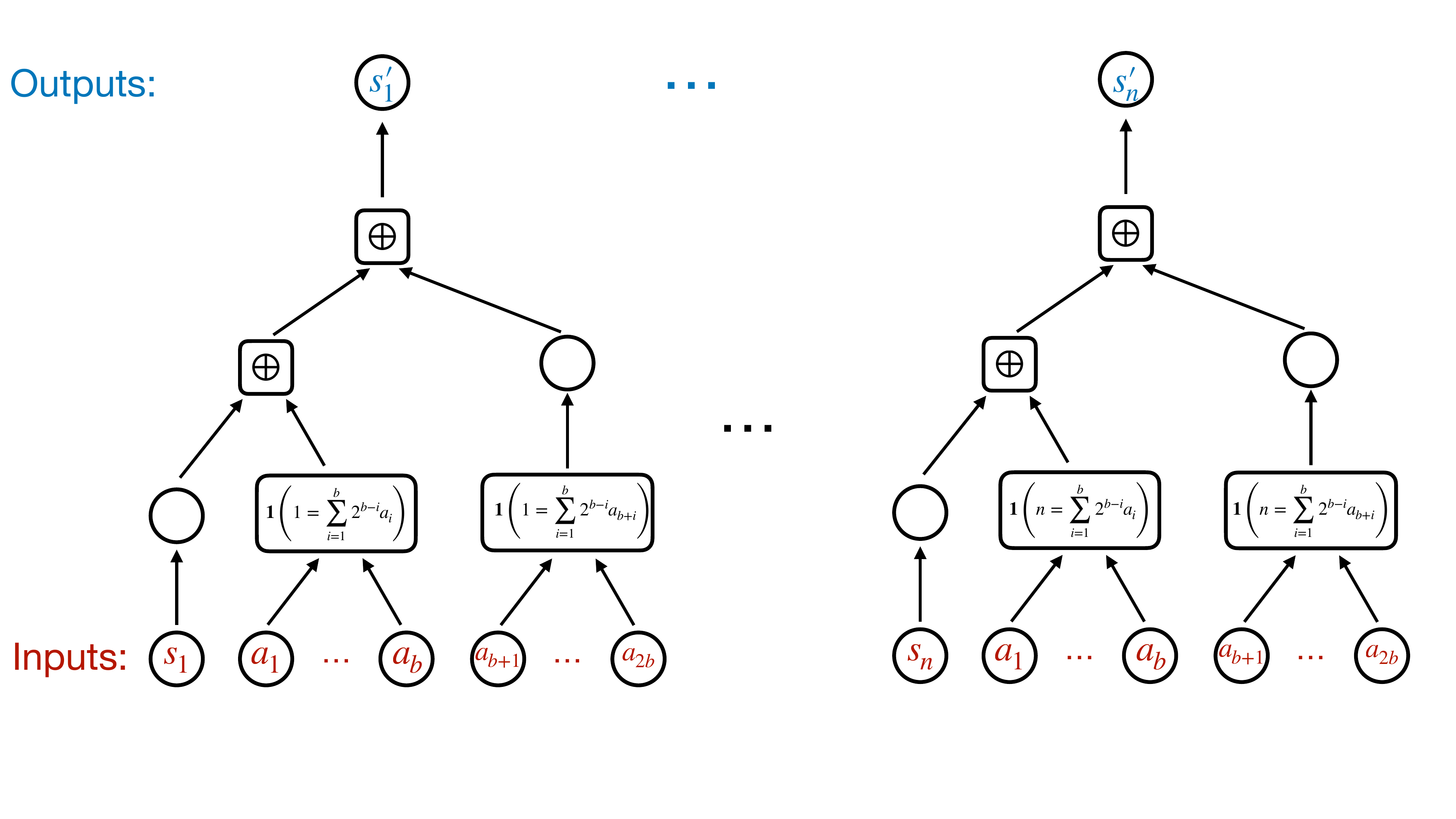}
    \caption{Constant-depth circuit of the model transition function in parity MDP. An empty node is directly assigned the value of the node pointing to it.}
    \label{fig:parity_model}
\end{figure}
Since the XOR gate and the gate $\delta_k$ can all be implemented by binary circuits with polynomial size and constant depth (notice that $a \oplus b = (\neg a \vee b) \wedge (a \vee \neg b)$ and $\delta_k(x_1,\dots,x_b) = (x_1 \vee k_1) \vee \cdots \vee (x_b \vee k_b)$ where $k_1k_2\cdots k_b$ is the binary representation of $k$), $C_{\model}$ also has polynomial size and constant depth. 

However, the optimal $Q$-function $Q^*$ can not be computed by a circuit with polynomial size and constant depth. Indeed, it suffices to see that the value function $V^*$ can not be computed by a circuit with polynomial size and constant depth. To see this, notice that $\mathbbm{1}(V^* > 0)$ is the parity function, since if there are even numbers of 1's in $(s_1,\dots,s_n)$, then there always exists a sequence of actions to transit to the reward state; otherwise, the number of 1's remains odd and will never become $\0_n$. Suppose for the sake of contradiction that there exists a circuit $C_{\qfunc}$ with polynomial size and constant depth that computes $V^*$, then the circuit
\begin{align*}
    C_{\PAR}(s,a) = C_{\qfunc}(s,a)_1 \vee C_{\qfunc}(s,a)_2 \vee \cdots \vee C_{\qfunc}(s,a)_n
\end{align*}
computes the parity function for $s$ and belongs to the class $\AC^0$. This contradicts \Cref{thm:parity_ac}.

\subsection{A broader family of MDPs}
\label{sec:theory_broader_family}

In this section, we present our main results, i.e., there exists a general class of MDP, of which nearly all instances of MDP have low representation complexity for the transition model and reward function but suffer an exponential representation complexity for the optimal $Q$-function.

We consider a general class of MDPs called `majority MDP', where the states are comprised of representation bits that reflect the situation in the underlying environment and control bits that determine the transition of the representation bits. We first give the definition of majority MDP and then provide intuitive explanations.

\begin{definition}[Control function]\label{def:control_func}
We say that a map $f$ from $\{0,1\}^b$ to itself is a \emph{control function over $\{0,1\}^b$}, if $f(\1_b) = \1_b$, and 
\begin{align*}
    \left\{f^{(k)}(\0_b): k = 1,2, \dots, 2^b-1\right\} = \{0,1\}^b\backslash \{\0_b\}.
\end{align*}
\end{definition}

\begin{definition}[Majority MDP]\label{defn:majority_mdp}
An \emph{$n$-bits majority MDP with reward state $s_\rew \in \{0,1\}^n$, control function $f: \{0,1\}^{\lceil\log (n+1)\rceil} \to \{0,1\}^{\lceil\log (n+1)\rceil}$, and condition $C: \{0,1\}^n \to \{0,1\}$} is defined by the following:
\begin{itemize}
    \item The state space is given by $\{0,1\}^{n+  b}$ where $b  = \lceil\log (n+1)\rceil$. For convenience, we assume $n = 2^b-1$. Each state is comprised of two subparts $s = (s[c],s[r])$, where $s[c] = (s_1,\dots,s_b) \in \{0,1\}^b$ is called the control bits, and $s[r] = (s_{b+1},\dots,s_{b+n}) \in \{0,1\}^n$ is called the representation bits (see \Cref{fig:majority_MDP});
    \item The action space is $\{0,1\}$; The planning horizon is $H = 2^b + n$;
    \item The reward function is defined by: $r(s,a) = \begin{cases}
    1, &~ s[r] = s_\rew, s[c] = \1_b\\
    0, &~ \text{otherwise}
\end{cases}$;
    \item The transition function $\transit$ is defined as follows: define the flipping function $g: \{0,1\}^n \times [n] \to \{0,1\}^n$ by $g(s,i) = (s_1,\dots,\neg s_i,\dots,s_n)$, then $\transit$ is given by:
    \begin{align*}
        \transit(s,a) = \begin{cases}
            \left(s[c],g(s[r],\sum_{j=1}^b 2^{b-j}\cdot s_j)\right), &~ a = 1, C(s[r]) = 1\\
            \left(f(s[c]),s[r]\right), &~ \text{otherwise}
        \end{cases}
    \end{align*}
    that is, if $a = 1$ and $C(s[r]) = 1$, then transit to $s' = \left(s[c],g(s[r],i)\right)$ where $i = \sum_{j=1}^b 2^{b-j}\cdot s_j$\footnote{That is, $s[c]$ as a binary number equals $i$.} and $g_i(s[r])$ is given by flipping the $i$-th bits of $s[r]$ (keep the control bits while flip the $i$-th coordinate of the representation bits); otherwise transit to $s' = \left(f(s[c]),s[r]\right)$ (keep the representation bits and apply the control function $f$ to the control bits).
\end{itemize}
When $C \equiv 1$, we call such an MDP unconditioned.
\end{definition}

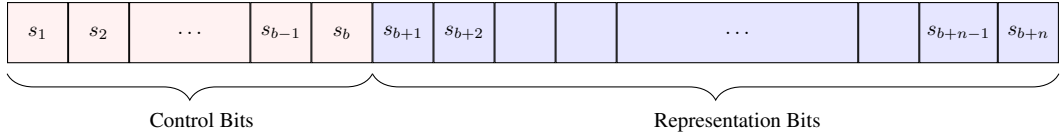
\begin{figure}
    \centering
    \begin{tikzpicture}[scale=0.8, every node/.style={transform shape}, bit/.style={draw, rectangle, minimum size=1cm, text centered}, node distance=0pt]
    
    \node[bit, fill=pink!20] (bit1) {\(s_1\)};
    \node[bit, right=of bit1, fill=pink!20] (bit2) {\(s_2\)};
    \node[bit, right=of bit2, minimum width=2cm, fill=pink!20] (ellipsis1) {\(\ldots\)};
    \node[bit, right=of ellipsis1, fill=pink!20] (bitb1) {\(s_{b-1}\)};
    \node[bit, right=of bitb1, fill=pink!20] (bitb) {\(s_b\)};
    
    \node[bit, right=of bitb, fill=blue!10] (bitb1) {\(s_{b+1}\)};
    \node[bit, right=of bitb1, fill=blue!10] (bitb2) {\(s_{b+2}\)};
    \node[bit, right=of bitb2, fill=blue!10] (bitb3) {};
    \node[bit, right=of bitb3, fill=blue!10] (bitb4) {};
    \node[bit, right=of bitb4, minimum width=4cm, fill=blue!10] (ellipsis2) {\(\ldots\)};
    \node[bit, right=of ellipsis2, fill=blue!10] (bitbn2) {};
    \node[bit, right=of bitbn2, fill=blue!10] (bitbn1) {\(s_{b+n-1}\)};
    \node[bit, right=of bitbn1, fill=blue!10] (bitbn) {\(s_{b+n}\)};
    
    \node[below=of ellipsis1, align=center, yshift=-0.7cm, xshift=0.2cm] {Control Bits};
    
    \node[below=of ellipsis2, align=center, yshift=-0.7cm] {Representation Bits};
    
    \draw [decorate,decoration={brace,amplitude=10pt,mirror,raise=4pt},yshift=0pt]
    (bit1.south west) -- (bitb.south east) node [black,midway,yshift=-0.8cm] {};
    
    \draw [decorate,decoration={brace,amplitude=10pt,mirror,raise=4pt},yshift=0pt]
    (bitb1.south west) -- (bitbn.south east) node [black,midway,yshift=-0.8cm] {};
    
    \end{tikzpicture}
    \caption{Illustration of state for majority MDPs.}
    \label{fig:majority_MDP}
\end{figure}

Although many other MDPs lie outside of the Majority MDP class, most are too random to become meaningful (for example, the MDP where the reward state is in a $o(n)$-sized connected component). Thus, instead of studying a more general class of MDPs, we consider one representative class of MDPs and separate three fundamental notions in RL: control, representation, and condition. We elaborate on these aspects in the following remark.

\begin{remark}[Control function and control bits]
In Majority MDPs, the control bits start at $\0_b$ and traverse all $b$-bits binary strings before ending at $\1_b$. This means that the agent can can flip every entry of the representation bits, and therefore, the agent is able to change the representation bits to any $n$-bits binary string within $2^b + n$ time steps in unconditional settings. 

The control function is able to express any ordering of $b$-bits strings (starting from $\0_b$ and ending with $\1_b$) in which the control bits are taken. With this expressive power, the framework of Majority MDP simplifies the action space to fundamental case of $\actions = \{0,1\}$. 
\end{remark}

\begin{remark}[Representation bits]
    In general, the states of any MDP (even for MDP with continuous state space) are stored in computer systems in finitely many bits. Therefore, we allocate $n$ bits in the representation bits to encapsulate and delineate the various states within an MDP.
\end{remark}

\begin{remark}[Condition function]
The condition function simulates and expresses the rules of transition in an MDP. In many real-world applications, the decisions made by an agent only take effect (and therefore cause state transition) under certain underlying conditions, or only enable transitions to certain states that satisfy the conditions: for example, an marketing maneuver made by a company will only make an influence if it observes the advertisement law and regulations; a move of a piece chess must follow the chess rules; a treatment decision can only affect some of the measurements taken on an individual; a resource allocation is subject to budget constraints; etc.
\end{remark}

Finally, the following two theorems show the separation result of circuit complexity between the model and the optimal $Q$-function for Majority MDP in both unconditional and conditional settings.

\begin{theorem}[Separation result of majority MDP, unconditioned setting]
\label{thm:majority_mdp_uncondition}
For any reward state $s_\rew \in \{0,1\}^n$ and any control function $f: \{0,1\}^b \to \{0,1\}^b$, the unconditioned $n$-bits majority MDP with reward state $s_\rew$, control function $f$ has the following properties:
\begin{itemize}
    \item The reward function and the transition function can be computed by circuits with polynomial (in $n$) size and constant depth.
    \item The optimal Q-function (at time step $t=0$) cannot be computed by a circuit with polynomial size and constant depth.
\end{itemize}
\end{theorem}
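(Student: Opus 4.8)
The statement has two parts, and I would prove them with different tools: the upper bound by exhibiting explicit constant-depth circuits, and the lower bound by reducing $\PAR$ to the optimal value function. For the upper bound, write the reward as a single conjunction $r(s,a)=\big(\bigwedge_{i:(s_\rew)_i=1}s_{b+i}\big)\wedge\big(\bigwedge_{i:(s_\rew)_i=0}\neg s_{b+i}\big)\wedge\bigwedge_{j=1}^{b}s_j$, which has size $O(n)$ and depth $2$. For the (deterministic) transition, since the MDP is unconditioned ($C\equiv 1$) the dynamics collapse to: $a=0$ advances the control via $f$ and keeps the representation, while $a=1$ keeps the control and flips representation bit $i=\sum_{j}2^{b-j}s_j$. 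I would compute each of the $n+b$ output coordinates as a multiplexer on $a$: a representation coordinate as $s'[r]_k=(a\wedge(s_{b+k}\oplus\delta_k(s[c])))\vee(\neg a\wedge s_{b+k})$ with the equality test $\delta_k$ built exactly as in the parity MDP warm-up, and a control coordinate as $s'[c]_j=(a\wedge s_j)\vee(\neg a\wedge f(s[c])_j)$. The only nonroutine point is $f(s[c])_j$: because $f$ acts on $b=\lceil\log(n+1)\rceil=O(\log n)$ bits, each such coordinate is a Boolean function of $O(\log n)$ variables and hence has a brute-force DNF of size $2^{O(\log n)}=\poly(n)$ and depth $2$. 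Assembling the coordinates gives a $\poly(n)$-size, constant-depth circuit.

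For the lower bound I would first compute $V^\ast$ exactly on the states whose control block equals $\0_b$. From control $\0_b$ the control traverses the entire $f$-orbit $\0_b=f^{(0)}(\0_b),\dots,f^{(n)}(\0_b)=\1_b$, which runs through all nonzero values $\{1,\dots,n\}$ exactly once; consequently each representation coordinate $k\in\{1,\dots,n\}$ can be flipped, but only during the unique step at which the control equals $k$. The optimal policy flips exactly the coordinates on which $s[r]$ disagrees with $s_\rew$ and then idles at the reward state. Reaching that state needs exactly $n$ advance-actions (the orbit from $\0_b$ to $\1_b$ is a simple path into a fixed point) plus one flip-action per disagreeing coordinate, and advances and flips cannot be merged into a single step, so the earliest arrival time is $n+d_H(s[r],s_\rew)$, where $d_H$ is the Hamming distance; idling afterwards collects one unit of reward per remaining step. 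This yields $V^\ast(\0_b,s[r])=\alpha-d_H(s[r],s_\rew)$ for an explicit integer $\alpha$ fixed by $H=2^b+n$, with the horizon exactly tight in the worst case $d_H=n$.

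Finally I would run the reduction. If $Q^\ast$ were in $\AC^0$, then so would $V^\ast(s)=\max_{a\in\{0,1\}}Q^\ast(s,a)$ be, since this is two hard-wired copies of the circuit followed by an $\AC^0$ comparison of two $O(\log n)$-bit integers. Hard-wiring the control block to $\0_b$ preserves $\AC^0$ and produces, on input $s[r]$, the integer $\alpha-d_H(s[r],s_\rew)$; its least significant output bit is an $\AC^0$ function of $s[r]$ equal to $(\alpha\bmod 2)\oplus\PAR(s_\rew)\oplus\PAR(s[r])$, because $d_H(s[r],s_\rew)\equiv\PAR(s[r])\oplus\PAR(s_\rew)\pmod 2$ and the parity of a difference depends only on the parities of the operands. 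XORing with the constant $(\alpha\bmod 2)\oplus\PAR(s_\rew)$ then places $\PAR$ in $\AC^0$, contradicting \Cref{thm:parity_ac}. (The same closed form also computes a $\MAJ$-type function by thresholding $V^\ast$, which is the source of the name.)

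The main obstacle is the exact evaluation of $V^\ast$ on the control-$\0_b$ slice: I must establish both achievability, namely a legal action sequence that realizes arrival time $n+d_H$ within the horizon $H=2^b+n$ for every $s[r]$, and optimality, namely that no trajectory can reach the reward state any sooner, using that each advance moves the control exactly one step along its orbit and that each disagreeing coordinate forces a distinct flip at its unique control window. Once this closed form is in hand, the circuit constructions and the parity extraction are routine.
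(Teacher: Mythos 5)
Your proof is correct, and most of it coincides with the paper's own argument: the reward and transition circuits are the same multiplexer-style constructions (including the brute-force depth-$2$ representation of $f$, affordable precisely because $f$ acts on $b=\lceil\log(n+1)\rceil=O(\log n)$ bits, cf.\ \Cref{lem:control_func}), the closed form $V^*(\0_b,s[r])=n+1-d_H(s[r],s_\rew)$ is exactly the content of \Cref{lem:qfunc_majority_mdp_uncondition}, proved by the same achievability-plus-counting argument (note the lemma's displayed formula carries a spurious negation that would count agreements rather than disagreements, but its proof text confirms the Hamming-distance form you derived), and the passage from $V^*$ to $Q^*$ uses the same facts $\MAX\in\AC^0$ and circuit composition (\Cref{thm:max_ac}, \Cref{cla:circuit_composite}). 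Where you genuinely diverge is the hardness reduction. The paper computes $n+1-C_{\qfunc}(\cdot)$ inside the circuit (invoking $\ADD\in\AC^0$, \Cref{thm:add_ac}), observes that since $n=2^b-1$ the most significant bit of this $b$-bit count is the majority of the disagreement indicators, and contradicts $\MAJ\notin\AC^0$ (\Cref{thm:majority_ac}). You instead read off the least significant output bit of $C_{\qfunc}$ directly, use $(\alpha-d_H)\bmod 2=(\alpha\bmod 2)\oplus\PAR(s[r])\oplus\PAR(s_\rew)$, and XOR with a fixed constant to place $\PAR$ in $\AC^0$, contradicting \Cref{thm:parity_ac}. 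Your route is slightly more economical: it needs no in-circuit arithmetic (only wire selection and a constant XOR, so \Cref{thm:add_ac} is never invoked) and it reuses the same hardness source as the warm-up parity MDP. The paper's majority reduction, on the other hand, is what motivates the MDP's name and is phrased so that it transfers essentially verbatim to the conditioned setting of \Cref{thm:majority_mdp_condition}, where the count runs over a subset $A$ of $\lceil n/2\rceil$ coordinates (your LSB trick would adapt there as well, yielding parity on $|A|$ variables). Both yield complete proofs of the theorem.
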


\begin{theorem}[Separation result of majority MDP, conditioned setting]
\label{thm:majority_mdp_condition}
Fix $m < n/2$. Let $\rho$ be uniform distribution over $(O(1), m)$-DNFs of $n$ Boolean variables. 
Then for any reward state $s_\rew \in \{0,1\}^n$ and any control function $f: \{0,1\}^b \to \{0,1\}^b$, with probability at least $1 - e^{-\Omega(m)}$,  the $n$-bits majority MDP with reward state $s_\rew$, control function $f$, and condition $C$ sampled from $\rho$, has the following properties:
\begin{itemize}
    \item The reward function and the transition function can be computed by circuits with polynomial (in $n$) size and constant depth.
    \item The optimal Q-function (at time step $t=0$) cannot be computed by a circuit with polynomial size and constant depth.
\end{itemize}
\end{theorem}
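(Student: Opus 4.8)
The plan is to dispatch the first bullet (transition and reward in $\AC^0$) uniformly in $s_\rew, f, C$, and then to prove the second bullet by reducing a provably hard function to $V^*_0$ restricted to a large, \emph{condition-free} sub-cube that exists with high probability over $C$. For the easy direction, the reward $r$ compares $s[r]$ with $s_\rew$ and $s[c]$ with $\1_b$ bitwise, an AND of $O(n)$ equality gates. For the transition, the control function $f$ acts on only $b = O(\log n)$ bits, so it is realized by its truth table as a DNF of size $O(n\log n)$; the flip operation is a decoder selecting coordinate $i=\mathrm{value}(s[c])$, which is $\AC^0$; and the branch selector evaluates $C(s[r])$ (a $(O(1),m)$-DNF, hence depth-$2$ of size $O(m)$) and multiplexes. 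Thus the whole transition is poly-size constant-depth for \emph{every} admissible $C$, so this bullet needs no randomness.

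The key structural observation for the hard direction is that $C$ depends on at most $\sum_i k_i \le m < n/2$ of its $n$ input coordinates. Write $V_C$ for this coordinate set, so $|V_C|<n/2$ and the free set $U:=[n]\setminus V_C$ satisfies $|U|>n/2$. Fix the $V_C$-coordinates of the representation bits to the values $s_\rew|_{V_C}$; on the resulting sub-cube $\mathcal{S}=\{\,s[r]: s[r]|_{V_C}=s_\rew|_{V_C}\,\}$ the value of $C$ is constant and equal to $C(s_\rew)$, since $C$ reads no coordinate outside $V_C$. Hence, conditioned on the event $\{C(s_\rew)=1\}$, we have $C\equiv 1$ on all of $\mathcal{S}$, so every flip of a free coordinate is permitted and the conditioned dynamics restricted to $\mathcal{S}$ coincide exactly with the unconditioned dynamics of \Cref{thm:majority_mdp_uncondition}. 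From a state $(\0_b, s[r])$ with $s[r]\in\mathcal{S}$, the agent advances the control through all $n$ coordinate-indices (in the order dictated by $f$, which by \Cref{def:control_func} visits each index once and ends at $\1_b$) and flips exactly the mismatched free coordinates, never disturbing a $V_C$-coordinate; reaching $s_\rew$ costs exactly $n+d$ steps, where $d=|\,s[r]|_U\oplus s_\rew|_U\,|$. This yields $V^*_0(\0_b,s[r])=H-n-d$, an affine function of the free-coordinate Hamming weight with slope $-1$, uniformly in $f$ and $s_\rew$.

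I would then bound the probability of the good event: for the fixed point $s_\rew$, each conjunct of the random DNF is satisfied with probability $2^{-O(1)}$, bounded below by a constant, and since there are $\Omega(m)$ essentially independent conjuncts, $\Pr_C[C(s_\rew)=0]\le (1-2^{-O(1)})^{\Omega(m)}=e^{-\Omega(m)}$, giving the claimed confidence $1-e^{-\Omega(m)}$. On this event, restrict a hypothetical $\AC^0$ circuit for $Q^*_0$ by fixing the control bits to $\0_b$, the $V_C$-coordinates to $s_\rew|_{V_C}$, and taking the MAX over the two actions (a two-way MAX is $\AC^0$); this produces an $\AC^0$ circuit for $V^*_0(\0_b,\cdot)=H-n-d$ on the $|U|>n/2$ free inputs. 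Reading off its least-significant output bit computes $d \bmod 2$, i.e. the parity of the free coordinates XOR a fixed constant, so PARITY on $\Omega(n)$ bits would lie in $\AC^0$, contradicting \Cref{thm:parity_ac} (equivalently, an $\AC^0$ threshold comparison of $V^*_0$ would place $\MAJ$ in $\AC^0$). This refutes the assumption and proves the second bullet.

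The crux, and where this statement genuinely differs from \Cref{thm:majority_mdp_uncondition}, is making the argument hold for \emph{every} $s_\rew$ and \emph{every} $f$ with the randomness only in $C$. The resolution is that $C$ is a function of fewer than $n/2$ of its coordinates, so the condition-free sub-cube can be centered at $s_\rew$ \emph{itself} rather than at some problem-specific point; the only input we need from the randomness is the single-point event $C(s_\rew)=1$, whose failure probability is exactly $e^{-\Omega(m)}$. I expect the remaining care to be (i) verifying the exact step count $n+d$ under an arbitrary control ordering, so that the slope is exactly $-1$ and a parity bit is cleanly extractable (including the boundary behavior at the horizon $H=2^b+n$), and (ii) confirming that fixing inputs to constants and taking the two-way MAX keep the reduction inside $\AC^0$; both are routine.
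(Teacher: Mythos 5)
Your proposal is correct and follows essentially the same route as the paper's proof: the same circuit constructions for the easy direction, the same key observation that $C$ reads at most $m<n/2$ coordinates so that, on the event $C(s_\rew)=1$ (which the paper also bounds by $1-e^{-\Omega(m)}$ in Lemma~\ref{lem:condition_satisfied}), the dynamics restricted to the subcube agreeing with $s_\rew$ off the read coordinates are unconditioned, making $V^*$ an affine function of the free-coordinate Hamming distance (Lemma~\ref{lem:qfunc_majority_mdp_condition}). The only divergence is cosmetic: you extract the least-significant bit of $V^*$ to reduce from $\PAR$ (\Cref{thm:parity_ac}), whereas the paper extracts the most-significant bit to reduce from $\MAJ$ (\Cref{thm:majority_ac}); both hard functions are available and the reduction structure is identical.
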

The proof of \Cref{thm:majority_mdp_uncondition} and \Cref{thm:majority_mdp_condition} are deferred to \Cref{sec:majority_proof} and \Cref{sec:majority_unconditioned_proof}. In short, they imply that $\transit, r \in \AC_0$ and $Q^* \notin \AC_0$. In fact, we show that the value functions cannot be computed by a circuit with polynomial size and constant depth. Due to the relationship $Q^*(s,a) = r(s,a) + V^*(\transit(s,a))$, we will treat these two functions synonymously and refer to them collectively as the "value function."
\section{Experiments}
\label{sec:exp}

\Cref{thm:majority_mdp_uncondition,thm:majority_mdp_condition} indicate that there exists a broad class of MDPs in which the transition functions and the reward functions have much lower circuit complexity than the optimal $Q$-functions (actually also value functions according to our proof for Majority MDPs). 
This observation, therefore, implies that value functions might be harder to approximate than transition functions and the reward functions, and gives rise to the following question:
\begin{center}
    \emph{In general MDPs, are the value functions harder to approximate than transition functions and the reward functions?}
\end{center}
In this section, we seek to answer this question via experiments on common simulated environments. Specifically, we fix $d, m \in \mathbb{Z}_+$, and let $\mathcal{F}$ denote the class of $d$-depth (i.e., $d$ hidden layers), $m$-width neural networks (with input and output dimensions tailored to the context). 
The quantities of interest are the following relative approximation errors
\begin{align*}
    e_{\text{model}} = &~ \min_{P \in \mathcal{F}} \frac{\E[ \|P(s,a) - \transit(s,a)\|^2]}{\E[ \|\transit(s,a)\|^2]}\\
    e_{\text{reward}} = &~ \min_{R \in \mathcal{F}} \frac{\E[ (R(s,a) - r(s,a))^2]}{\E[ (r(s,a))^2]}\\
    e_{\text{$Q$-function}} = &~ \min_{Q \in \mathcal{F}} \frac{\E[ (Q(s,a) - Q^*(s,a))^2]}{\E[ (Q^*(s,a))^2]}
\end{align*}
where the expectation is over the distribution of the optimal policy and the mean squared errors are divided by the second moment so that the scales of different errors will match. Therefore, $e_{\text{model}}, e_{\text{reward}}, e_{\text{Q-function}}$ stand for the difficulty for a $d$-depth, $m$-width neural networks to approximate the transition function, the reward function, and the $Q$-function, respectively.

\begin{figure}[htp]
\centering
\includegraphics[width=.31\textwidth]{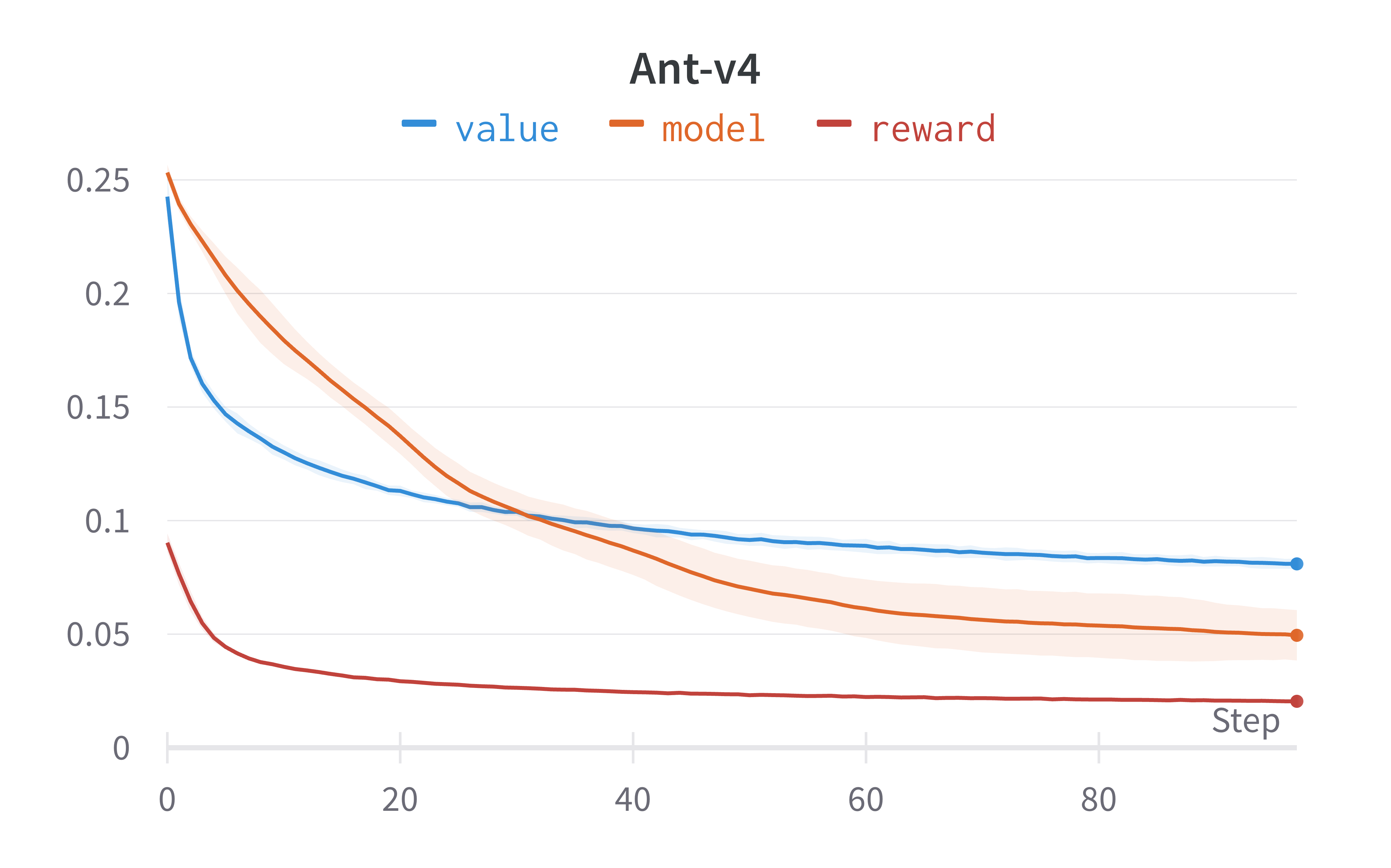}\quad
\includegraphics[width=.31\textwidth]{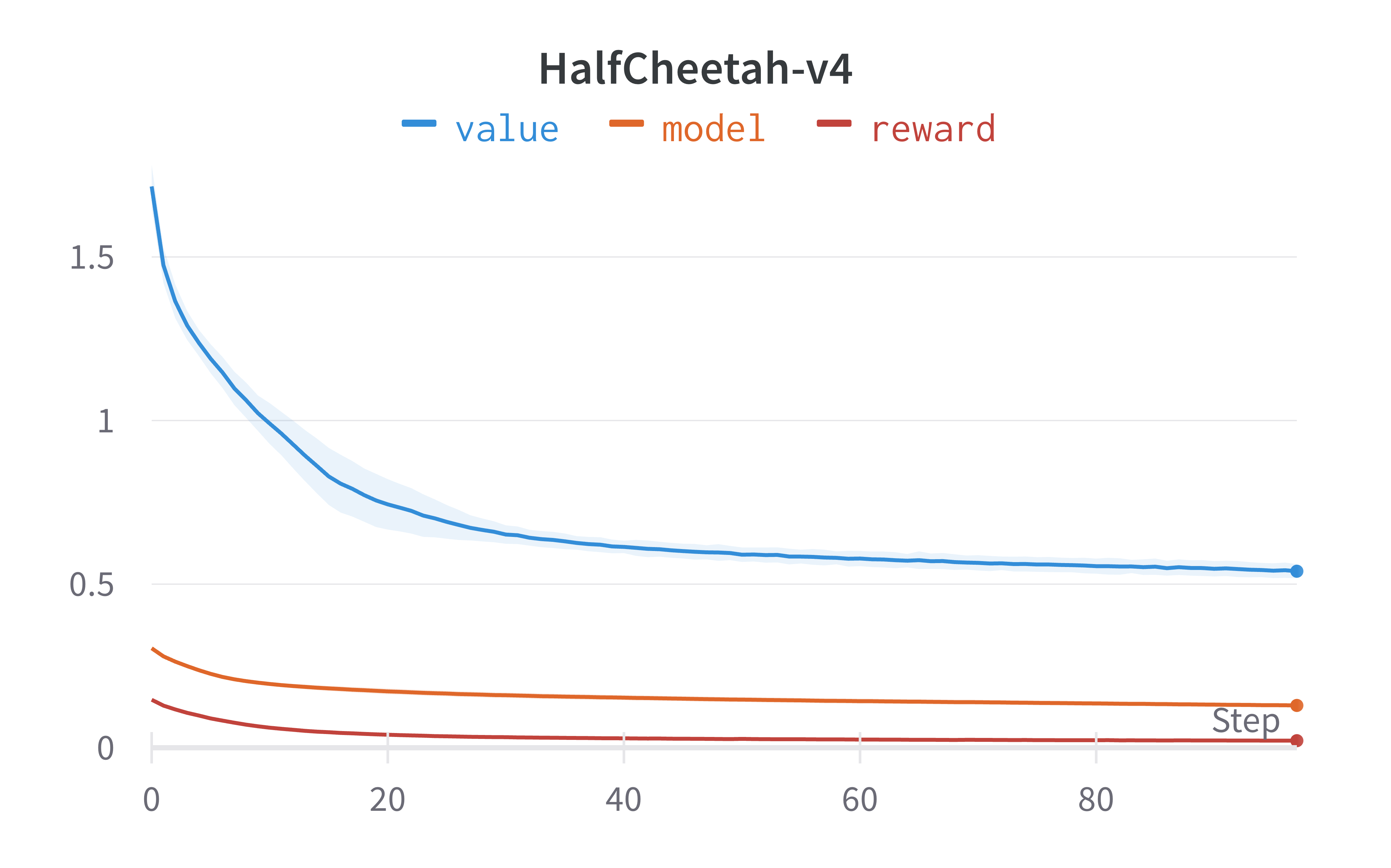}\quad
\includegraphics[width=.31\textwidth]{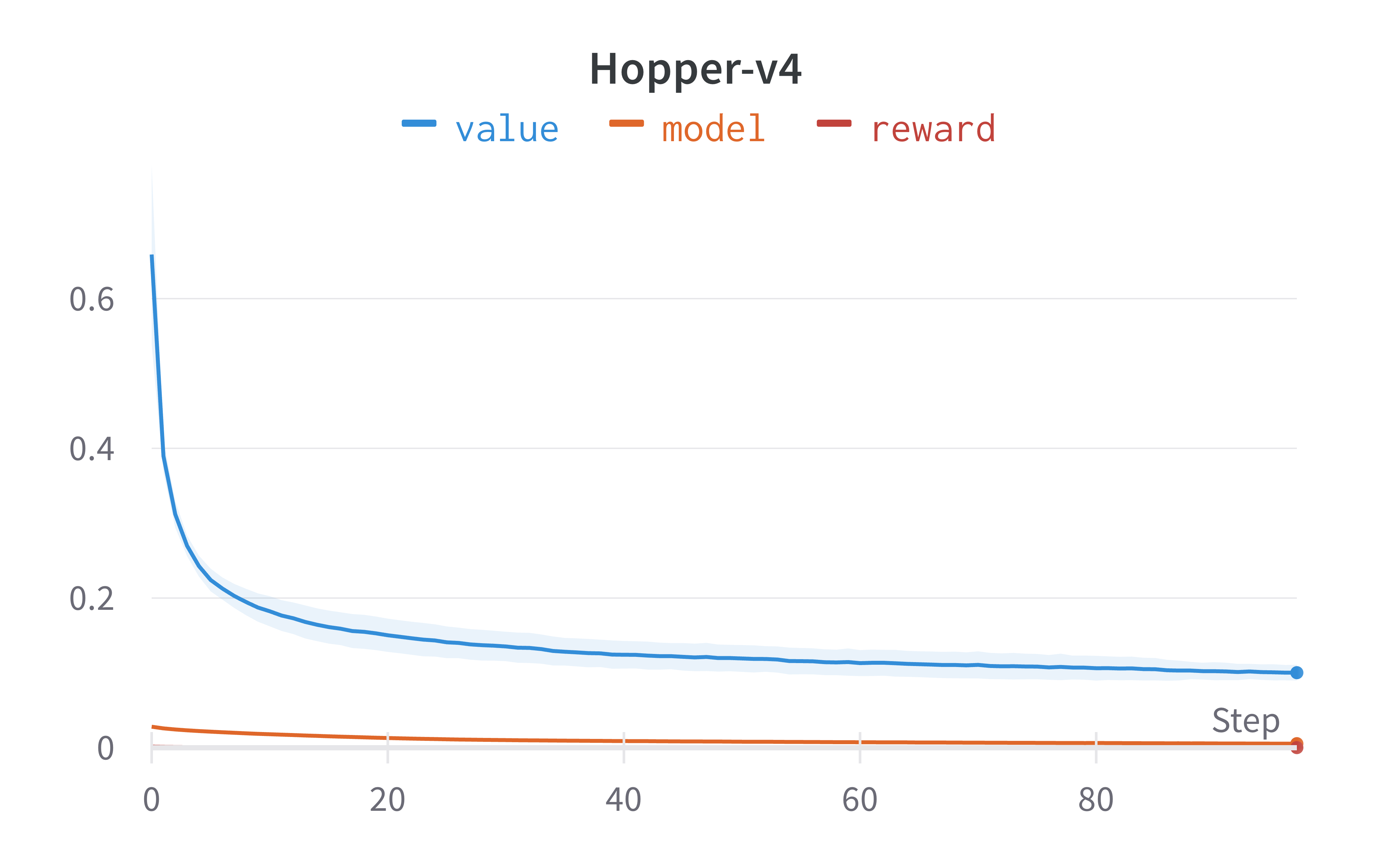}

\medskip

\includegraphics[width=.31\textwidth]{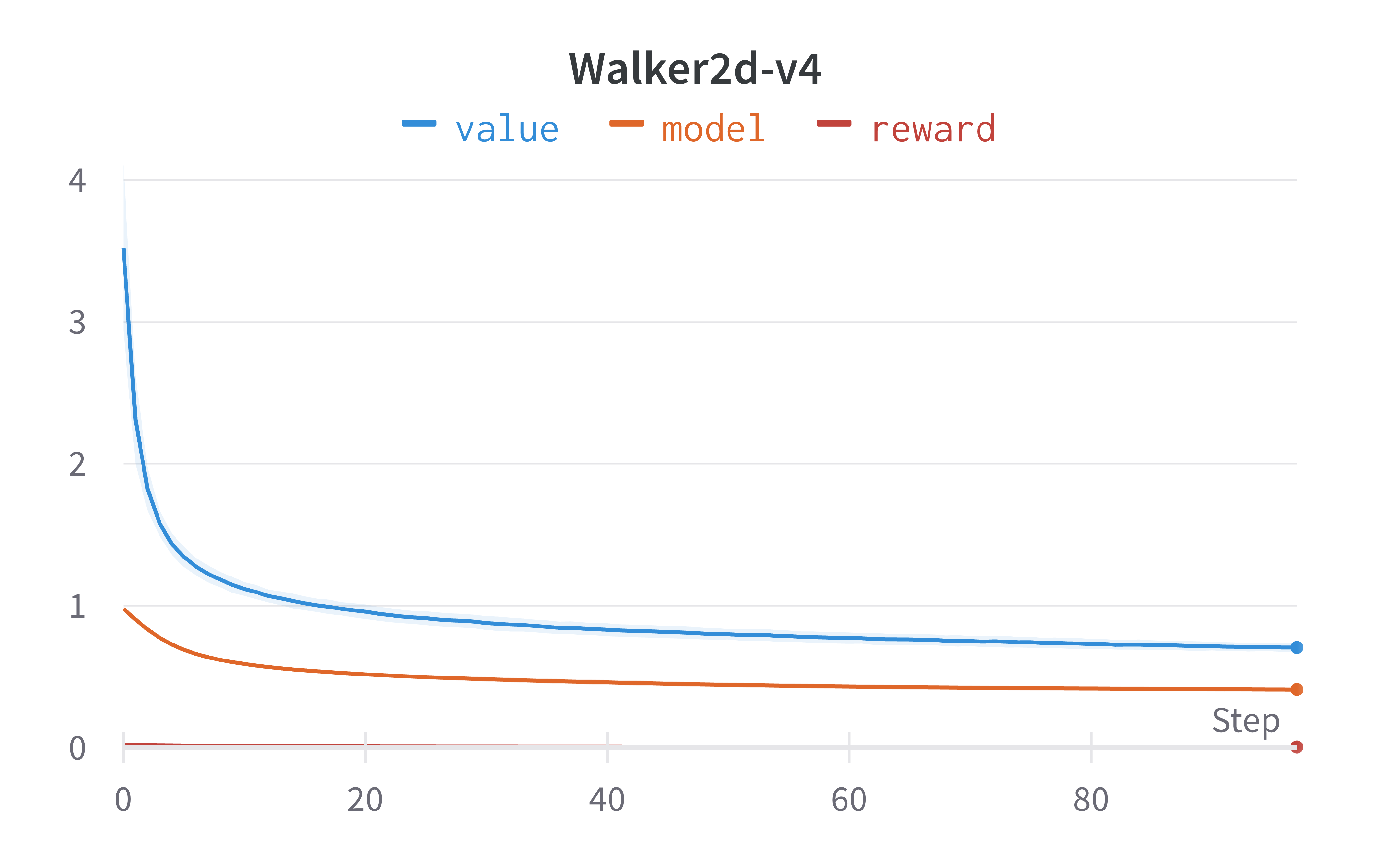}\quad
\includegraphics[width=.31\textwidth]{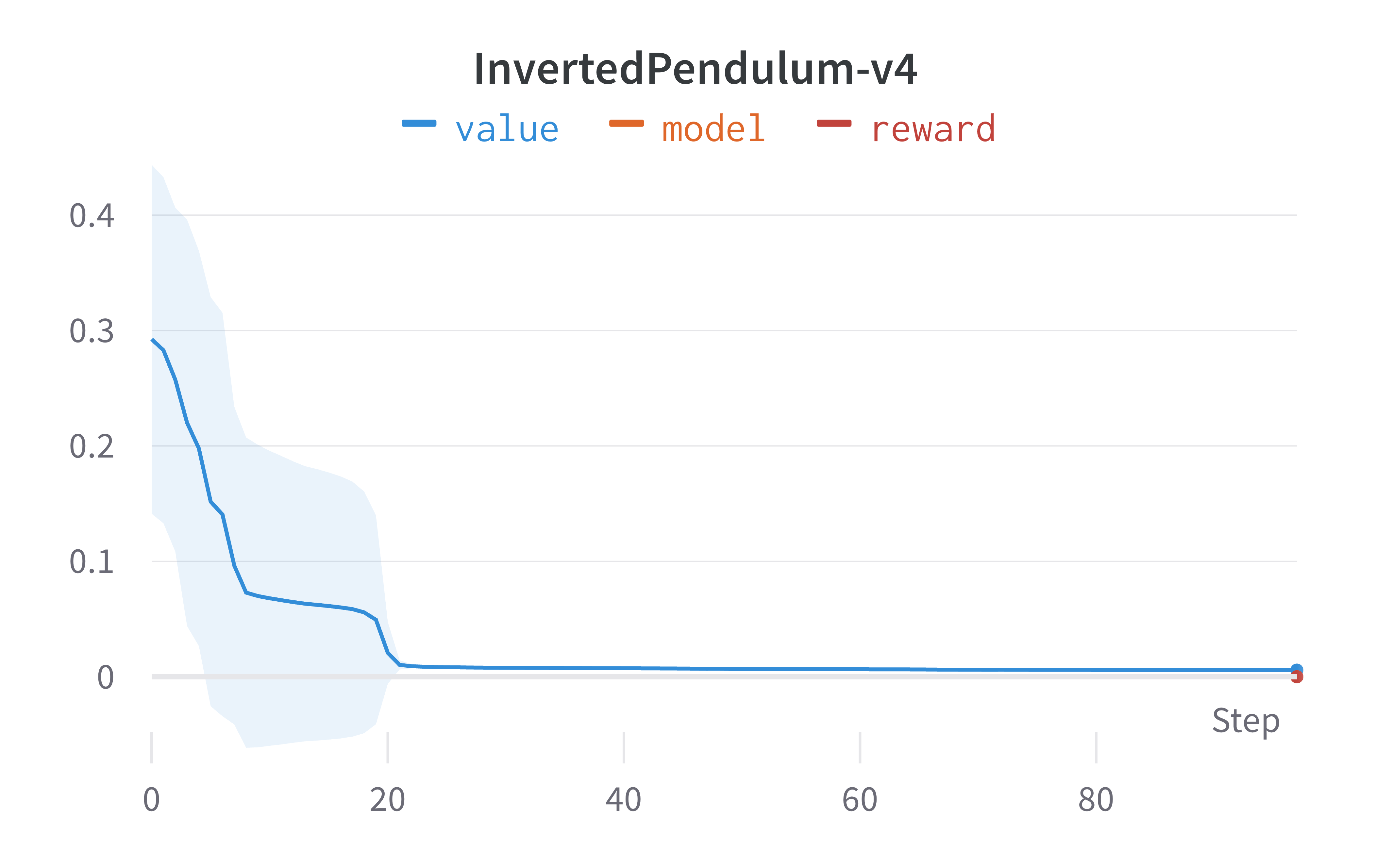}

\caption{Approximation errors of the optimal $Q$-functions, reward functions, and transition functions in MuJoCo environments. In each environment, we run $5$ independent experiments and report the mean and standard deviation of the approximation errors. All curves (as well as those in Figure~\ref{fig:sac-approx-errors-2-128}-\ref{fig:sac-approx-errors-mc}) are displayed with an exponential average smoothing with rate $0.2$.}
\label{fig:sac-approx-errors}
\end{figure}

For common MuJoCo Gym environments~\citep{gym}, 
including Ant-v4, Hopper-v4, HalfCheetah-v4, InvertedPendulum-v4, and Walker2d-v4
, we find these objectives by training $d$-depth, $m$-width neural networks to fit the corresponding values over the trajectories generated by an SAC-trained agent.~\footnote{Our code is available at \url{https://github.com/realgourmet/rep_complexity_rl}.}
In \Cref{fig:sac-approx-errors}, we visualize\footnote{We used Weights \& Biases \citep{biewald2020experiment} for experiment tracking and visualizations.} the approximation errors under $d=2, w=32$. Among the approximation objectives, the reward functions and the model transition functions are accessible, and we use Soft-Actor-Critic \citep{haarnoja2018soft} to learn the optimal $Q$-function. We consistently observe that the approximation errors of the optimal $Q$-function, in all environments, are much greater than the approximation errors of the transition and reward function. This finding concludes that in the above environments, the optimal $Q$-functions are more difficult to approximate than the transition functions and the reward function, which partly consolidate our hypothesis. We provide experimental details in \Cref{app:exp_detail} and additional experiments in \Cref{app:add_exp}.

\section{Conclusions}
\label{sec:conclusion}

In this paper, we find that in a broad class of Markov Decision Processes, the transition function and the reward function can be computed by constant-depth, polynomial-sized circuits, whereas the optimal $Q$-function requires an exponential size for constant-depth circuits to compute. This separation reveals a rigorous gap in the representation complexity of the $Q$-function, the reward, and the transition function. Our experiments further corroborate that this gap is prevalent in common real-world environments.

Our theory lays the foundation of studying the representation complexity in RL and raises several open questions:
\begin{enumerate}
    \item If we randomly sample an MDP, does the separation that the value function $\notin \AC^0$ and the reward and transition function $\in \AC^0$ occurs in high probability?
    \item For $k \geq 1$, are there broad classes of MDPs such that the value function $\notin \AC^k$ and the reward and transition function $\in \AC^k$?
    \item What are the typical circuit complexities of the value function, the reward function, and transition function?
\end{enumerate}

\ifdefined\isarxiv
\bibliography{ref}
\bibliographystyle{IEEE}
\else

\bibliography{references}
\bibliographystyle{iclr2024_conference}

\fi

\newpage
\appendix

\section{Supplementary Background on Circuit Complexity}

\begin{definition}[Majority]
For every $n \in \Z_+$, the \emph{$n$-variable majority function} $\MAJ_n: \{0,1\}^n \to \{0,1\}$ is defined by $\MAJ_n(x_1, \dots , x_n) = \mathbbm{1}(\sum_{i=1}^n x_i > n/2)$.
The \emph{majority function} $\MAJ: \{0,1\}^* \to \{0,1\}$ is defined by
\begin{align*}
    \MAJ(w) := \MAJ_{|w|}(w), ~ \forall w \in \{0,1\}^*.
\end{align*}
\end{definition}

\begin{proposition}[\cite{Razborov1987LowerBO,smolensky1987algebraic,smolensky1993representations}]
\label{thm:majority_ac}
$\MAJ \notin \AC^0$.
\end{proposition}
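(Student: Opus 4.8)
The plan is to avoid the heavy machinery that underlies \Cref{thm:majority_ac} (Håstad's switching lemma, or the Razborov--Smolensky polynomial method) and instead derive it as a corollary of the already-available fact $\PAR \notin \AC^0$ (\Cref{thm:parity_ac}), using an $\AC^0$ reduction from parity to majority. Concretely, I would argue the contrapositive: assuming $\MAJ \in \AC^0$, I will build a polynomial-size constant-depth circuit family computing $\PAR$, which contradicts \Cref{thm:parity_ac}.

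The key observation is that each exact threshold function can be realized by a single majority gate after padding the input with constants. Given $x = (x_1,\dots,x_n)$, write $\mathrm{Th}_k(x) = \mathbbm{1}(\sum_i x_i \ge k)$. Padding $x$ with $p$ ones and $q$ zeros produces a string $y$ of length $N = n+p+q$ with $\sum_i y_i = \sum_i x_i + p$, so $\MAJ_N(y) = 1 \iff \sum_i x_i > (n-p+q)/2$. Choosing nonnegative integers $p,q = O(n)$ with $n - p + q = 2k-1$ makes the right-hand threshold equal to $k - \tfrac12$, and since $\sum_i x_i$ is an integer this gives exactly $\MAJ_N(y) = \mathrm{Th}_k(x)$. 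Hence, under the assumption $\MAJ \in \AC^0$, every $\mathrm{Th}_k$ with $0 \le k \le n$ is computed by a constant-depth, polynomial-size circuit on $O(n)$ inputs (the constant padding bits are hard-wired).

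It then remains to assemble parity from these thresholds. Since the event $\sum_i x_i = k$ is expressed as $\mathrm{Th}_k(x) \wedge \neg\, \mathrm{Th}_{k+1}(x)$, and parity is the disjunction over odd counts,
\[
\PAR(x) = \bigvee_{\substack{0 \le k \le n \\ k \text{ odd}}} \bigl(\mathrm{Th}_k(x) \wedge \neg\, \mathrm{Th}_{k+1}(x)\bigr).
\]
The resulting circuit is an OR of $O(n)$ conjunctions of constant-depth polynomial-size subcircuits, with negations pushed to the inputs or folded into the threshold gadgets. This is constant depth (a fixed constant plus the depth of the $\MAJ$ circuits) and polynomial size, so $\PAR \in \AC^0$, contradicting \Cref{thm:parity_ac}. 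Therefore $\MAJ \notin \AC^0$.

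I expect the only delicate point to be the bookkeeping in the padding step: verifying that the half-integer threshold $k-\tfrac12$ correctly separates $\sum_i x_i \ge k$ from $\sum_i x_i \le k-1$ over integer sums, and that $p,q$ can always be chosen nonnegative and of size $O(n)$ for every $k$ (splitting into the cases $2k-1 \ge n$ and $2k-1 < n$). Everything else — closure of ``constant depth, polynomial size'' under the bounded composition above — is routine. If instead one wanted a self-contained argument not routing through parity, the alternative is the Razborov--Smolensky method: approximate any depth-$d$ size-$S$ circuit by a polynomial over $\mathbb{F}_3$ of degree $(\log S)^{O(d)}$ agreeing with it on a $(1-\varepsilon)$ fraction of inputs, then show $\MAJ$ admits no such low-degree approximator via a dimension-counting argument. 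That route is correct but substantially longer, which is why I favor the reduction.
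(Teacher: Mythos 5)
Your proof is correct, but it takes a genuinely different route from the paper. The paper never proves this proposition: it is imported as a black box, and the cited proofs (Razborov, Smolensky) use the algebraic approximation method --- every constant-depth, polynomial-size circuit, even one augmented with $\mathrm{MOD}_p$ gates, agrees with a low-degree polynomial over $\mathbb{F}_p$ on most inputs, whereas majority admits no such approximator. You instead derive the proposition from the other lower bound the paper already cites, $\PAR \notin \AC^0$ (\Cref{thm:parity_ac}), via the classical reduction showing that symmetric functions are easy given majority: padding with hard-wired constants turns $\MAJ$ into any exact threshold $\mathrm{Th}_k$ (your arithmetic checks out --- $n-p+q=2k-1$ gives the half-integer cutoff $k-\tfrac{1}{2}$, with $p,q \geq 0$ and $O(n)$ in both cases), and parity is the OR over odd $k$ of $\mathrm{Th}_k \wedge \neg\,\mathrm{Th}_{k+1}$; the composition is plainly constant-depth and polynomial-size. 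The only bookkeeping you gloss over is the boundary term $k=n$ (for odd $n$), which needs $\mathrm{Th}_{n+1} \equiv 0$; this is handled by the same padding with $p=0$, $q=n+1$, so it is not a gap. What each approach buys: your reduction is economical relative to the paper's toolkit, making one of its two cited hardness results a corollary of the other, so only the Furst--Saxe--Sipser bound must be taken on faith; the Razborov--Smolensky route the paper cites is independent of the parity bound and yields a strictly stronger conclusion ($\MAJ \notin \AC^0[p]$, i.e., hardness even against circuits with counting gates), which your reduction cannot recover, since parity itself lies in $\AC^0[2]$ and so no contradiction arises there.
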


We also introduce another function that can be represented by polynomial-size constant-depth circuits, contrary to the above two ``hard'' functions.

\begin{definition}[Addition]
For every $n \in \Z_+$, the \emph{length-$n$ integer addition function} $\ADD_n: \{0,1\}^n \times \{0,1\}^n \to \{0,1\}^{n+1}$ is defined as follows: for any two binary strings $a_1,a_2 \in \{0,1\}^n$, the value $\ADD_n(a_1,a_2)$ is the $(n+1)$-bits binary representation of $a_1 + a_2$.
The {integer addition function} $\ADD: \{0,1\}^* \to \{0,1\}^*$ is defined by
\begin{align*}
    \ADD(w_1, w_2) := \ADD_{\max\{|w_1|, |w_2|\}}(w_1, w_2), ~ \forall w_1, w_2 \in \{0,1\}^*.
\end{align*}
\end{definition}

\begin{proposition}[Proposition 1.15, \cite{vollmer1999introduction}]
\label{thm:add_ac}
$\ADD \in \AC^0$.
\end{proposition}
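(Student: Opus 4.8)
The plan is to realize integer addition by a carry-lookahead circuit, using the fact that each carry bit, although it depends globally on all lower-order input bits, admits a single constant-depth formula. Write the two $n$-bit inputs as $a = (a_1, \dots, a_n)$ and $b = (b_1, \dots, b_n)$ with index $1$ the least significant bit. For each position $i$ define the \emph{generate} bit $g_i = a_i \wedge b_i$ and the \emph{propagate} bit $p_i = a_i \vee b_i$, each a single gate. The carry $c_i$ into position $i$ (with $c_1 = 0$) obeys the recurrence $c_{i+1} = g_i \vee (p_i \wedge c_i)$, and unrolling it yields the closed form
\begin{align*}
    c_{i} = \bigvee_{j=1}^{i-1}\left( g_j \wedge \bigwedge_{k=j+1}^{i-1} p_k\right),
\end{align*}
which expresses ``a carry reaches position $i$'' as ``some lower position $j$ generates a carry that then propagates through every intermediate position''.

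First I would verify this closed form by induction on $i$: the base case $c_1 = 0$ is the empty disjunction, and the inductive step follows by substituting the formula for $c_i$ into $c_{i+1} = g_i \vee (p_i \wedge c_i)$ and distributing. Next I would read off the depth and size. Each $c_i$ is an OR (over at most $n$ terms) of ANDs (of at most $n$ literals drawn from the $g$'s and $p$'s), hence a constant-depth circuit over the single-gate generate/propagate signals, of size $O(n^2)$; computing all $n+1$ carries therefore costs constant depth and $O(n^3)$ gates. Finally, the $i$-th output bit is $s_i = a_i \oplus b_i \oplus c_i$ and the top output bit is $c_{n+1}$; since a three-input XOR has a constant-size constant-depth formula, producing all outputs from the carries adds only constant depth and $O(n)$ size.

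Composing the generate/propagate layer, the carry layer, and the sum layer gives a uniform circuit family of constant depth and polynomial size computing $\ADD_n$ for every $n$, which is exactly the requirement $\ADD \in \AC^0$. I do not anticipate a genuine obstacle here, as addition in $\AC^0$ is a textbook fact; the only point needing care is confirming that the depth and size bounds are genuinely independent of $n$ (in particular, that the carry formula is constant depth for \emph{every} position $i$, not merely for small $i$), which the closed form above makes transparent.
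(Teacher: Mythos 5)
Your proof is correct: the carry-lookahead closed form for the carries is valid (verified by induction via the recurrence $c_{i+1} = g_i \vee (p_i \wedge c_i)$), and it yields a constant-depth, polynomial-size circuit family for $\ADD_n$ uniformly in $n$. The paper does not actually prove this proposition—it cites it as Proposition 1.15 of \cite{vollmer1999introduction}—and your argument is precisely the standard carry-lookahead construction underlying that cited result, so there is nothing to reconcile.
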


\begin{definition}[Maximum]
For every $n \in \Z_+$, the \emph{length-$n$ maximum function} $\MAX_n: \{0,1\}^n \times \{0,1\}^n \to \{0,1\}^n$ is defined as follows: for any two binary strings $a_1,a_2 \in \{0,1\}^n$, the value $\MAX_n(a_1,a_2)$ is the $n$-bits binary representation of $\max \{ a_1, a_2 \}$.
The {maximum function} $\MAX: \{0,1\}^* \to \{0,1\}^*$ is defined by
\begin{align*}
    \MAX(w_1, w_2) := \MAX_{\max\{|w_1|, |w_2|\}}(w_1, w_2), ~ \forall w_1, w_2 \in \{0,1\}^*.
\end{align*}
\end{definition}

\begin{proposition}
\label{thm:max_ac}
$\MAX \in \AC^0$.
\end{proposition}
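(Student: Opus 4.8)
The plan is to build $\MAX$ from two constant-depth sub-circuits: a \emph{comparator} that outputs a single bit indicating which of the two inputs is larger, followed by a bitwise \emph{selector} (multiplexer) that copies the larger input to the output. Since $\AC^0$ is closed under composing a constant number of constant-depth, polynomial-size layers, it suffices to show each sub-circuit lies in $\AC^0$.

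Write the two inputs as $a_1 = (x_1,\dots,x_n)$ and $a_2 = (y_1,\dots,y_n)$, ordered from most to least significant bit. First I would form, for each position $j$, the bit-equality indicator
\[
e_j = (x_j \wedge y_j) \vee (\neg x_j \wedge \neg y_j),
\]
which is computed in constant depth. The key observation is the standard ``first differing position'' characterization: $a_1 > a_2$ holds exactly when there is an index $i$ at which $x_i = 1$, $y_i = 0$, and all more significant bits agree. Encoding this directly yields the comparator bit
\[
G = \bigvee_{i=1}^{n}\Big( x_i \wedge \neg y_i \wedge \bigwedge_{j=1}^{i-1} e_j \Big).
\]
Each conjunct is an unbounded fan-in AND over $O(n)$ inputs, the $e_j$ are constant depth, and the outer disjunction adds one more level, so $G$ has constant depth and size $O(n^2)$.

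Given $G$, the selector produces the $k$-th output bit as
\[
z_k = (G \wedge x_k) \vee (\neg G \wedge y_k),
\]
so the output equals $a_1$ when $a_1 > a_2$ and equals $a_2$ otherwise; on ties $G = 0$ while $a_1 = a_2$, so copying $a_2$ is correct. Each $z_k$ has constant depth, and there are $n$ of them, giving a full circuit of constant depth and polynomial size, hence $\MAX \in \AC^0$. This proposition is essentially folklore, so there is no substantial obstacle; the only point requiring care is confirming that the comparator stays constant depth despite the ``all more-significant bits agree'' clause, which is precisely where unbounded fan-in is used, since $\bigwedge_{j<i} e_j$ collapses into a single AND gate rather than a depth-$\Theta(\log n)$ tree. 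As an alternative one could derive $G$ from $\ADD \in \AC^0$ (\Cref{thm:add_ac}) by inspecting the top carry bit of $a_1 + \overline{a_2} + 1$, but the direct construction above is self-contained.
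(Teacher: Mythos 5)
Your proposal is correct and follows essentially the same route as the paper's proof: both compute the comparator bit $\mathbbm{1}(w_1 > w_2)$ via the first-differing-position disjunction (your $e_j$ is just the paper's $\neg((w_1)_j \oplus (w_2)_j)$ written out), and both then use the same bitwise multiplexer $(G \wedge x_k) \vee (\neg G \wedge y_k)$ to select the larger input. Your write-up is slightly more careful in spelling out the tie case and the depth accounting, but the construction is identical.
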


\begin{proof}[Proof of \Cref{thm:max_ac}]
    Fix any $n \in \Z_+$, and assume $w_1, w_2 \in \{0,1\}^n$. Let 
    \begin{align*}
        k = \bigvee_{i=1}^n \left((w_1)_i \wedge \neg (w_2)_i \wedge \left(\bigwedge_{j=1}^{i-1} \neg ( (w_1)_j \oplus (w_2)_j ) \right)\right).
    \end{align*}
    Therefore, $k = \mathbbm{1}(w_1 > w_2)$. Let the $i$-th bit of output be 
    \begin{align*}
        ((w_1)_i \wedge k) \vee ((w_2)_i \wedge \neg k),
    \end{align*}
    which is exact the $i$-th bit of $\max\{ w_1, w_2 \}$. This circuit has polynomial size and constant depth, which completes the proof.
\end{proof}

\section{Proofs of Main Results}

\subsection{Useful results}
We will use the following results frequently in the proofs.
\begin{claim}\label{cla:circuit_composite}
If $C_0, C_1,\dots,C_n$ are circuits with polynomial (in $n$) size and constant depth, where $C_i$ has $k_i$ outputs for each $i \in [n]$ and $C_0$ has $\sum_{i=1}^n k_i$ inputs, then the circuit
\begin{align*}
    C_0\left(C_1(x_1),\dots,C_n(x_n)\right)
\end{align*}
also has polynomial (in $n$) size and constant depth.
\end{claim}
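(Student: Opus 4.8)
The plan is to treat the composite as the directed acyclic graph obtained by wiring, for each $i \in [n]$, the $k_i$ output sinks of $C_i$ into the matching $k_i$ input sources of $C_0$ (the $\sum_{i=1}^n k_i$ sources of $C_0$ are consumed in order), while the external inputs $x_1,\dots,x_n$ feed $C_1,\dots,C_n$ respectively. After this identification the former sources of $C_0$ become internal gates, and the sinks of the whole graph are exactly the sinks of $C_0$. I would then bound its size and depth separately, using that composition is ``cheap'' in both measures.

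For size, each source of $C_0$ is merged with an already-counted output gate of some $C_i$, so the number of vertices of the composite is at most $\sum_{i=0}^n \operatorname{size}(C_i)$. By hypothesis there is a single polynomial $p$ with $\operatorname{size}(C_i) \le p(n)$ for every $i \in \{0,1,\dots,n\}$, whence the composite has at most $(n+1)\,p(n)$ vertices, which is polynomial in $n$. For depth, I would observe that edges flow only from the $C_i$-layer into $C_0$ and never back, so any directed path splits into an initial segment lying inside a single $C_i$ followed by a segment lying inside $C_0$; its length is therefore at most $\max_{1\le i\le n}\operatorname{depth}(C_i) + \operatorname{depth}(C_0)$. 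Since each of these depths is a constant independent of $n$, their sum is again a constant.

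The point that needs care — rather than a genuine obstacle — is uniformity in $i$: I must fix one polynomial $p$ and one depth constant that simultaneously dominate all of $C_0,\dots,C_n$, so that summing over the $n+1$ parallel subcircuits preserves polynomial size and adding the two layer-depths preserves constant depth. This is precisely the statement that licenses the repeated gadget-substitution arguments used earlier (for instance, assembling $C_{\model}$ out of XOR and $\delta_k$ gadgets), so establishing it at this level of generality suffices for all later invocations.
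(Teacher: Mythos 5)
Your proof is correct. The paper itself states this claim without proof --- it is listed under ``Useful results'' and treated as a folklore fact about composing constant-depth circuits --- so there is no authorial argument to compare against; your DAG-wiring construction, with size bounded by $\sum_{i=0}^n \operatorname{size}(C_i) \le (n+1)\,p(n)$ and depth bounded by $\max_{1\le i\le n}\operatorname{depth}(C_i) + \operatorname{depth}(C_0)$, is precisely the standard justification the authors are implicitly relying on. Your uniformity caveat is also well taken: as stated, the claim needs a single polynomial and a single depth constant dominating all $n+1$ circuits (otherwise, e.g., $\operatorname{size}(C_i) = n^i$ would break polynomiality of the sum), and this uniform bound does hold in every invocation in the paper, where the subcircuits $C_1,\dots,C_n$ are copies of finitely many fixed gadgets such as the XOR gate and the $\delta_k$ gates.
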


\begin{claim}\label{cla:xor}
The XOR gate can be computed by a circuit with constant size.
\end{claim}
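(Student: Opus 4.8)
The plan is to exhibit an explicit Boolean circuit that computes the two-input XOR function using only $\wedge$, $\vee$, and $\neg$ gates, and then simply to count its vertices and observe that this count is a fixed constant independent of any problem parameter. The natural candidate is the standard disjunctive-normal-form representation
\begin{align*}
    a \oplus b = (a \wedge \neg b) \vee (\neg a \wedge b),
\end{align*}
which places two $\neg$ gates, two $\wedge$ gates, and one $\vee$ sink on top of the two input sources $a$ and $b$.

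First I would verify correctness by checking the formula against the definition $a \oplus b = (a+b) \bmod 2$ on all four assignments $(a,b) \in \{0,1\}^2$: the right-hand side evaluates to $1$ precisely when exactly one of $a,b$ equals $1$, which is exactly XOR. Second, I would read off the size and depth directly from the associated directed acyclic graph. There are two sources, two negation gates, two conjunction gates, and one disjunction gate, so the circuit has $7$ vertices and depth $3$. Since neither number depends on any ambient input length, the XOR gate is computable by a circuit of constant size, as claimed.

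There is essentially no technical obstacle here; the only point worth emphasizing is the reading of ``constant size'' as a fixed bound independent of the parameter $n$ used elsewhere in the MDP constructions. This is precisely what makes the gadget useful: substituting this $7$-vertex circuit for each occurrence of the symbol $\oplus$ in $C_{\model}$ inflates the total size by at most a constant factor and the total depth by at most an additive constant. Together with \Cref{cla:circuit_composite}, this yields that the composite transition circuit retains polynomial size and constant depth.
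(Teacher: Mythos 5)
Your proposal is correct and takes essentially the same approach the paper intends: the paper states this claim without a dedicated proof, and its only justification is the inline remark in the warm-up section exhibiting a constant-size formula for $\oplus$ over $\wedge,\vee,\neg$, exactly as you do. One point in your favor: the paper's inline identity $a \oplus b = (\neg a \vee b) \wedge (a \vee \neg b)$ actually computes XNOR (it evaluates to $1$ when $a = b$), so your DNF $(a \wedge \neg b) \vee (\neg a \wedge b)$ is the corrected form; either way, the constant-size, constant-depth conclusion stands.
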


\begin{claim}\label{cla:ind}
For any integer $k = \poly(n)$, the gate $\delta(x_1,\dots,x_n) = \mathbbm{1}\left(k = \sum_{i=1}^n 2^{n-i}\cdot x_i \right)$ can be computed by a circuit with polynomial size and constant depth.
\end{claim}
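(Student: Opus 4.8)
The plan is to exploit the fact that $k$ is a \emph{fixed} constant, so its binary digits can be hard-wired into the circuit rather than computed at runtime. First I would note that with $n$ inputs the quantity $\sum_{i=1}^n 2^{n-i} x_i$ ranges over $\{0,1,\dots,2^n-1\}$. Since $k = \poly(n)$, we have $k < 2^n$ for all sufficiently large $n$, so $k$ admits a unique $n$-bit representation $k = \sum_{i=1}^n 2^{n-i} k_i$ with each $k_i \in \{0,1\}$ known in advance. (For the finitely many small $n$ with $k \geq 2^n$, no input can represent $k$, so $\delta \equiv 0$ is computed by a trivial constant circuit.)

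The key reduction is that two $n$-bit numbers agree if and only if they agree bit-by-bit: the equality $\sum_{i=1}^n 2^{n-i} x_i = k$ holds precisely when $x_i = k_i$ for every $i \in [n]$. This converts the arithmetic test into a conjunction of single-bit equality tests. For each coordinate I would set the literal $\ell_i := x_i$ when $k_i = 1$ and $\ell_i := \neg x_i$ when $k_i = 0$, so that $\ell_i$ evaluates to $1$ exactly when $x_i = k_i$. Then the gate is realized by the single conjunction
\begin{align*}
    \delta(x_1,\dots,x_n) = \bigwedge_{i=1}^n \ell_i.
\end{align*}

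Finally I would bound the resources: the circuit uses at most $n$ NOT gates (one per coordinate with $k_i = 0$) feeding a single unbounded-fan-in AND gate, giving size $O(n)$ and depth $2$, comfortably within polynomial size and constant depth. I do not anticipate a genuine obstacle here, since the construction is explicit once the digits $k_i$ are fixed; the only point requiring a word of care is the boundary case $k \geq 2^n$, where the gate collapses to the constant $0$ and the stated bound holds trivially.
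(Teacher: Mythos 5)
Your proposal is correct and takes essentially the same approach as the paper: hard-wire the binary digits of $k$ and realize $\delta$ as a single unbounded-fan-in AND of the literals $\ell_i$ testing $x_i = k_i$, giving size $O(n)$ and depth $2$. In fact, your conjunction $\bigwedge_{i=1}^n \ell_i$ is the corrected form of the paper's inline expression $(x_1 \vee k_1) \vee \cdots \vee (x_b \vee k_b)$, which as written is a typo (it would be constantly $1$ whenever some $k_i = 1$).
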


\subsection{Proof of Theorem~\ref{thm:majority_mdp_uncondition}}\label{sec:majority_unconditioned_proof}

We need the following lemma.
\begin{lemma}[Property of value function in unconditioned Majority MDPs]\label{lem:qfunc_majority_mdp_uncondition}
In an unconditioned $n$-bits majority MDP with reward state $s_\rew$ and control function $f$, the value function $V^*$ (at time step $t=0$) over $\{s \in \{0,1\}^{b+n}: s[c] = \0_b \}$ is given by the following:
\begin{align*}
    V^*(s) = n - \sum_{i=1}^n (\neg (s_{b+i} \oplus s_{\rew,i})) + 1.
\end{align*}
\end{lemma}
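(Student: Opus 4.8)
The plan is to exploit the fact that the MDP is deterministic with nonnegative rewards, so that $V^*(s)$ is exactly the largest possible number of time steps at which a trajectory starting from $s$ sits in the unique rewarding state $(\1_b, s_\rew)$. Since $\1_b$ is a fixed point of $f$ and the action $a=0$ leaves the representation bits untouched, once a trajectory reaches $(\1_b, s_\rew)$ the agent can idle there (repeatedly playing $a=0$) and harvest reward at every remaining step. Thus the whole computation collapses to finding the minimal number of steps $\tau^*(s)$ needed to steer from $s = (\0_b, s[r])$ into $(\1_b, s_\rew)$, after which $V^*(s) = H - \tau^*(s)$.

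First I would spell out the two admissible moves in the unconditioned MDP ($C \equiv 1$): $a=0$ advances the control bits by one application of $f$ and freezes the representation, whereas $a=1$ freezes the control and flips precisely the representation bit whose index equals the integer value of the current control bits. By the control-function property, starting from $\0_b$ the iterates $f^{(1)}(\0_b),\dots,f^{(n)}(\0_b)$ enumerate every nonzero string of $\{0,1\}^b$ exactly once and land at $\1_b$ at step $n = 2^b-1$ (if $\1_b$ appeared earlier it would stick, violating distinctness). Consequently each control value $i \in \{1,\dots,n\}$, hence each flippable representation index, occurs exactly once along any run, and $\1_b$ cannot be reached in fewer than $n$ advances. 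Because we begin at $\0_b$, the entire cycle lies ahead and every representation bit is still reachable for flipping; this is precisely why the lemma restricts attention to $s[c] = \0_b$.

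Next I would pin down $\tau^*$. Turning $s[r]$ into $s_\rew$ forces exactly one flip at each index where the two disagree and none where they agree, and each such flip can be scheduled at the unique moment the control value equals that index, interleaved with the $n$ advancing moves; this gives an admissible schedule of length $\tau^*(s) = n + D$, where $D = \sum_{i=1}^n (s_{b+i} \oplus s_{\rew,i})$ is the Hamming distance between $s[r]$ and $s_\rew$. The same count is a lower bound: each disagreeing index needs an odd, hence positive, number of flips, only one flip per index is ever available, and the $n$ advances are unavoidable. Substituting $\tau^*(s) = n + D$ into $V^*(s) = H - \tau^*(s)$ with $H = 2^b + n$ and rewriting the Hamming count through the identity $\neg(x \oplus y) = 1 - (x \oplus y)$ yields the claimed affine closed form in terms of $\sum_{i=1}^n \neg(s_{b+i} \oplus s_{\rew,i})$.

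The hard part will be the optimality argument in the middle step: I must establish both feasibility (that some interleaving of advances and flips actually reaches the target within the horizon) and the matching lower bound (that no shorter schedule exists), while handling the degenerate control value $i=0$ at the start, where $a=1$ flips no valid index so the agent should merely advance, and verifying that the residual horizon $H - \tau^*(s) \ge 1$ so that at least one unit of reward is always collected. The determinism reduction in the first step and the final algebraic simplification are then routine.
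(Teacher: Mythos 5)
Your approach is the same as the paper's: both arguments reduce $V^*(s)$ to the horizon minus the minimal number of steps needed to reach the absorbing rewarding state $(\1_b, s_\rew)$, show that the control bits need exactly $2^b-1$ advances (with the identical ``if $\1_b$ appeared earlier it would stick, contradicting distinctness'' argument), show that the representation bits need exactly one flip per disagreeing index (schedulable because the control orbit passes through every index exactly once), and then count the leftover steps during which the agent idles at the reward state. Your dynamics analysis --- $\tau^*(s) = (2^b-1) + D$ with $D = \sum_{i=1}^n (s_{b+i}\oplus s_{\rew,i})$ the Hamming distance, hence $V^*(s) = H - \tau^*(s) = n+1-D$ --- is correct, and you handle the feasibility direction, the lower bound, and the degenerate control value $i=0$ at least as carefully as the paper does.

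The gap is in your final sentence. The identity $\neg(x\oplus y) = 1-(x\oplus y)$ turns your $n+1-D$ into $1 + \sum_{i=1}^n \neg(s_{b+i}\oplus s_{\rew,i})$, \emph{not} into the lemma's stated $n + 1 - \sum_{i=1}^n \neg(s_{b+i}\oplus s_{\rew,i})$; under your own (standard XOR) convention the stated right-hand side evaluates to $D+1$, and since $n = 2^b-1$ is odd, $D+1 \neq n+1-D$ for every state. So the rewriting you assert does not go through; carried out honestly, your derivation shows the formula as literally written (with $\oplus$ read as XOR) is the complement of the true value. What is actually happening is that the lemma and the paper's own proof use $\neg(s_{b+i}\oplus s_{\rew,i})$ as the \emph{disagreement} indicator: the paper asserts the flip count is $\sum_i \neg(s_{b+i}\oplus s_{\rew,i})$ ``because each index $i$ such that $s_{b+i}\neq s_{\rew,i}$ needs to be flipped,'' and the warm-up section even defines $a\oplus b = (\neg a \vee b)\wedge(a\vee \neg b)$, which is the equivalence (XNOR) rather than XOR. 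Under that reading your $D$ \emph{is} $\sum_i \neg(s_{b+i}\oplus s_{\rew,i})$ and the proof closes with no rewriting at all. You need to either adopt that convention explicitly or keep XOR and state the corrected formula $V^*(s) = n+1-\sum_i (s_{b+i}\oplus s_{\rew,i})$; claiming that the algebraic identity ``yields the claimed form'' is a sign error that silently papers over this mismatch.
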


\begin{proof}
To find $V^*(s)$, it suffices to count the number of time steps an optimal agent takes to reach the reward state $(\1_b, s_{\rew})$.

For the control bits to traverse from $\0_b$ to $\1_b$, it takes $2^b-1$ time steps. Indeed, by Definition~\ref{def:control_func}, $f^{(i)}(\0_b) \neq \1_b$ for any $i<2^b-1$ and $f^{(2^b-1)}(\0_b) = \1_b$ (otherwise, $\left|\left\{f^{(k)}(\0_b): k = 1,2, \dots, 2^b-1\right\}\right| < 2^b-1$ and as a result $\left\{f^{(k)}(\0_b): k = 1,2, \dots, 2^b-1\right\}$ can not traverse $\{0,1\}^b \backslash \{\0_b\}$). Thus, this corresponds to at least $2^b-1$ time steps in which action $a = 0$ is played. 

For the representation bits, starting from $s[r]$, reaching $s_{\rew}$ takes at least
\begin{align*}
    \sum_{i=1}^n (\neg (s_{b+i} \oplus s_{\rew,i}))
\end{align*}
number of flipping, because each index $i$ such that $s_{b+i} \neq s_{\rew,i}$ needs to be flipped. This corresponds to at least $\sum_{i=1}^n (\neg (s_{b+i} \oplus s_{\rew,i}))$ time steps in which action $a = 1$ is played. 

In total, the agent needs to take at least $2^b + \sum_{i=1}^n (\neg (s_{b+i} \oplus s_{\rew,i})) -1$ times steps before it can receive positive rewards. Since there are $2^b+n$ time steps in total, it follows that the agent gets a positive reward in at most $n - \sum_{i=1}^n (\neg (s_{b+i} \oplus s_{\rew,i})) + 1$ time steps. As a consequence,
\begin{align*}
    V^*(s) \leq n - \sum_{i=1}^n (\neg (s_{b+i} \oplus s_{\rew,i})) + 1
\end{align*}
On the other hand, consider the following policy:
\begin{align*}
    \pi^*(s) = \begin{cases}
        1, &~ \text{if}~s_{b+i} \neq s_{\rew,i},~\text{where}~i = \sum_{j=1}^b 2^{b-j}\cdot s_j\\
        0, &~ \text{otherwise}.
    \end{cases}
\end{align*}
This policy reaches the reward state in $2^b + \sum_{i=1}^n (\neg (s_{b+i} \oplus s_{\rew,i})) -1$ time steps. Indeed, the control bits takes $2^b-1$ times steps to reach $\1_b$. During these time steps, the control bits (as a binary number) traverses $0,\dots,2^b-1$. Thus for any $i \in [n]$ such that $s_{b+i} \neq s_{\rew,i}$, there exists a time $t$ such that the control bits at this time $t$ (as a binary number) equals $i$, and at this time step, the agent flipped the $i$-th coordinate of the representation bits by playing action $1$. Therefore, the representation bits takes $\sum_{i=1}^n (\neg (s_{b+i} \oplus s_{\rew,i}))$ times steps to reach $s_{\rew}$. Combining, the agent arrives the state $(\0_b,s_{\rew})$ after $2^b + \sum_{i=1}^n (\neg (s_{b+i} \oplus s_{\rew,i})) -1$ time steps, and then collects reward $1$ in each of the remaining $n - \sum_{i=1}^n (\neg (s_{b+i} \oplus s_{\rew,i})) + 1$ time steps, resulting in a value of 
\begin{align*}
    V^*(s) = n - \sum_{i=1}^n (\neg (s_{b+i} \oplus s_{\rew,i})) + 1.
\end{align*}
\end{proof}

\begin{lemma}\label{lem:control_func}
Any control function can be computed by a constant depth, polynomial sized (in $n$) circuit.
\end{lemma}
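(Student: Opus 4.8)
The plan is to exploit the fact that, although a control function $f$ looks arbitrary, its domain is only polynomially large. Recall that $f$ acts on $\{0,1\}^b$ with $b = \lceil\log(n+1)\rceil$, and since we assume $n = 2^b - 1$ we have $2^b = n+1 = \poly(n)$ and $b = O(\log n)$. Hence the full truth table of $f$ has only $n+1$ rows and $b$ output columns, and the defining traversal property of a control function turns out to be irrelevant here: \emph{any} map on $b = O(\log n)$ input bits admits a constant-depth, polynomial-size circuit, purely because its domain is polynomially small.

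Concretely, I would build the circuit one output coordinate at a time using the canonical disjunctive normal form. Fix $j \in [b]$ and set $S_j = \{z \in \{0,1\}^b : f(z)_j = 1\}$. Then
\begin{align*}
    f(x)_j = \bigvee_{z \in S_j}\ \bigwedge_{i=1}^b \ell_i^z(x), \qquad \ell_i^z(x) = \begin{cases} x_i, & z_i = 1,\\ \neg x_i, & z_i = 0, \end{cases}
\end{align*}
where the inner conjunction is the minterm that equals $1$ exactly when $x = z$, so the disjunction fires precisely when $f(x)_j = 1$. Each minterm is one AND of $b$ literals (constant depth, size $O(b)$), and there are at most $|S_j| \le 2^b = n+1$ of them.

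Stacking the $b$ coordinate circuits in parallel then yields a single constant-depth circuit (a disjunction of minterms with unbounded fan-in, negations pushed to the inputs) of total size at most $b \cdot 2^b \cdot O(b) = O(n \log^2 n) = \poly(n)$; \Cref{cla:circuit_composite} can be invoked to make the composition formal. I do not expect any substantive obstacle in this argument. The only point requiring care is the dimension bookkeeping — checking that the $2^b$-fold disjunction remains polynomial in $n$ — which is exactly where $b = O(\log n)$ enters. Notably, the control-function hypotheses are consumed elsewhere (for instance in \Cref{lem:qfunc_majority_mdp_uncondition}), not in this representational claim.
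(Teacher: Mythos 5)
Your proposal is correct and takes essentially the same approach as the paper: both exploit that $b = O(\log n)$ makes the domain $\{0,1\}^b$ polynomially small, so a brute-force normal form (your explicit minterm DNF per output coordinate, versus the paper's citation of the CNF representation from \citet{arora2009computational}) gives a depth-$2$, $\poly(n)$-size circuit. The cosmetic difference of DNF versus CNF, and explicit construction versus citation, does not change the argument.
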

\begin{proof}
By Claim 2.13 in \citet{arora2009computational}, any Boolean function can be computed by a CNF formula, i.e., Boolean circuits of the form
\begin{align*}
    \bigwedge_{i=1}^n \left(\bigvee_{j=1}^{k_i} X_{i,j}\right).
\end{align*}
As result, the control function can be computed by a depth-$2$, $2^b = \poly(n)$ sized circuit.
\end{proof}

Now we return to the proof of Theorem~\ref{thm:majority_mdp_uncondition}.

\begin{proof}
First, we show that the model transition function can be computed by circuits with polynomial size and constant depth. Consider the following circuit:
\begin{align*}
    C_{\model}(s,a) = \left(g(s_1,\dots,s_b, a), \left(s_{b+k} \oplus \left(\delta_k(s_1,\dots,s_b) \wedge a\right)\right)_{k=1}^n\right)
\end{align*}
where $\oplus$ is the XOR gate, $g: \{0,1\}^{b+1} \to \{0,1\}^b$ such that $g(x_1,\dots,x_b,a)_i = (f(x_1,\dots,x_b)_i \wedge \neg a) \vee (x_i \wedge a)$ for all $i \in [b]$, and $\delta_k(x_1,\dots,x_b) = \mathbbm{1}(k = \sum_{j=1}^b 2^{b-j}\cdot x_j)$. We can verify that $C_{\model} = \transit$. By Claim~\ref{cla:xor}, Claim~\ref{cla:ind}, and Lemma~\ref{lem:control_func}, the XOR gate, the control function $f$, and the gate $\delta_k$ can all be implemented by binary circuits with polynomial size and constant depth. As a result of Claim~\ref{cla:circuit_composite}, $C_{\model}$ also has polynomial size and constant depth. 

Now, the reward function can be computed by the following simple circuit:
\begin{align*}
    C_{\rew}(s) = s_1 \wedge \dots \wedge s_b \wedge(\neg ( s_{b+1} \oplus s_{\rew,1}) ) \wedge  \cdots \wedge (\neg (s_{b+n} \oplus s_{\rew,n})).
\end{align*}
Finally, we show that the value function can not be computed by a circuit with constant depth and polynomial size. By Lemma~\ref{lem:qfunc_majority_mdp_uncondition}, we have
\begin{align*}
    n + 1 - V^*(s) = \sum_{i=1}^n (\neg (s_{b+i} \oplus s_{\rew,i}))  \in \{0\} \cup [n] =  \{0\} \cup [2^b - 1],  
\end{align*}
which can be represented in binary form with $b$ bits. Therefore, the first (most significant) bit of $n+1-V^*(s)$ is the majority function of $\left(\neg (s_{b+i} \oplus s_{\rew,i})\right)_{i=1}^n$. 
If there exists a circuit $C_{\qfunc} = V^*$ with polynomial size and constant depth, then the circuit defined by
\begin{align*}
    C_{\MAJ}(x_1,\dots,x_n) = n+1-C_{\qfunc}\left(\0_b,\neg x_1 \oplus s_{\rew,1},\dots,\neg x_n \oplus s_{\rew,n}\right)
\end{align*}
outputs the majority function in the first bit. By Proposition~\ref{thm:add_ac}, $C_{\MAJ}$ also has polynomial size and constant depth, which contradicts Proposition~\ref{thm:majority_mdp_condition}. This means that $V^*$ can not be computed by circuits with polynomial size and constant depth. 
Notice that $V^*(s) = \max_{a \in \mathcal{A}} Q^*(s,a) = \max\{ Q^*(s, 0), Q^*(s, 1) \}$. By \Cref{thm:max_ac} and \Cref{cla:circuit_composite},  we conclude that the optimal $Q$-function can not be computed by circuits with polynomial size and constant depth.
\end{proof}

\subsection{Proof of Theorem~\ref{thm:majority_mdp_condition}}\label{sec:majority_proof}

We need the following lemmata.
\begin{lemma}\label{lem:condition_satisfied}
With probability at least $1-e^{-\Omega(m)}$, there exists a set $A \subset [n]$ such that $|A| \geq n/2$ and $C(x) = 1$ holds for any binary string $x \in \{0,1\}^n$ satisfying $x_{i} = s_{\rew,i}, \forall i \notin A$. 
\end{lemma}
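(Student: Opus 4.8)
The plan is to reduce the statement to the single event that the random DNF $C$ has at least one conjunct that is satisfied by the reward state $s_\rew$, viewed as a Boolean assignment, and then to read off the set $A$ directly from that conjunct. Write $C = \bigvee_i T_i$ with $T_i = \bigwedge_j X_{i,j}$ a conjunction of at most $k = O(1)$ literals. Suppose some clause $T_{i_0}$ satisfies $T_{i_0}(s_\rew) = 1$, and let $S \subseteq [n]$ be the set of variables occurring in $T_{i_0}$, so that $|S| \le k$. Set $A = [n] \setminus S$; then $|A| \ge n - k \ge n/2$ whenever $n \ge 2k$, which holds for large $n$ since $k$ is a fixed constant. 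For any $x \in \{0,1\}^n$ with $x_i = s_{\rew,i}$ for all $i \notin A$ (equivalently, for all $i \in S$), the string $x$ agrees with $s_\rew$ on every variable read by $T_{i_0}$, hence $T_{i_0}(x) = T_{i_0}(s_\rew) = 1$ and thus $C(x) = 1$. So on the event that $C$ has a clause satisfied by $s_\rew$, the desired $A$ exists.

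It then remains to show that the probability that no clause of $C$ is satisfied by $s_\rew$ is at most $e^{-\Omega(m)}$. A single clause is a conjunction of $\ell \le k$ literals on distinct variables whose signs are i.i.d.\ uniform; for the fixed assignment $s_\rew$ each literal holds with probability $1/2$, independently, so $\Pr[T_i(s_\rew) = 1] = 2^{-\ell} \ge 2^{-k} =: p > 0$, a positive constant because $k = O(1)$. Since each clause uses at least one and at most $k$ literals out of a total literal budget of order $m$, the number of clauses is $N = \Omega(m)$. Treating the clause-satisfaction indicators as independent (they are governed by disjoint blocks of i.i.d.\ sign bits), we obtain
\begin{align*}
\Pr[\,C(s_\rew) = 0\,] = \prod_{i=1}^{N}\Pr[\,T_i(s_\rew) = 0\,] \le (1-p)^{N} = e^{-\Omega(m)},
\end{align*}
which, combined with the reduction above, proves the lemma.

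The main obstacle is making the last display rigorous under the precise meaning of ``uniform over $(O(1),m)$-DNFs,'' since there the clauses may share variables and the clause count $N$ is itself random. I would address this on two fronts. To control $N$, I note that the at-most-$k$ literals per clause force $N \ge (\text{total literals})/k = \Omega(m)$, or I appeal to the model fixing $\Theta(m/k)$ clauses. To dispense with full independence, I would observe that each literal carries its own independent uniform sign bit, so that conditioned on the (possibly correlated) choice of which variables each clause reads, the indicators $\{T_i(s_\rew)=1\}$ depend on disjoint sign bits and are therefore conditionally independent—already enough to run the product bound, with degenerate clauses (repeated or conflicting variables) only lowering the effective count and handled by restricting to the constant fraction of clauses with distinct variables. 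Should the sampling model resist this decomposition, a bounded-differences (McDiarmid) inequality applied to the number of satisfied clauses, whose mean is $pN = \Omega(m)$, delivers the same $e^{-\Omega(m)}$ tail.
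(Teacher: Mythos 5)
Your proof is correct and takes essentially the same route as the paper's: reduce the lemma to the single event $C(s_{\rew})=1$, extract $A$ as the complement of the variables forcing satisfaction, and bound $\Pr[C(s_{\rew})=0]$ by a product of independent per-clause failure probabilities, each at most $1-2^{-O(1)}$, over $\Omega(m)$ clauses. The only cosmetic difference is that you take $A$ to be the complement of the variables of one satisfied clause (size $\geq n-k$), whereas the paper takes the complement of all variables on which $C$ depends (size $\geq n-m$, using $m<n/2$); if anything, you are more explicit than the paper about the two delicate points---the random clause count and the independence of clause satisfactions---which the paper's own proof glosses over.
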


\begin{proof}
    It suffices to show that $C(s_{\rew}) = 1$ with at least $1-e^{-\Omega(m)}$. Indeed, in this case, since $m < n/2$, $C$ is independent on at least $n/2$ of the variables $x_1,\dots,x_n$. The indices of such variables form the set $A$ that we are looking for.

    To show that $C(s_{\rew}) = 1$ with at least $1-e^{-\Omega(m)}$, we assume WLOG that
    \begin{align*}
    C(x_1,\dots,x_n) = \bigvee_{i=1}^n \left(\bigwedge_{j=1}^{k_i} X_{i,j}\right)
\end{align*}
where $X_{i,j} = x_l$ or $X_{i,j} = \neg x_l$ for some $l \in [m]$. Notice
\begin{align*}
    \PP\left(\left(\bigwedge_{j=1}^{k_i} X_{i,j}\right) = 1\right) = 2^{-k_i} = \Omega(1)
\end{align*}
since each $X_{i,j}$ is sampled i.i.d. from $\{x_l, \neg x_l\}_{l \in [m]}$ uniformly at random. It follows that
\begin{align*}
    \PP\left(C(x_1,\dots,x_n)=1\right) = &~ 1- \prod_{i=1}^n \PP\left(\left(\bigwedge_{j=1}^{k_i} X_{i,j}\right) = 0\right)\\
    = &~ 1- \prod_{i=1}^n (1- 2^{-k_i})\\
    \geq &~ 1-e^{-\Omega(m)}.
\end{align*}
\end{proof}

\begin{lemma}[Property of value function in conditioned Majority MDPs]\label{lem:qfunc_majority_mdp_condition}
In an $n$-bits majority MDP with reward state $s_\rew$, control function $f$ and condition $C$, if there exists a set $A \subset [n]$ such that $|A| = \lceil n/2 \rceil = 2^{b-1} - 1$ and $C(x) = 1$ holds for any binary string $x \in \{0,1\}^n$ satisfying $x_{i} = s_{\rew,i}, \forall i \notin A$, then the value function $V^*$ over $\{s \in \{0,1\}^{b+n}: s[c] = \0_b, s_{b+i} = s_{\rew,i} ~(\forall i \notin A) \}$ is given by the following:
\begin{align*}
    V^*(s) = n - \sum_{i \in A} (\neg (s_{b+i} \oplus s_{\rew,i})) + 1.
\end{align*}
\end{lemma}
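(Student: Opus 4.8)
The statement to prove is \Cref{lem:qfunc_majority_mdp_condition}, which computes the value function in a conditioned Majority MDP over a restricted set of starting states where the control bits begin at $\0_b$ and the representation bits already agree with $s_\rew$ on the complement of $A$. The strategy mirrors the proof of \Cref{lem:qfunc_majority_mdp_uncondition}: I would establish the value by a matching upper bound and lower bound on the number of time steps an optimal agent needs to reach the reward state $(\1_b, s_\rew)$, then convert the residual time steps into accumulated reward. The key new ingredient compared to the unconditioned case is that the condition $C$ may block flipping actions, so I must argue that under the hypothesis on $A$, the agent never actually gets blocked along the relevant trajectory.

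\textbf{Key steps.} First I would record the geometry of the problem: by \Cref{def:control_func}, the control bits traverse $\0_b \to \1_b$ in exactly $2^b - 1$ steps while playing $a=0$, and along the way the control bits (read as a binary number) take every value in $\{0, \dots, 2^b-1\}$ exactly once. Second, for the lower bound on hitting time, I would note that since the starting state already agrees with $s_\rew$ outside $A$, the only indices that ever need flipping lie in $A$; each index $i \in A$ with $s_{b+i} \neq s_{\rew,i}$ requires at least one flip, contributing $\sum_{i\in A}(\neg(s_{b+i}\oplus s_{\rew,i}))$ action-$1$ steps, and these are disjoint from the $2^b-1$ control-advancing steps. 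This gives the upper bound $V^*(s) \le n - \sum_{i\in A}(\neg(s_{b+i}\oplus s_{\rew,i})) + 1$. Third, for the matching lower bound I would exhibit the greedy policy (the conditioned analogue of the $\pi^*$ from the previous lemma): flip coordinate $i$ whenever the control bits equal $i$ and $s_{b+i}\neq s_{\rew,i}$, otherwise advance the control. I must verify this policy is feasible, i.e., the condition $C$ evaluates to $1$ at every state where a flip is attempted. This is exactly where the hypothesis on $A$ enters: along this trajectory the representation bits always agree with $s_\rew$ outside $A$ (we only ever flip indices inside $A$, and flipping toward agreement never disturbs the complement of $A$), so every intermediate representation string $x$ satisfies $x_i = s_{\rew,i}$ for all $i \notin A$, whence $C(x)=1$ by hypothesis and the flip is never blocked. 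Counting the reward collected in the remaining $n - \sum_{i\in A}(\neg(s_{b+i}\oplus s_{\rew,i})) + 1$ steps then yields the matching lower bound.

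\textbf{Main obstacle.} The crux is the feasibility argument for the greedy policy under the condition $C$. I must be careful that flipping coordinate $i \in A$ preserves the invariant ``agrees with $s_\rew$ outside $A$'' throughout the trajectory — this holds because $g$ only alters the single coordinate indexed by the current control-bit value, and the policy only flips when that index is in $A$ and disagrees with $s_\rew$. A subtlety is that $|A| = 2^{b-1}-1 = \lceil n/2\rceil$, so not every control-bit value in $\{0,\dots,2^b-1\}$ corresponds to an index in $A$; on steps where the control value indexes some $i \notin A$ we simply play $a=0$ (advancing control), which is consistent with agreement already holding there. Once feasibility is secured, the timing count is identical in structure to \Cref{lem:qfunc_majority_mdp_uncondition}, and the remainder is routine bookkeeping of horizon $H = 2^b + n$ against the hitting time.
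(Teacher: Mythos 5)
Your proposal is correct and follows essentially the same route as the paper's proof: a hitting-time lower bound ($2^b-1$ control-advancing steps plus one flip per disagreeing index in $A$) giving the upper bound on $V^*$, matched by the greedy policy that flips exactly when the control bits index a disagreeing coordinate. Your explicit invariant argument for feasibility (only indices in $A$ are ever flipped, so every intermediate representation string agrees with $s_\rew$ outside $A$ and hence satisfies $C$) is precisely what the paper asserts in its parenthetical remark that ``the condition is always satisfied under the current policy,'' just spelled out in more detail.
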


\begin{proof}
To find $V^*(s)$, it suffices to count the number of actions it takes to reach the reward state $(\1_b, s_{\rew})$.

For the control bits to travel from $\0_b$ to $\1_b$, it takes $2^b-1$ time steps. Indeed, by Definition~\ref{def:control_func}, $f^{(i)}(\0_b) \neq \1_b$ for any $i<2^b-1$ and $f^{(2^b-1)}(\0_b) \neq \1_b$ (otherwise, $\left|\left\{f^{(k)}(\0_b): k = 1,2, \dots, 2^b-1\right\}\right| < 2^b-1$ and as a result $\left\{f^{(k)}(\0_b): k = 1,2, \dots, 2^b-1\right\}$ can not traverse $\{0,1\}^b \backslash \{\0_b\}$). Thus, this corresponds to at least $2^b-1$ time steps in which action $a = 0$ is played. 

For the representation bits, starting from $s[r]$, reaching $s_{\rew}$ takes at least
\begin{align*}
    \sum_{i \in A} (\neg (s_{b+i} \oplus s_{\rew,i}))
\end{align*}
number of flipping, as each index $i$ such that $s_{b+i} \neq s_{\rew,i}$ needs to be flipped. This corresponds to at least $\sum_{i \in A} (\neg (s_{b+i} \oplus s_{\rew,i}))$ time steps in which action $a = 1$ is played. 

In total, the agent needs to take at least $2^b + \sum_{i \in A} (\neg (s_{b+i} \oplus s_{\rew,i})) -1$ times steps before it can receive positive rewards. Since there are $2^b+n$ time step in total, it follows that the agent gets positive reward in at most $n - \sum_{i \in A} (\neg (s_{b+i} \oplus s_{\rew,i})) + 1$ time steps. As a consequence,
\begin{align*}
    V^*(s) \leq n - \sum_{i \in A} (\neg (s_{b+i} \oplus s_{\rew,i})) + 1
\end{align*}
On the other hand, consider the following policy:
\begin{align*}
    \pi^*(s) = \begin{cases}
        1, &~ \text{if}~s_{b+i} \neq s_{\rew,i},~\text{where}~i = \sum_{j=1}^b 2^{b-j}\cdot s_j\\
        0, &~ \text{otherwise}.
    \end{cases}
\end{align*}
This policy reaches the reward state in $2^b + \sum_{i \in A} (\neg (s_{b+i} \oplus s_{\rew,i})) -1$ time steps. Indeed, the control bits takes $2^b-1$ times steps to reach $\1_b$. During these time steps, the control bits (as a binary number) traverses $0,\dots,2^b-1$. Thus for any $i \in [n]$ such that $s_{b+i} \neq s_{\rew,i}$, there exists a time $t$ such that the control bits at this time $t$ (as a binary number) equals $i$, and at this time step, the agent flipped the $i$-th coordinate of the representation bits by playing action $1$ (note that the flipping operation can always be applied since the condition is always satisfied under the current policy). Therefore, the representation bits takes $\sum_{i \in A} (\neg (s_{b+i} \oplus s_{\rew,i}))$ times steps to reach $s_{\rew}$. Combining, the agent arrives the state $(\0_b,s_{\rew})$ after $2^b + \sum_{i \in A} (\neg (s_{b+i} \oplus s_{\rew,i})) -1$ time steps, and then collects reward $1$ in each of the remaining $n - \sum_{i \in A} (\neg (s_{b+i} \oplus s_{\rew,i})) + 1$ time steps, resulting in a value of 
\begin{align*}
    V^*(s) = n - \sum_{i \in A} (\neg (s_{b+i} \oplus s_{\rew,i})) + 1.
\end{align*}
\end{proof}

Now we return to the proof of Theorem~\ref{thm:majority_mdp_condition}.

\begin{proof}
First, we show that the model function can be computed by circuits with polynomial size and constant depth. Consider the following circuit:
\begin{align*}
    C_{\model}(s,a) = \left(g(s_1,\dots,s_b), \left(s_{b+k} \oplus \left(\delta_k(s_1,\dots,s_b) \wedge a \wedge C(s[r])\right)\right)_{k=1}^n\right)
\end{align*}
where $\oplus$ is the XOR gate, $g: \{0,1\}^{b+1} \to \{0,1\}^b$ such that $g(x_1,\dots,x_b,a)_i = (f(x_1,\dots,x_b)_i \wedge \neg a) \vee (x_i \wedge a)$ for all $i \in [b]$, and $\delta_k(x_1,\dots,x_b) = \mathbbm{1}(k = \sum_{j=1}^b 2^{b-j}\cdot x_j)$. We can verify that $C_{\model} = \transit$. By Claim~\ref{cla:xor}, Claim~\ref{cla:ind}, and Lemma~\ref{lem:control_func}, the XOR gate, the control function $f$, the function $C(s[r])$, and the gate $\delta_k$ can all be implemented by binary circuits with polynomial size and constant depth. As a result of Claim~\ref{cla:circuit_composite}, $C_{\model}$ also has polynomial size and constant depth.

Now, the reward function can be computed by the following simple circuit:
\begin{align*}
    C_{\rew}(s) = s_1 \wedge \dots \wedge s_b \wedge(\neg ( s_{b+1} \oplus s_{\rew,1}) ) \wedge  \cdots \wedge (\neg (s_{b+n} \oplus s_{\rew,n})).
\end{align*}
Finally, we show that with high probability, the value function can not be computed by a circuit with constant depth and polynomial size. By Lemma~\ref{lem:condition_satisfied}, with probability at least $1-e^{-\Omega(m)}$, there exists a set $A \subset [n]$ such that $|A| \geq n/2$ and $C(x) = 1$ holds for any binary string $x \in \{0,1\}^n$ satisfying $x_{i} = s_{\rew,i}, \forall i \notin A$. We can reduce the size of $A$ by deleting some elements to make $|A| = \lceil n/2 \rceil = 2^{b-1} - 1$ and the above property still holds. Denote $A = \{a(1)< \dots <a(L)\}$ where $a: [L] \to [n]$. 
Define $h: \{0,1\}^{L} \to \{0,1\}^b$
\begin{align*}
    h\left((x_1,\dots,x_L)\right) = n+1-V^*\left((\0_b, s')\right), ~\text{where}~s'_j = \begin{cases}
        s_{\rew,j}, &~ j \notin A\\ \neg x_{a^{-1}(j)} \oplus s_{\rew,j}, &~ j \in A
    \end{cases}.
\end{align*}
Due to Lemma~\ref{lem:qfunc_majority_mdp_condition},  $h\left(x\right)$ can be represented in binary form with $(b-1)$ bits, and the first (most significant) bit of $h(x)$ is the majority function of $x$. 
If there exists a circuit $C_{\qfunc} = V^*$ with polynomial size and constant depth, then the circuit defined by
\begin{align*}
    C_{\MAJ}(x_1,\dots,x_L) = n + 1 -C_{\qfunc}\left(\0_b,\neg x_{a^{-1}(1)} \oplus s_{\rew,1},\dots,\neg x_{a^{-1}(n)} \oplus s_{\rew,n}\right)
\end{align*}
where $a^{-1}(i) = i$ if $i \notin A$, outputs $h$. By \Cref{thm:add_ac}, $C_{\MAJ}$ also has polynomial size and constant depth, which contradicts \Cref{thm:majority_ac}. This means that $V^*$ can not be computed by circuits with polynomial size and constant depth. 
Notice that $V^*(s) = \max_{a \in \mathcal{A}} Q^*(s,a) = \max\{ Q^*(s, 0), Q^*(s, 1) \}$. By \Cref{thm:max_ac} and \Cref{cla:circuit_composite}, we conclude that the optimal $Q$-function can not be computed by circuits with polynomial size and constant depth.
\end{proof}

\section{Experiment Details}
\label{app:exp_detail}

Table~\ref{tab:sac_hyperparameters} shows the parameters used in SAC training to learn the optimal $Q$-function in \Cref{sec:exp}. Table~\ref{tab:approx_hyperparameters} shows parameters for fitting neural networks to the value, reward, and transition functions in \Cref{sec:exp}.

\newcolumntype{C}{>{\centering\arraybackslash}X}

\begin{table}[ht]
\centering
\renewcommand{\arraystretch}{1.5}
\begin{tabular}{|c|c|} %
\hline
\textbf{Hyperparameter} & \textbf{Value(s)} \\
\hline
Optimizer & Adam~\citep{kingma2014adam}\\
Learning Rate & 0.0003 \\
Batch Size & 1000 \\
Number of Epochs & 100000 \\
Init_temperature & 0.1 \\
Episode length & 1000 \\
Discount factor & 0.99 \\
number of hidden layers (all networks) & 256 \\
number of hidden units per layer & 2 \\
target update interval & 1 \\
\hline
\end{tabular}
\caption{Hyperparameters in Soft-Actor-Critic}
\label{tab:sac_hyperparameters}
\end{table}

\begin{table}[H]
\centering
\renewcommand{\arraystretch}{1.5}
\begin{tabular}{|c|c|} %
\hline
\textbf{Hyperparameter} & \textbf{Value(s)} \\
\hline
Optimizer & Adam~\citep{kingma2014adam}\\
Learning Rate & 0.001 \\
Batch Size & 32 \\
Number of Epochs & 100 \\
\hline
\end{tabular}
\caption{Hyperparameters of fitting neural networks to the value, reward, and transition functions}
\label{tab:approx_hyperparameters}
\end{table}

\section{Additional Experiment Results}
\label{app:add_exp}
Under the same experiment settings in Section~\ref{sec:exp}, we conduct further experiments using varying neural network depth and width to approximate the the optimal $Q$-functions, reward functions, and transition functions. The results are shown in Figure~\ref{fig:sac-approx-errors-2-128}-Figure~\ref{fig:sac-approx-errors-2-64}. Furthermore, since the optimal $Q$-function calculated by the critic in the SAC algorithm might be biased due to under-estimation since clipped double Q learning is adopted,  we also use critics of the Actor-Critic algorithm (\Cref{app:sec_AC}) and Monte Carlo simulation to evaluate the $Q$-function of the learned actor (\Cref{app:sec_MC}) to obtain an unbiased estimation of the optimal $Q$-function.

\begin{figure}[htp]
\centering
\includegraphics[width=.31\textwidth]{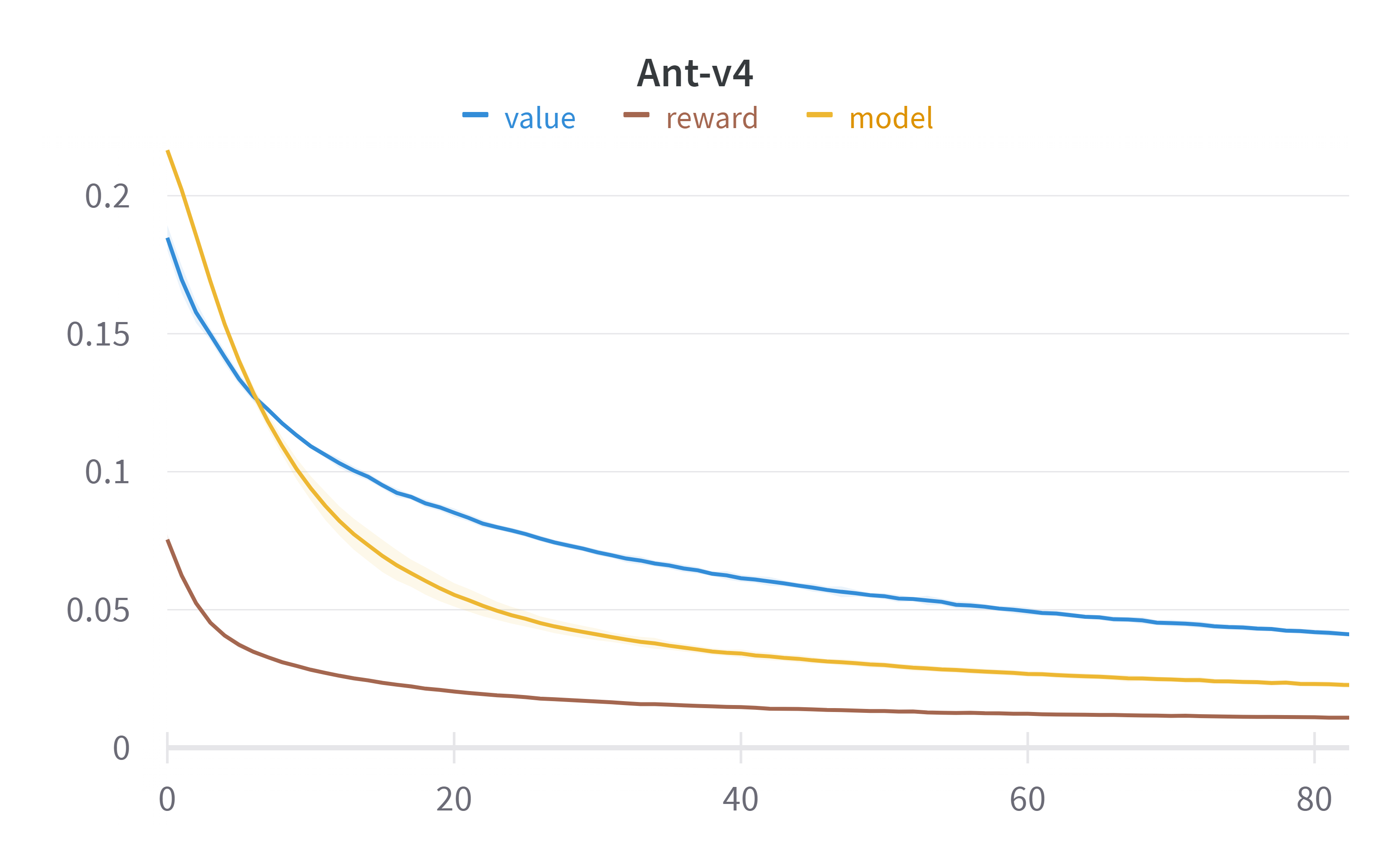}\quad
\includegraphics[width=.31\textwidth]{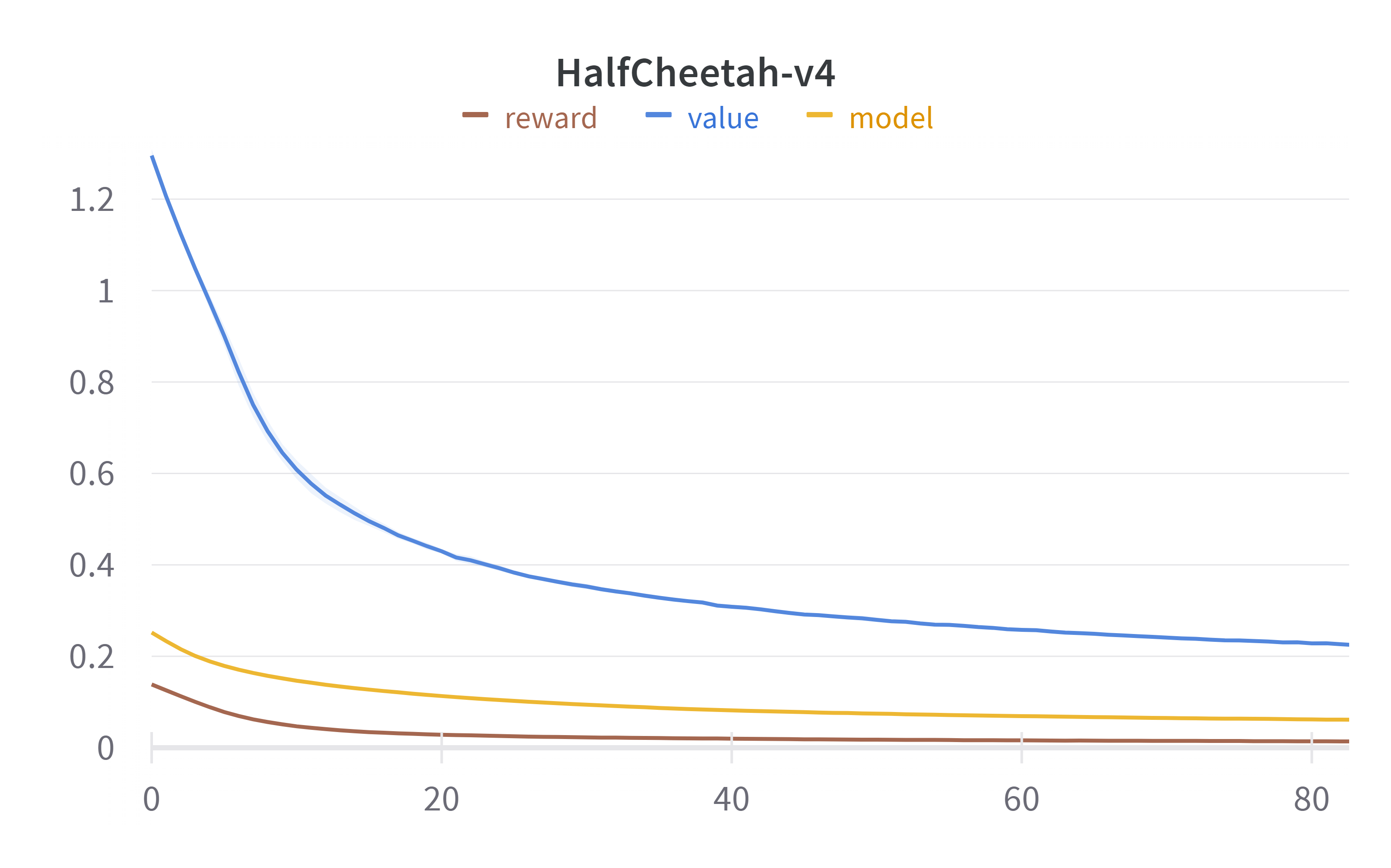}\quad
\includegraphics[width=.31\textwidth]{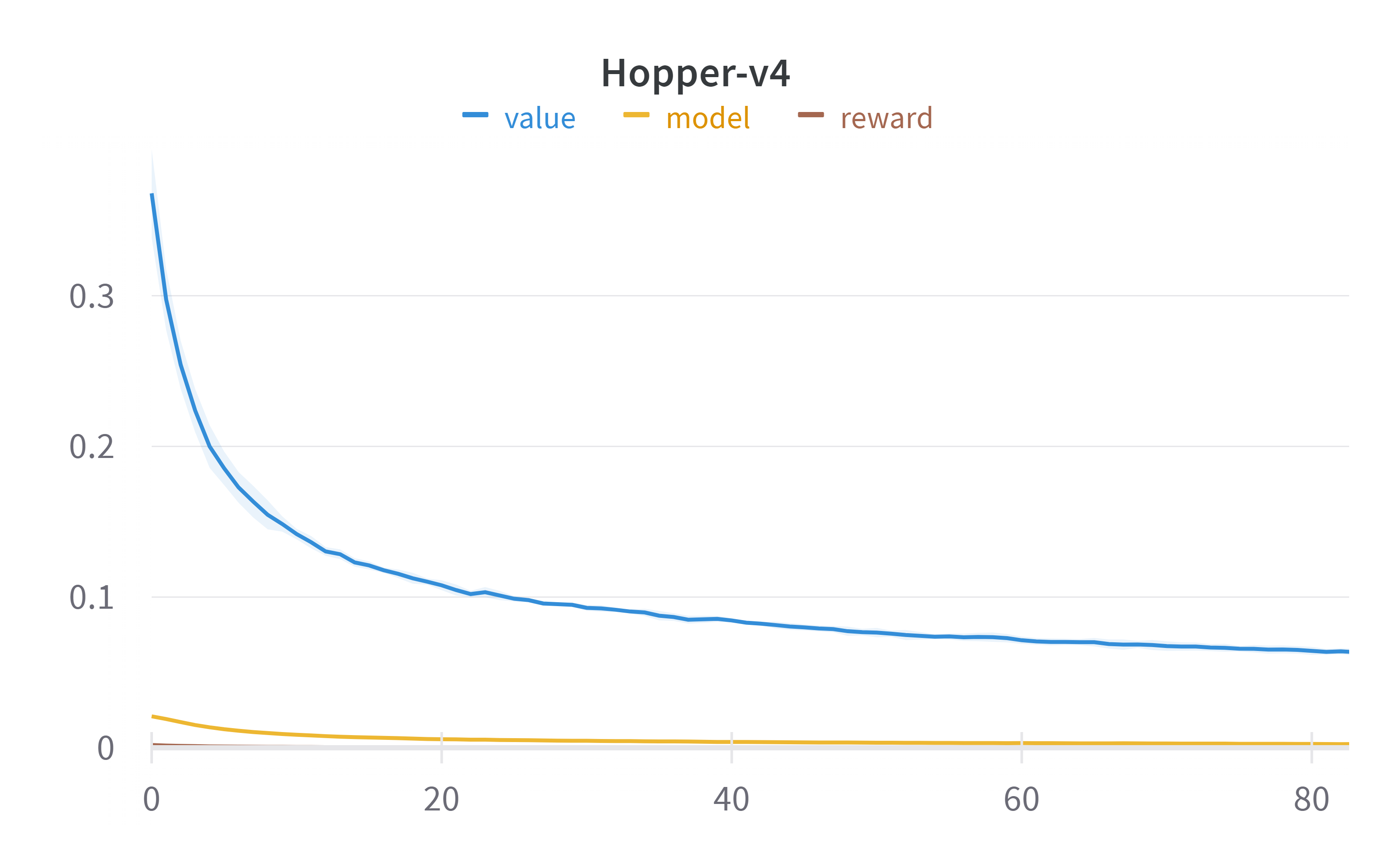}

\medskip

\includegraphics[width=.31\textwidth]{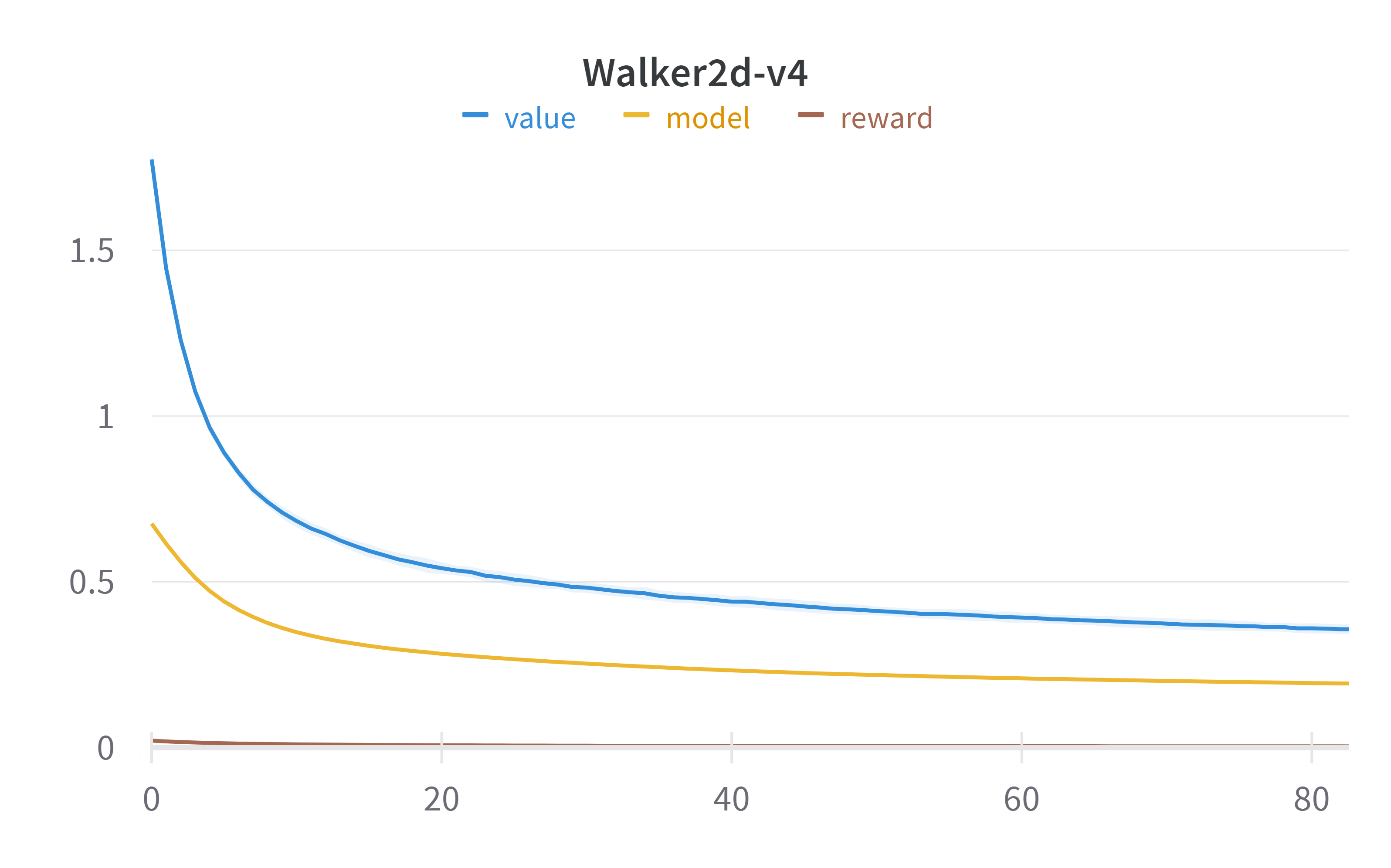}\quad
\includegraphics[width=.31\textwidth]{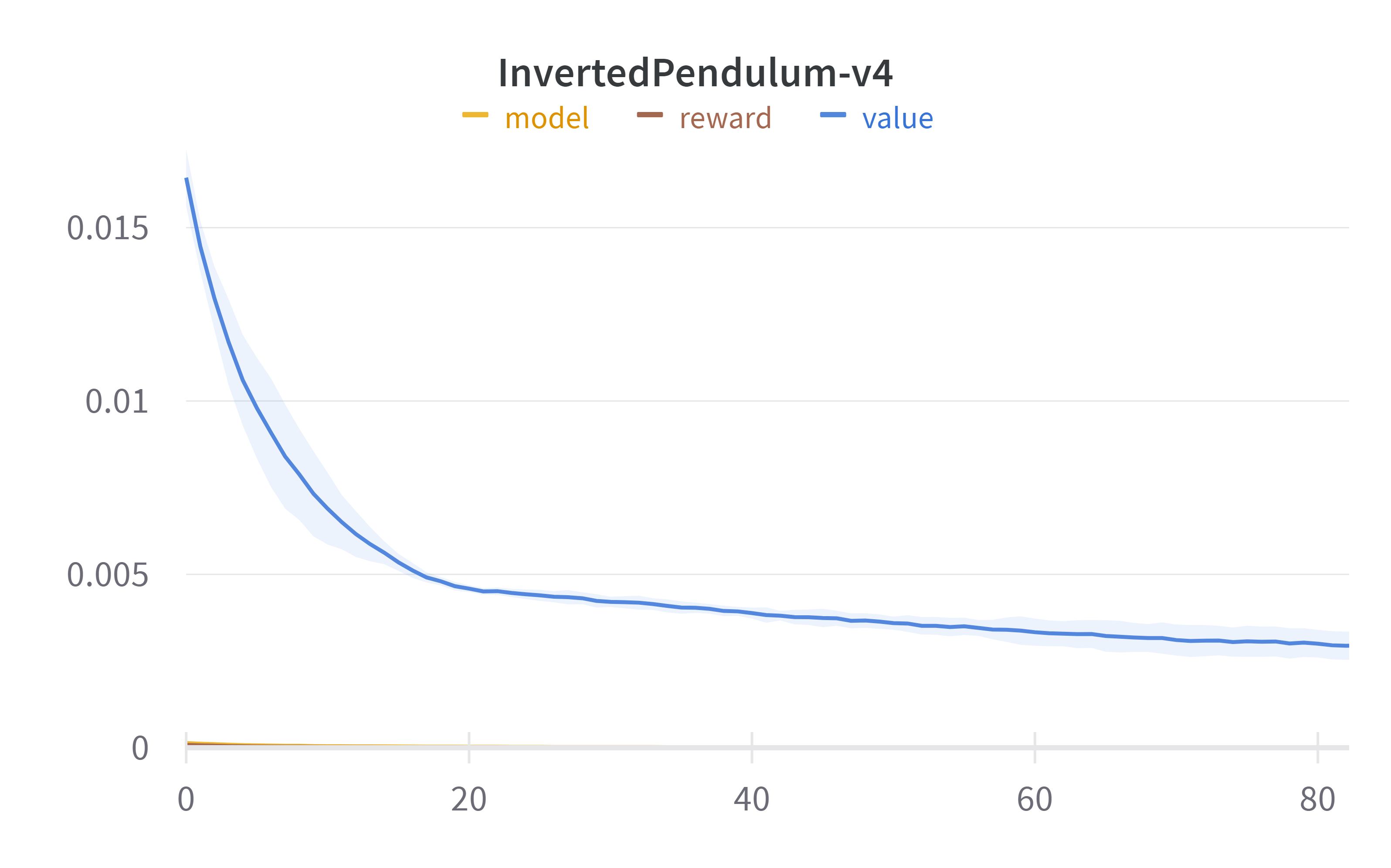}

\caption{Approximation errors of the optimal $Q$-functions, reward functions, and transition functions in MuJoCo environments, using $2$-(hidden) layer neural networks with width $128$. In each environment, we run $5$ independent experiments and report the mean and standard deviation of the approximation errors.}
\label{fig:sac-approx-errors-2-128}
\end{figure}

\begin{figure}[htp]
\centering
\includegraphics[width=.31\textwidth]{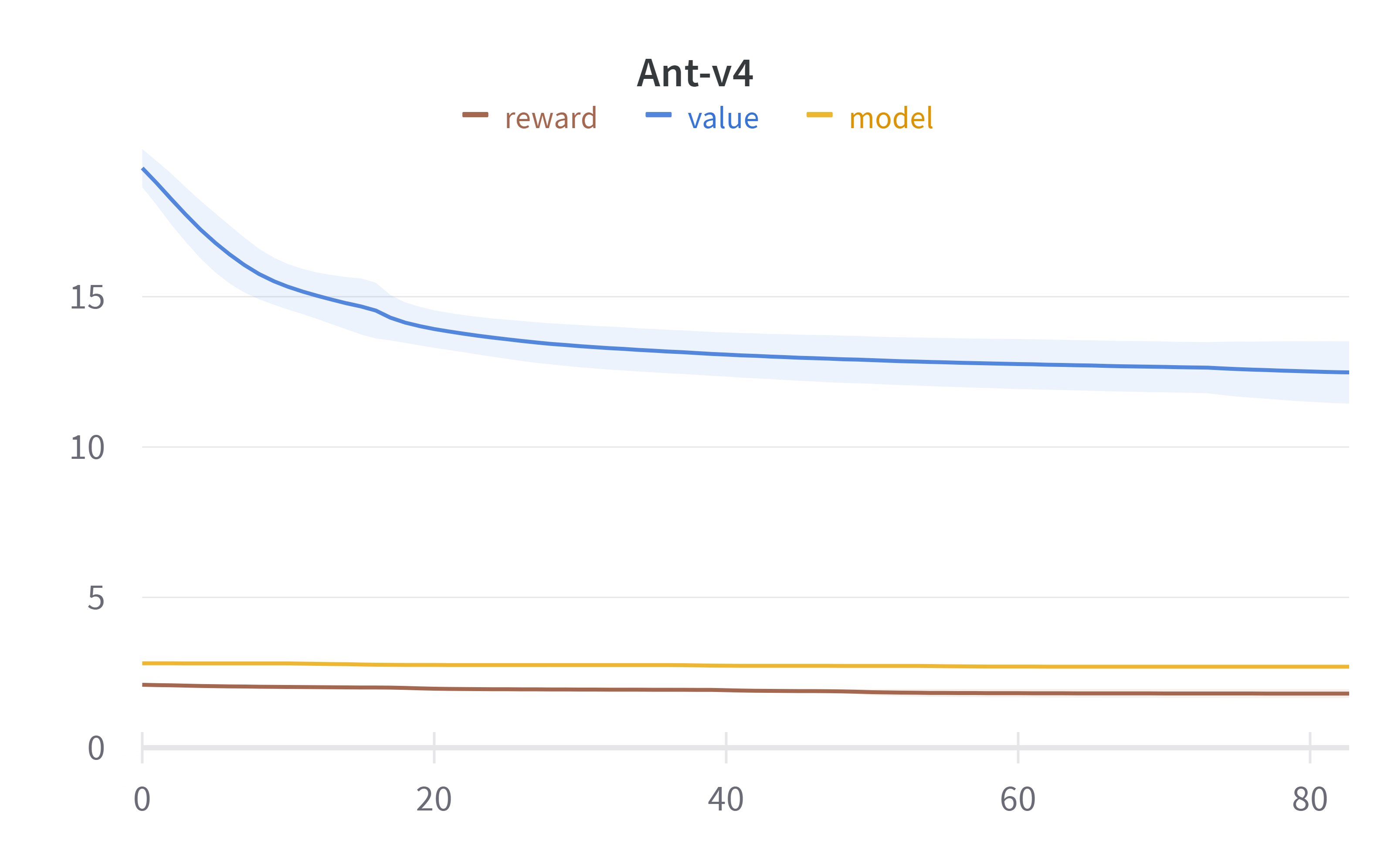}\quad
\includegraphics[width=.31\textwidth]{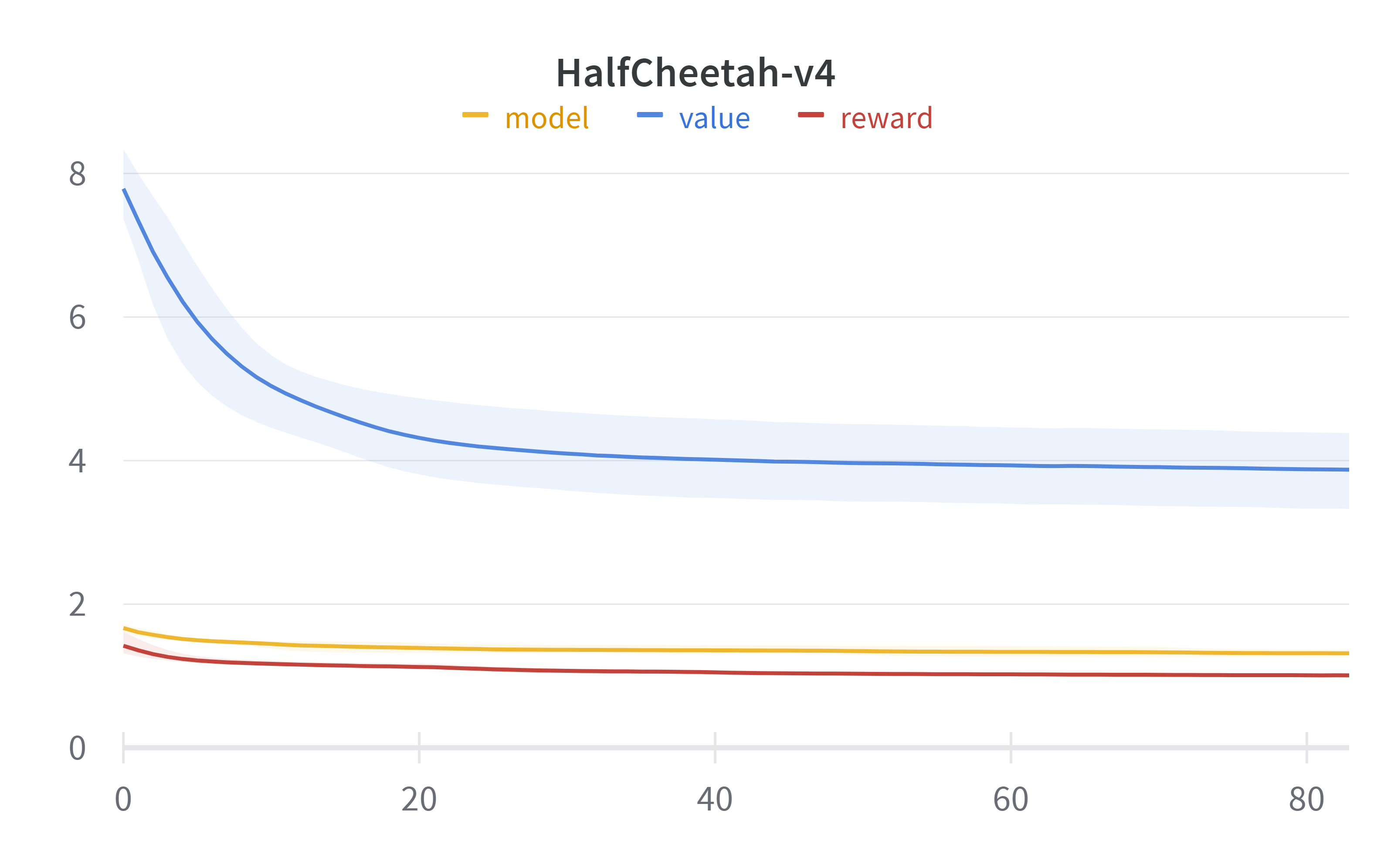}\quad
\includegraphics[width=.31\textwidth]{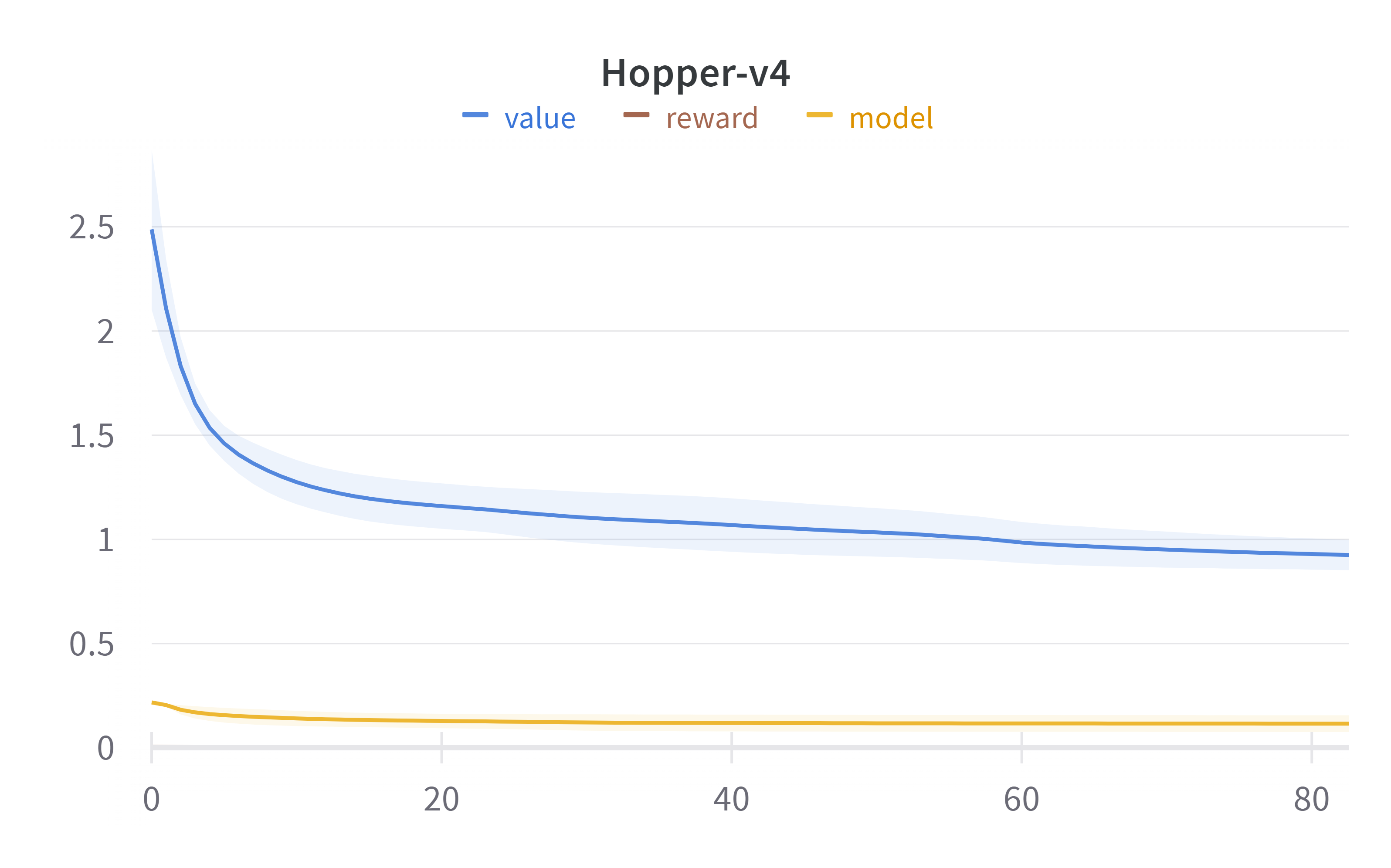}

\medskip

\includegraphics[width=.31\textwidth]{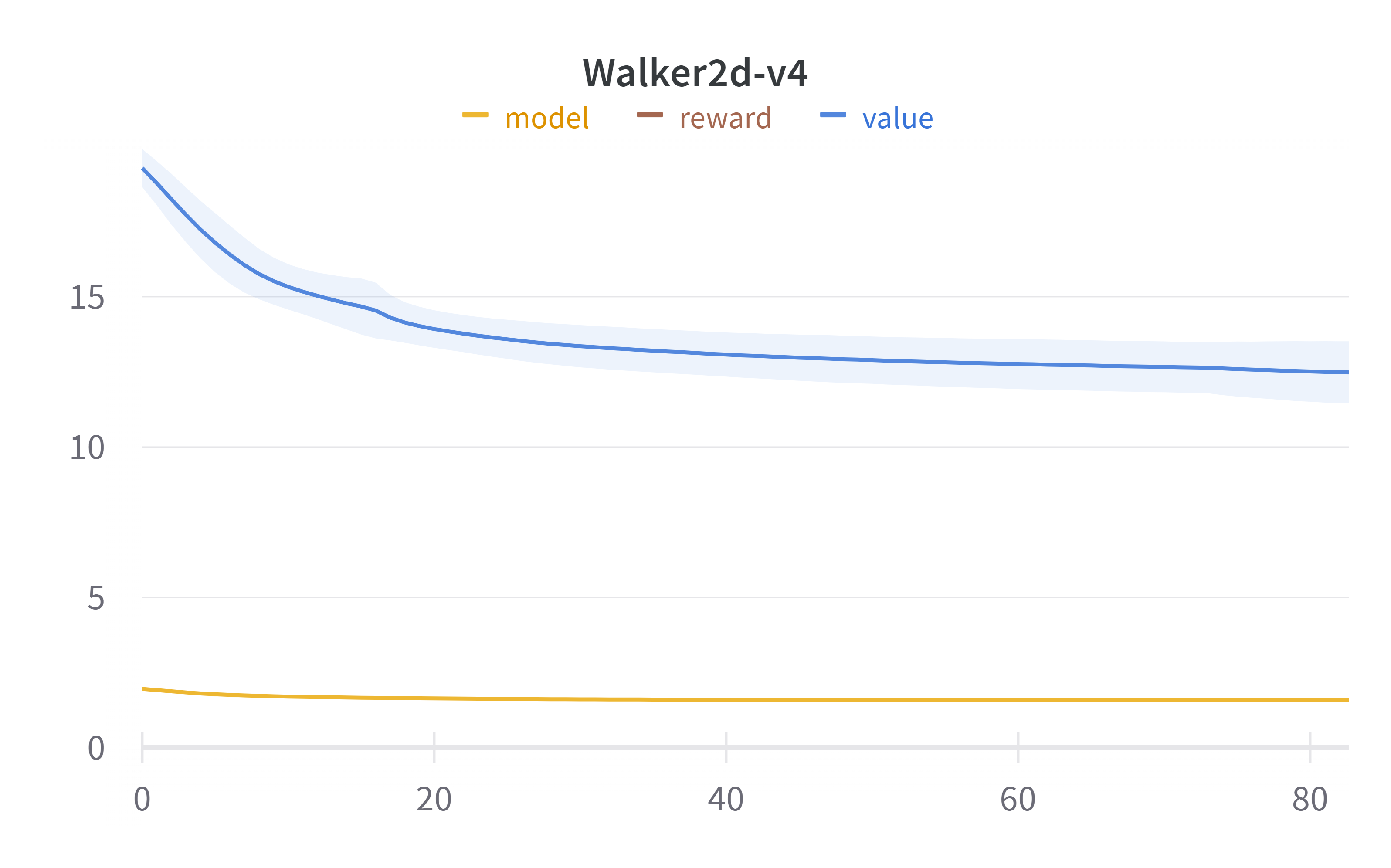}\quad
\includegraphics[width=.31\textwidth]{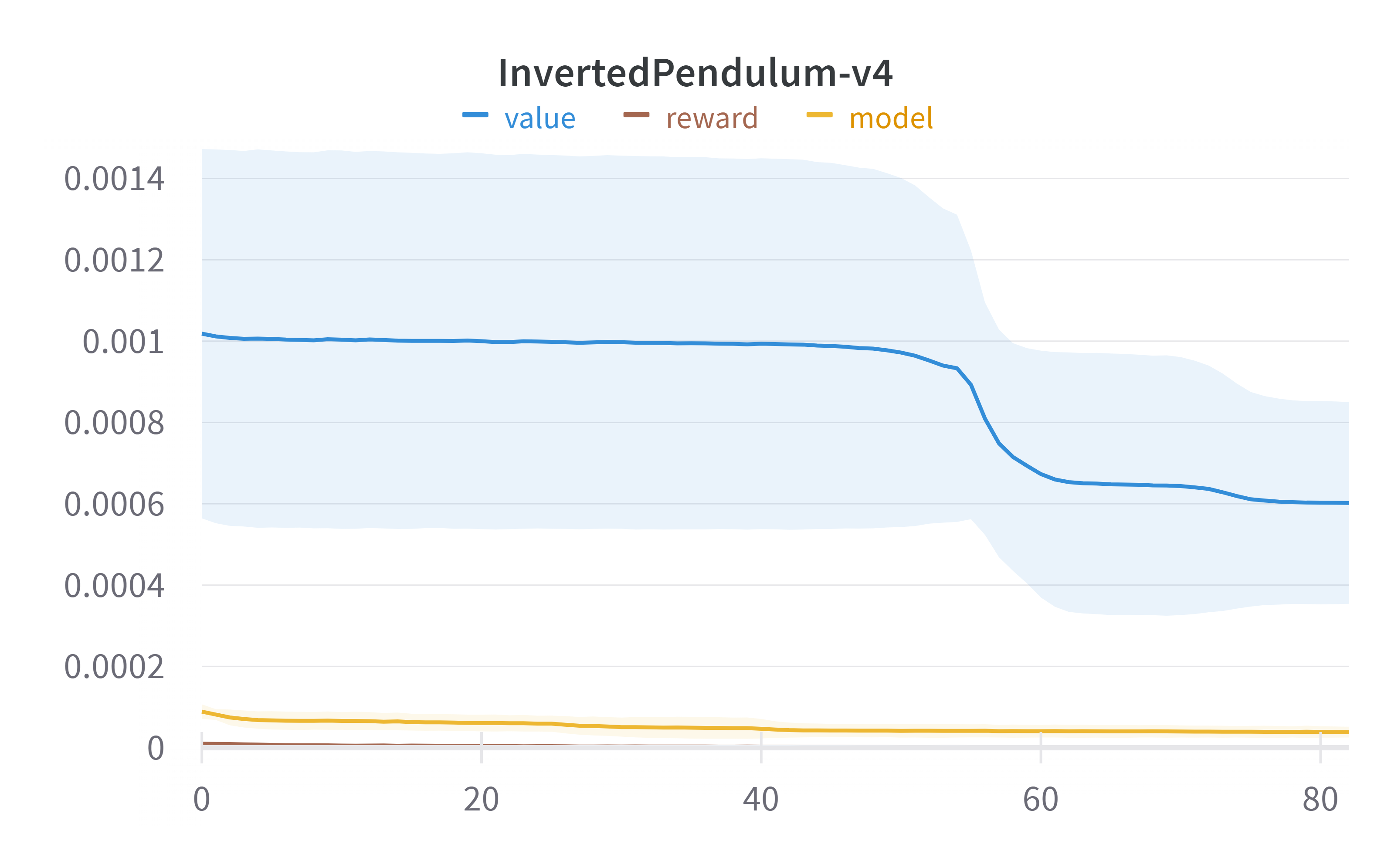}

\caption{Approximation errors of the optimal $Q$-functions, reward functions, and transition functions in MuJoCo environments, using $1$-(hidden) layer neural networks with width $16$. In each environment, we run $5$ independent experiments and report the mean and standard deviation of the approximation errors.}
\label{fig:sac-approx-errors-1-16}
\end{figure}

\begin{figure}[htp]
\centering
\includegraphics[width=.31\textwidth]{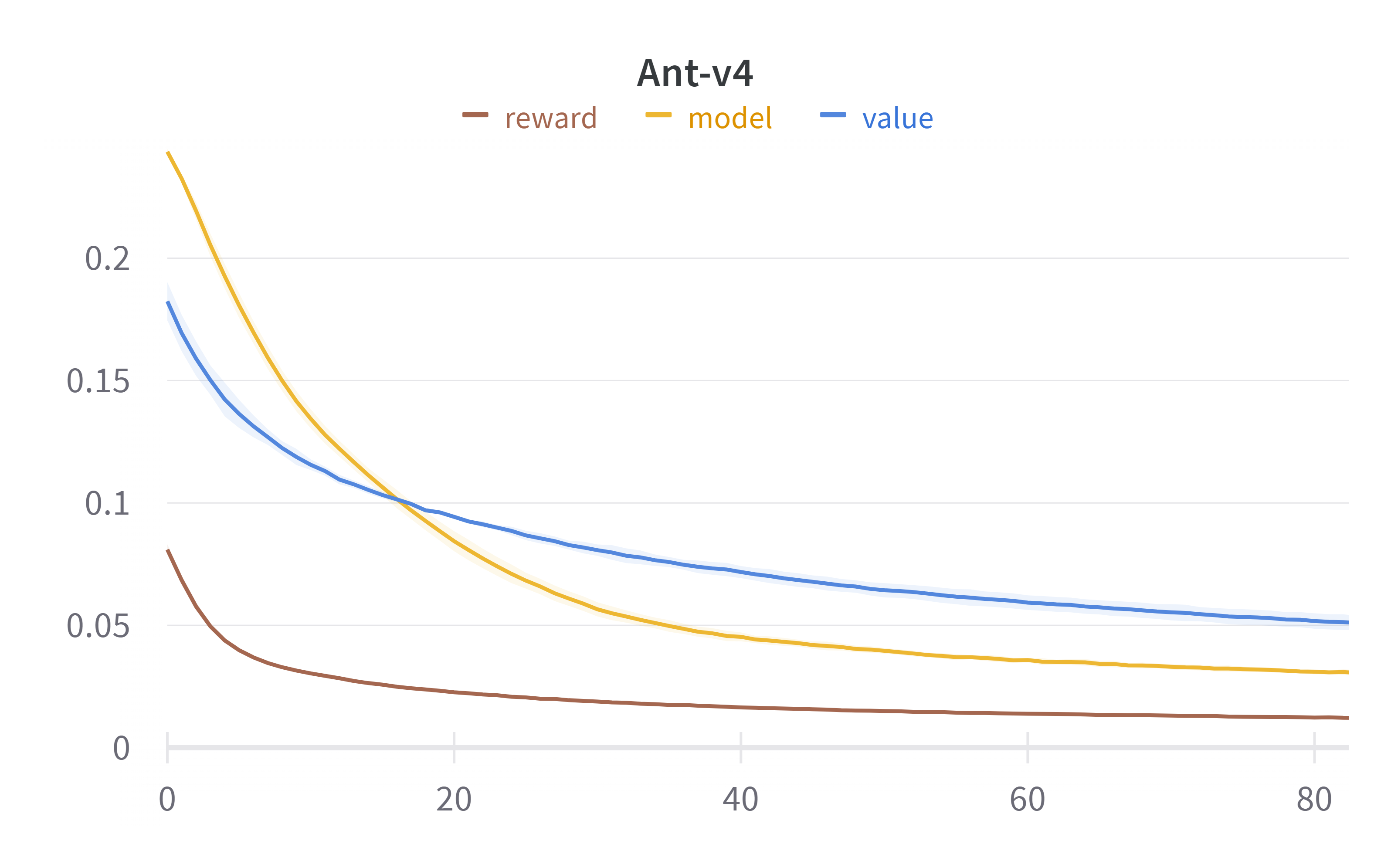}\quad
\includegraphics[width=.31\textwidth]{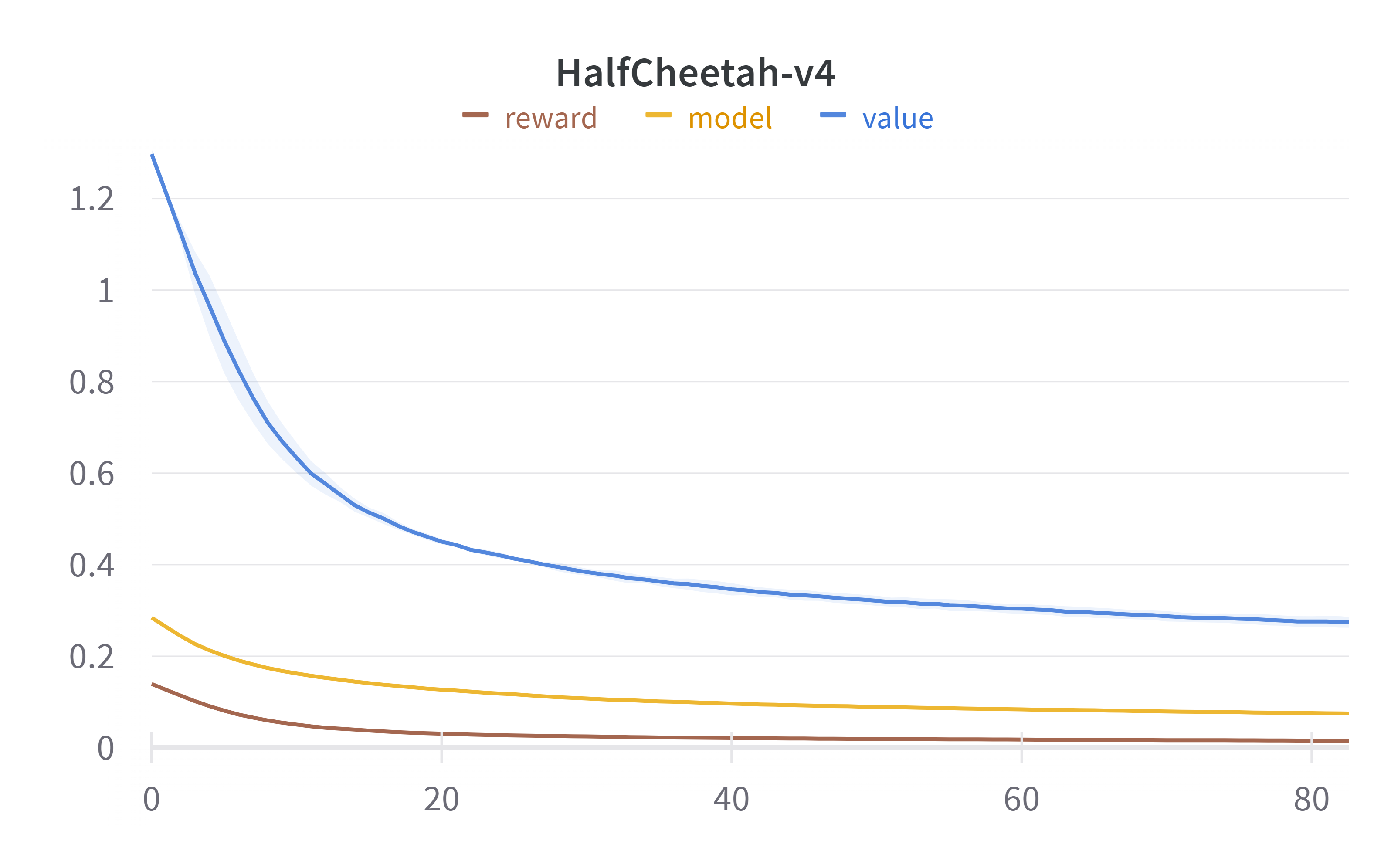}\quad
\includegraphics[width=.31\textwidth]{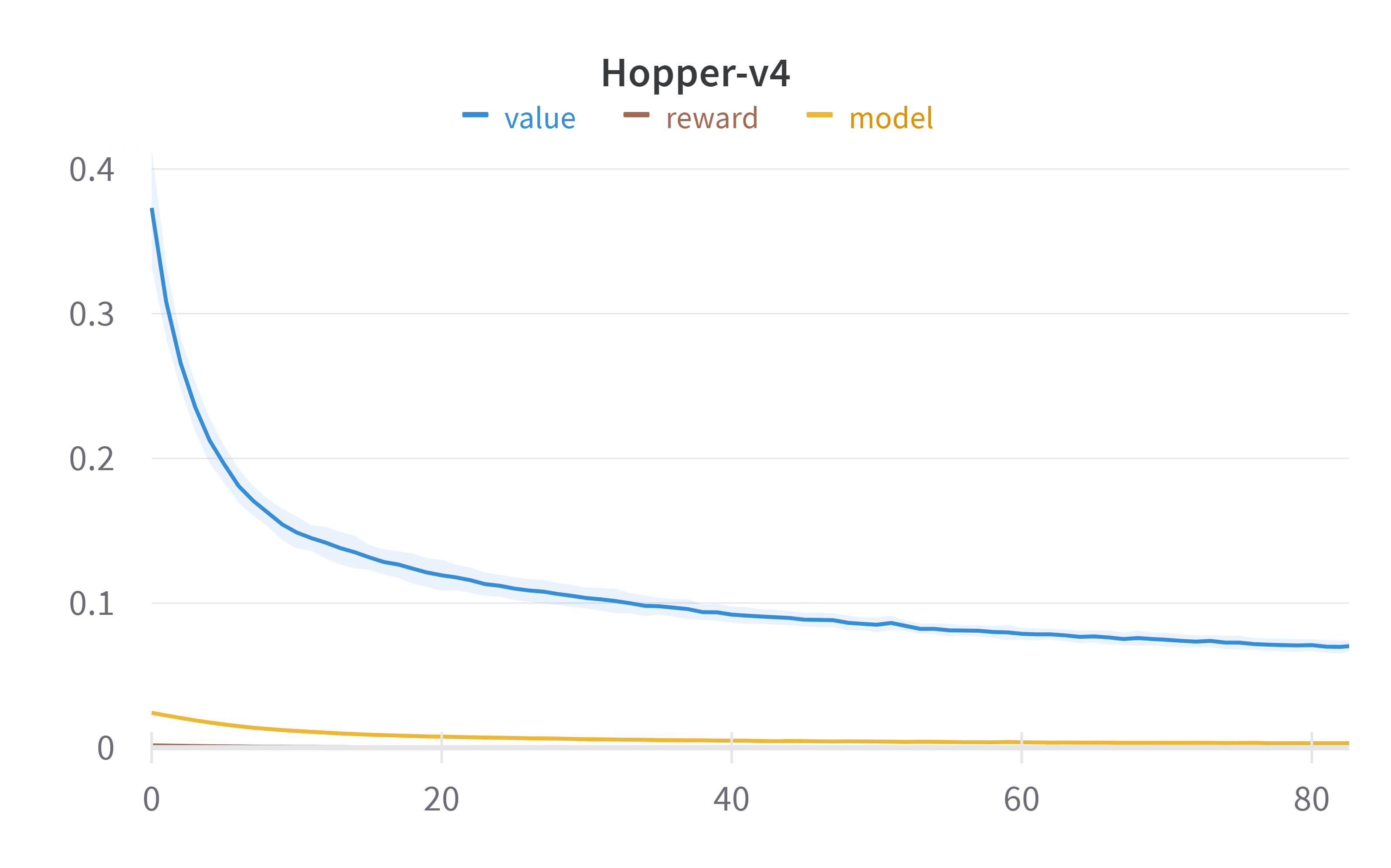}

\medskip

\includegraphics[width=.31\textwidth]{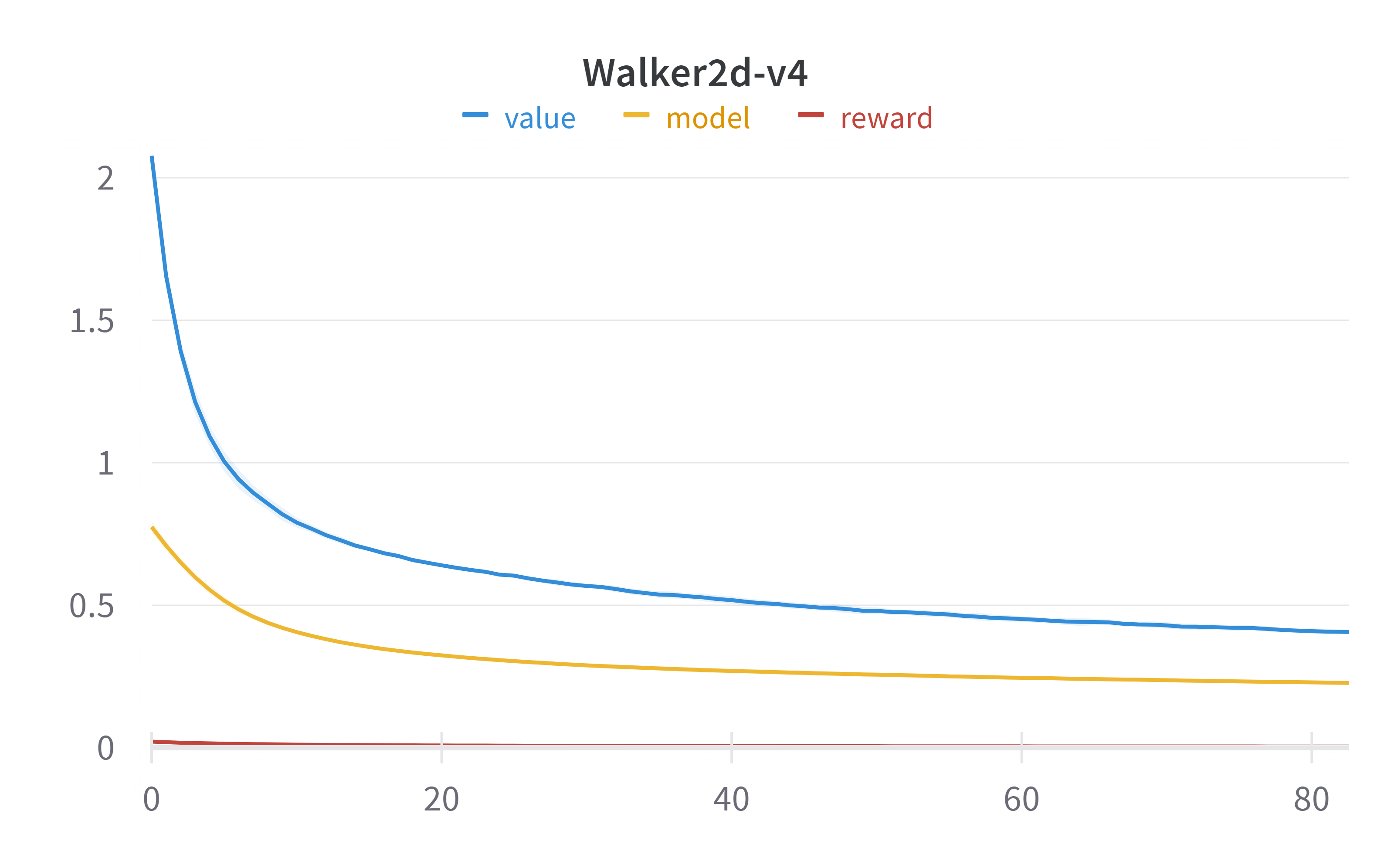}\quad
\includegraphics[width=.31\textwidth]{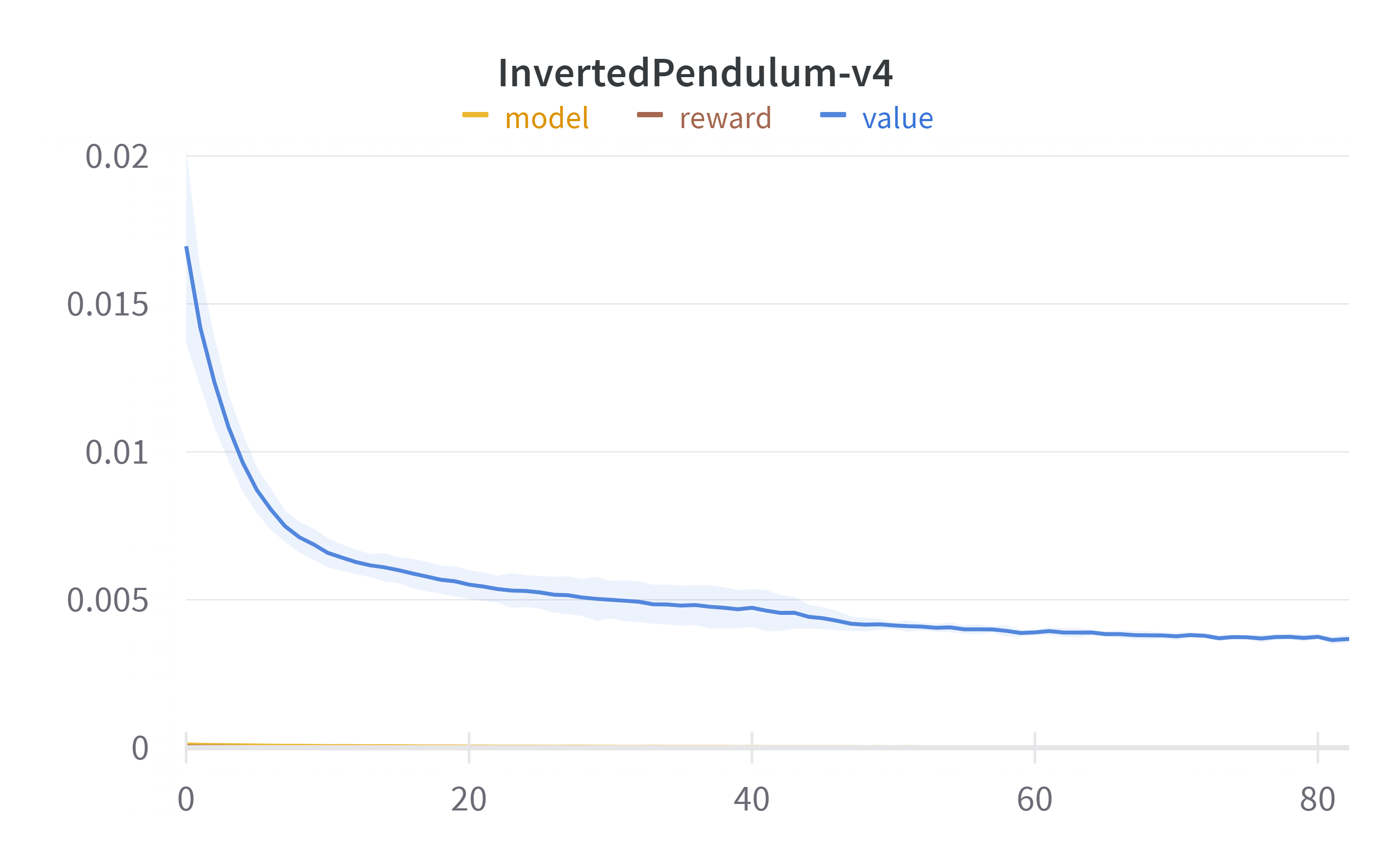}

\caption{Approximation errors of the optimal $Q$-functions, reward functions, and transition functions in MuJoCo environments, using $3$-(hidden) layer neural networks with width $64$. In each environment, we run $5$ independent experiments and report the mean and standard deviation of the approximation errors.}
\label{fig:sac-approx-errors-3-64}
\end{figure}

\begin{figure}[htp]
\centering
\includegraphics[width=.31\textwidth]{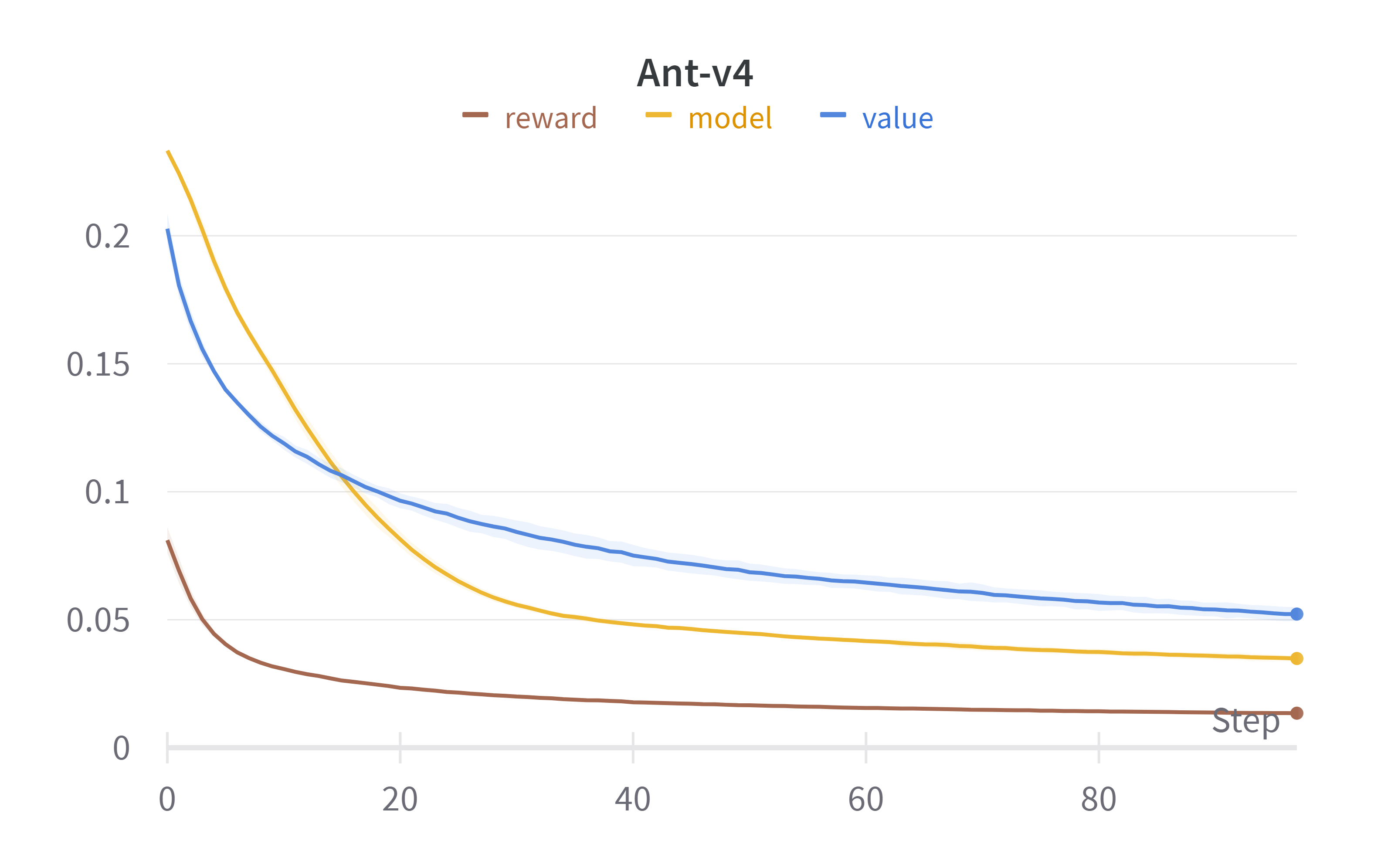}\quad
\includegraphics[width=.31\textwidth]{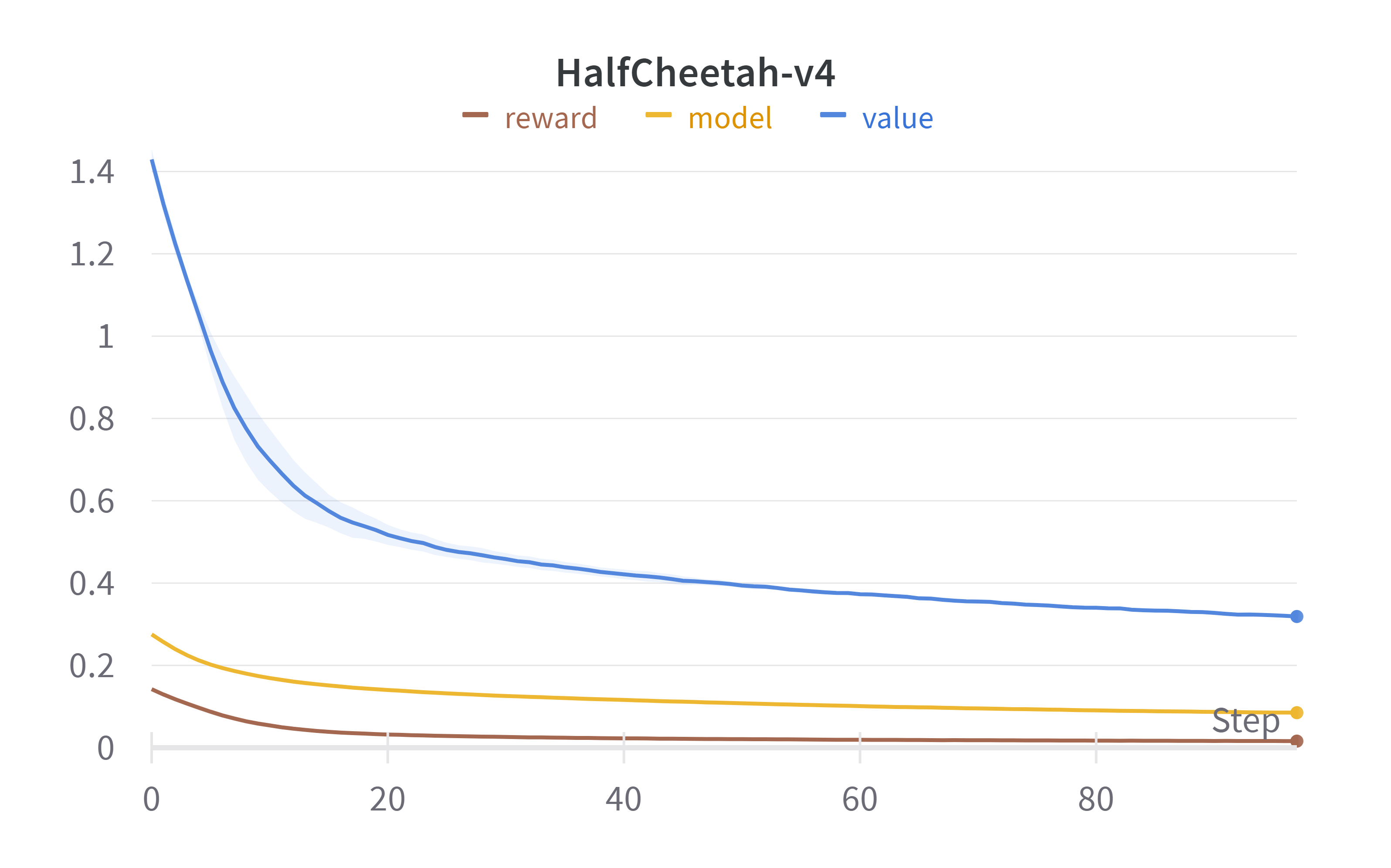}\quad
\includegraphics[width=.31\textwidth]{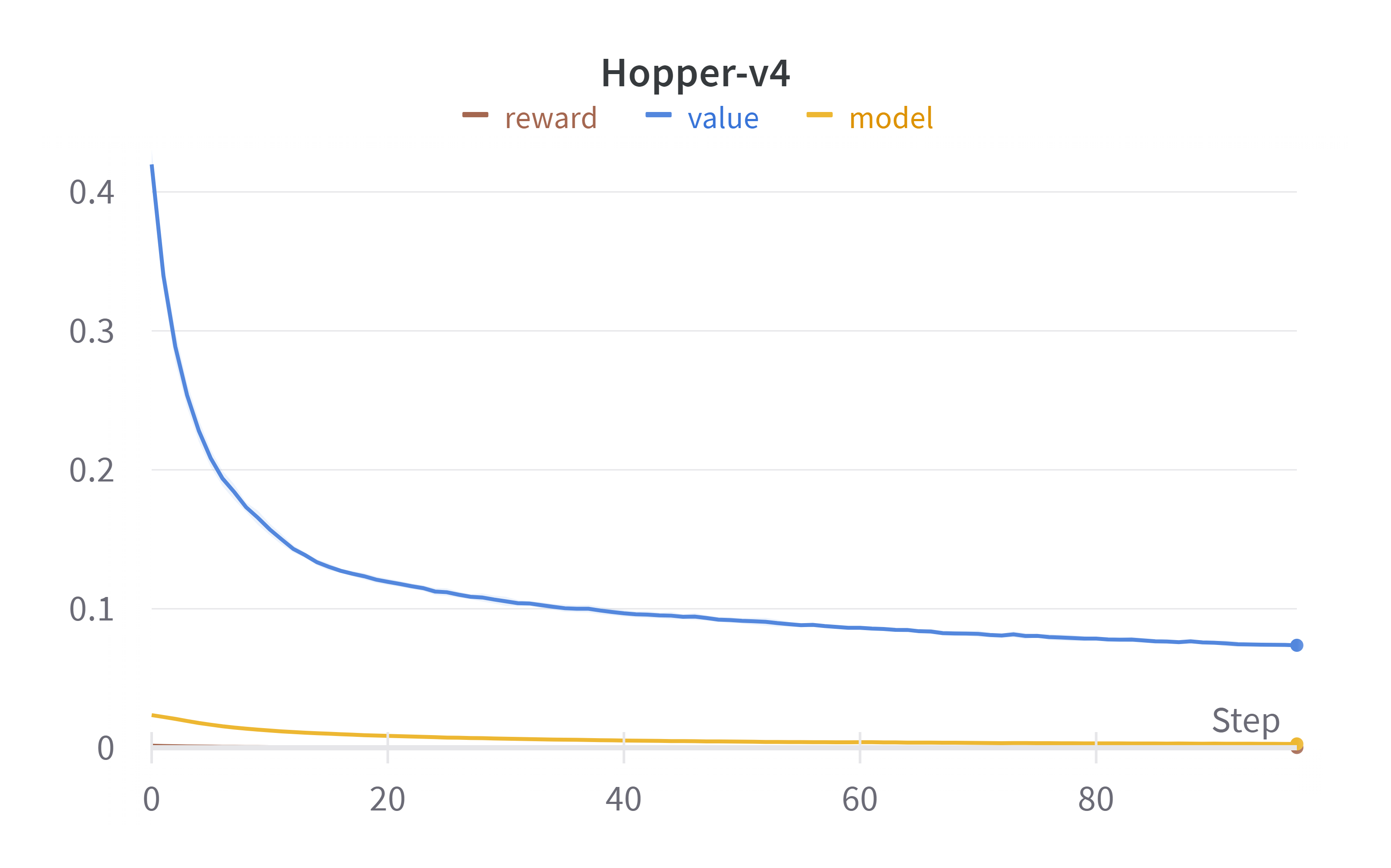}

\medskip

\includegraphics[width=.31\textwidth]{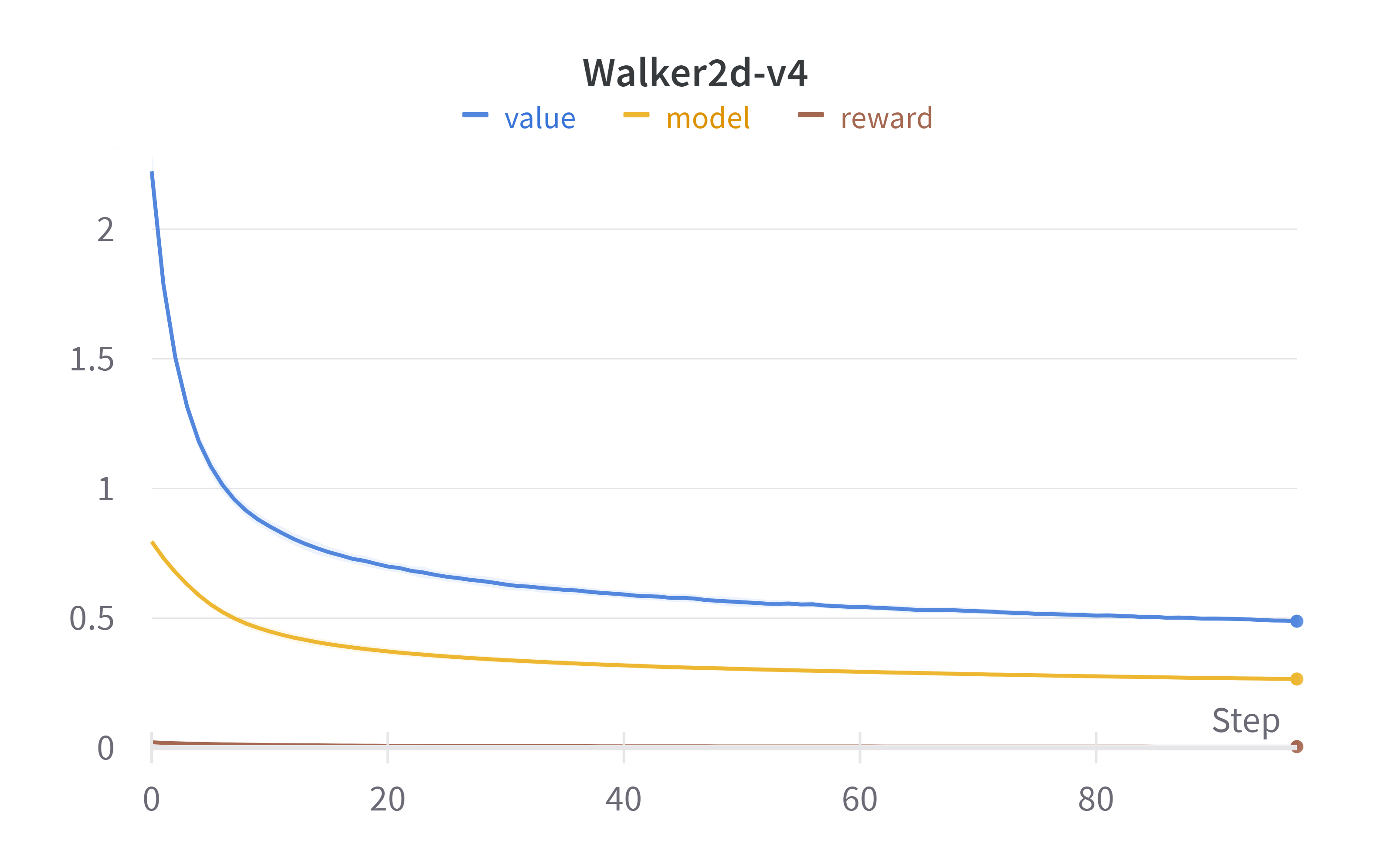}\quad
\includegraphics[width=.31\textwidth]{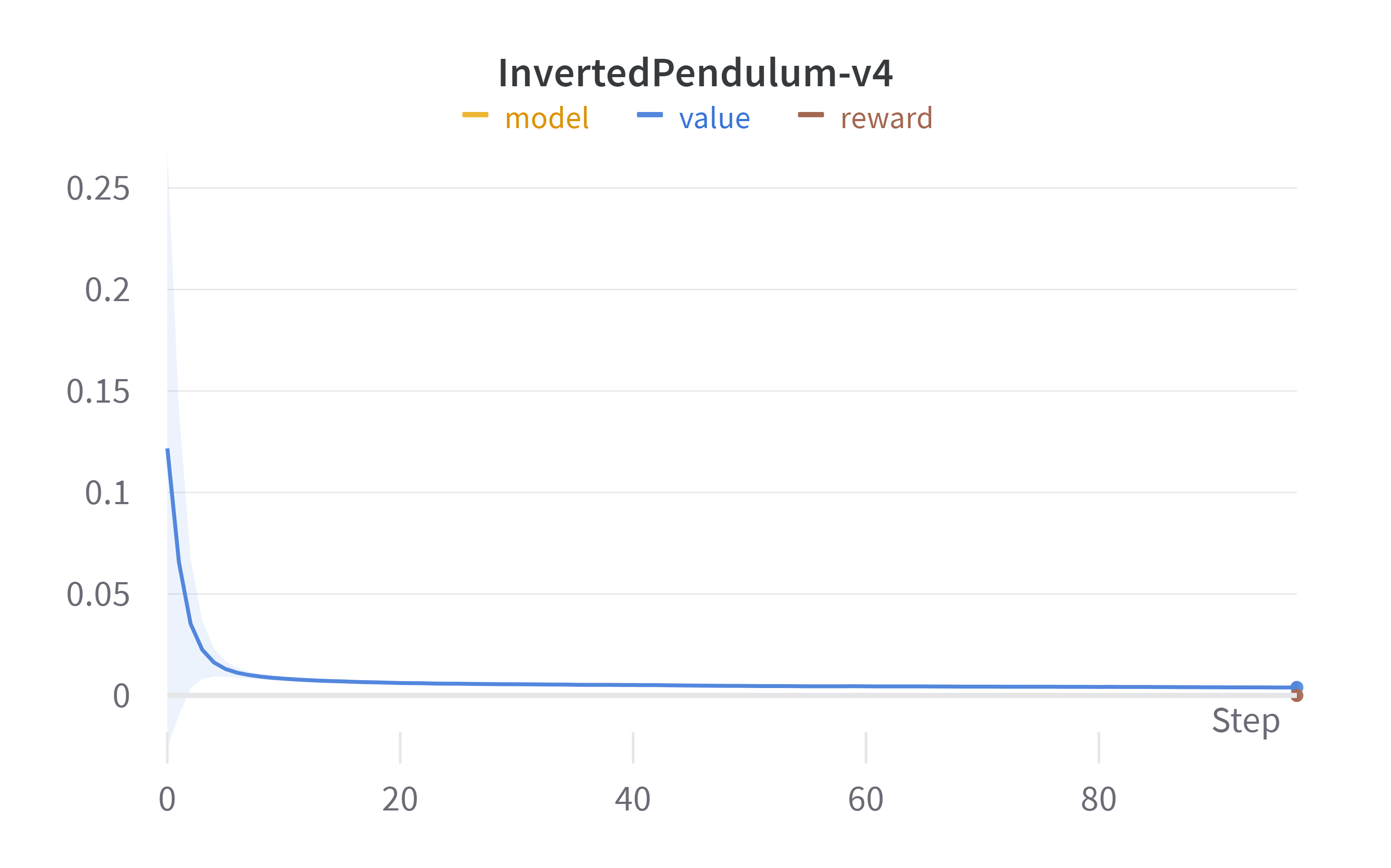}

\caption{Approximation errors of the optimal $Q$-functions, reward functions, and transition functions in MuJoCo environments, using $2$-(hidden) layer neural networks with width $64$. In each environment, we run $5$ independent experiments and report the mean and standard deviation of the approximation errors.}
\label{fig:sac-approx-errors-2-64}
\end{figure}

\subsection{Fitting Q-values learned by Actor-Critic}
\label{app:sec_AC}

In addition to approximating the Q-functions learned from SAC, we also fit the value functions learned in Actor-Critic algorithm. The result, displayed in Figure~\ref{fig:ac-approx-errors}, exhibits that the value functions are more difficult to approximate than the reward and the transition functions.

\begin{figure}[htp]
\centering
\includegraphics[width=.31\textwidth]{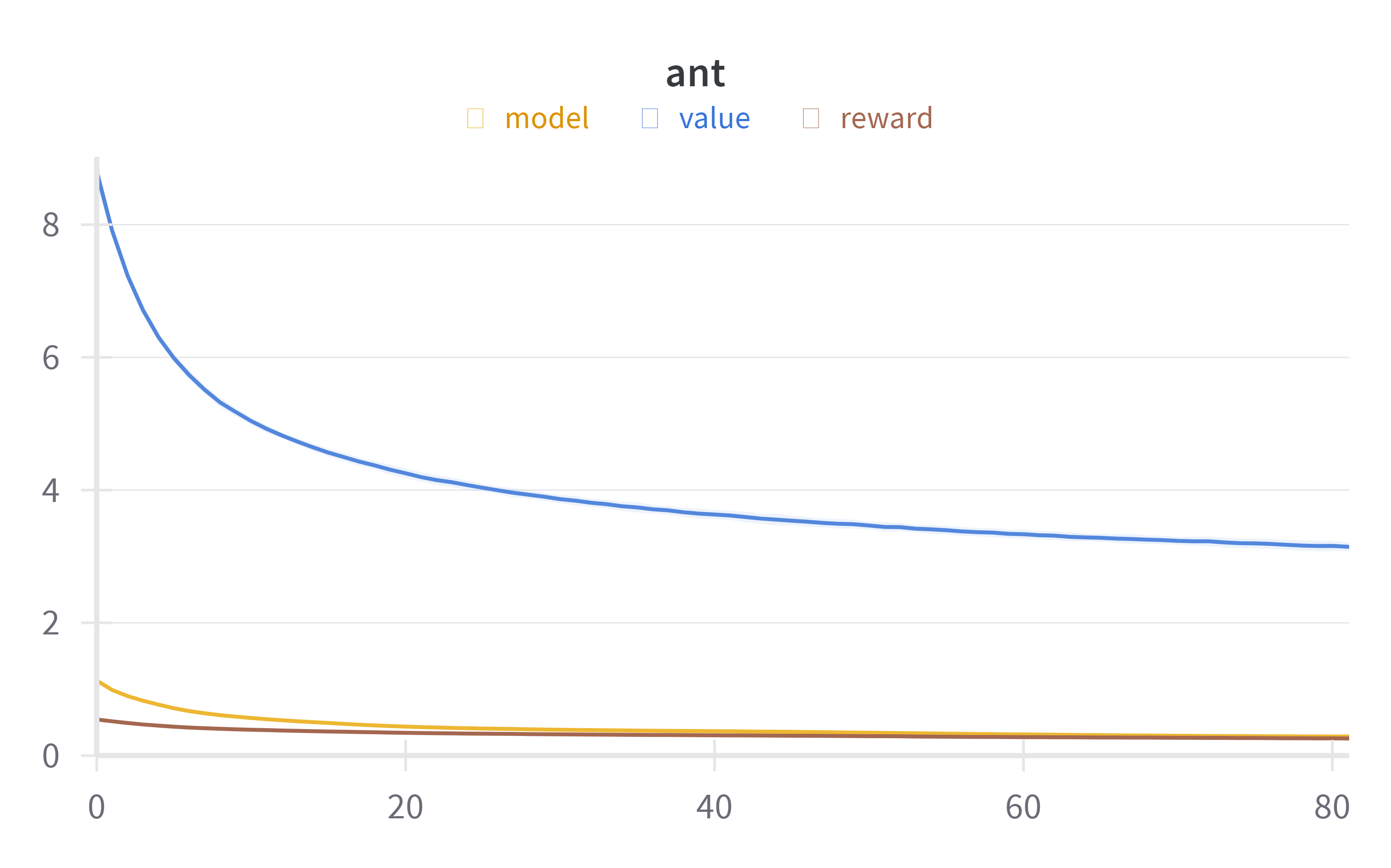}\quad
\includegraphics[width=.31\textwidth]{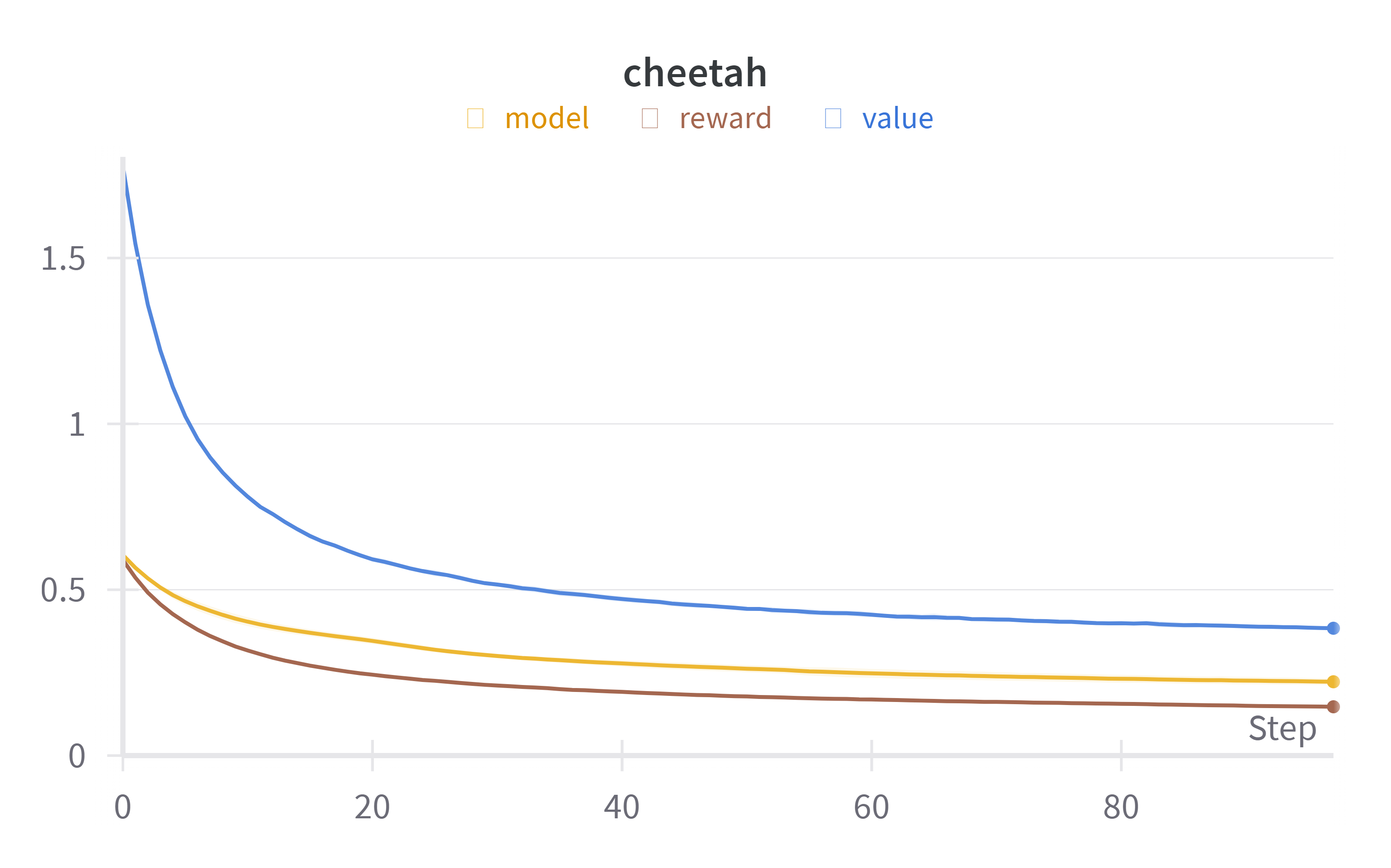}\quad
\includegraphics[width=.31\textwidth]{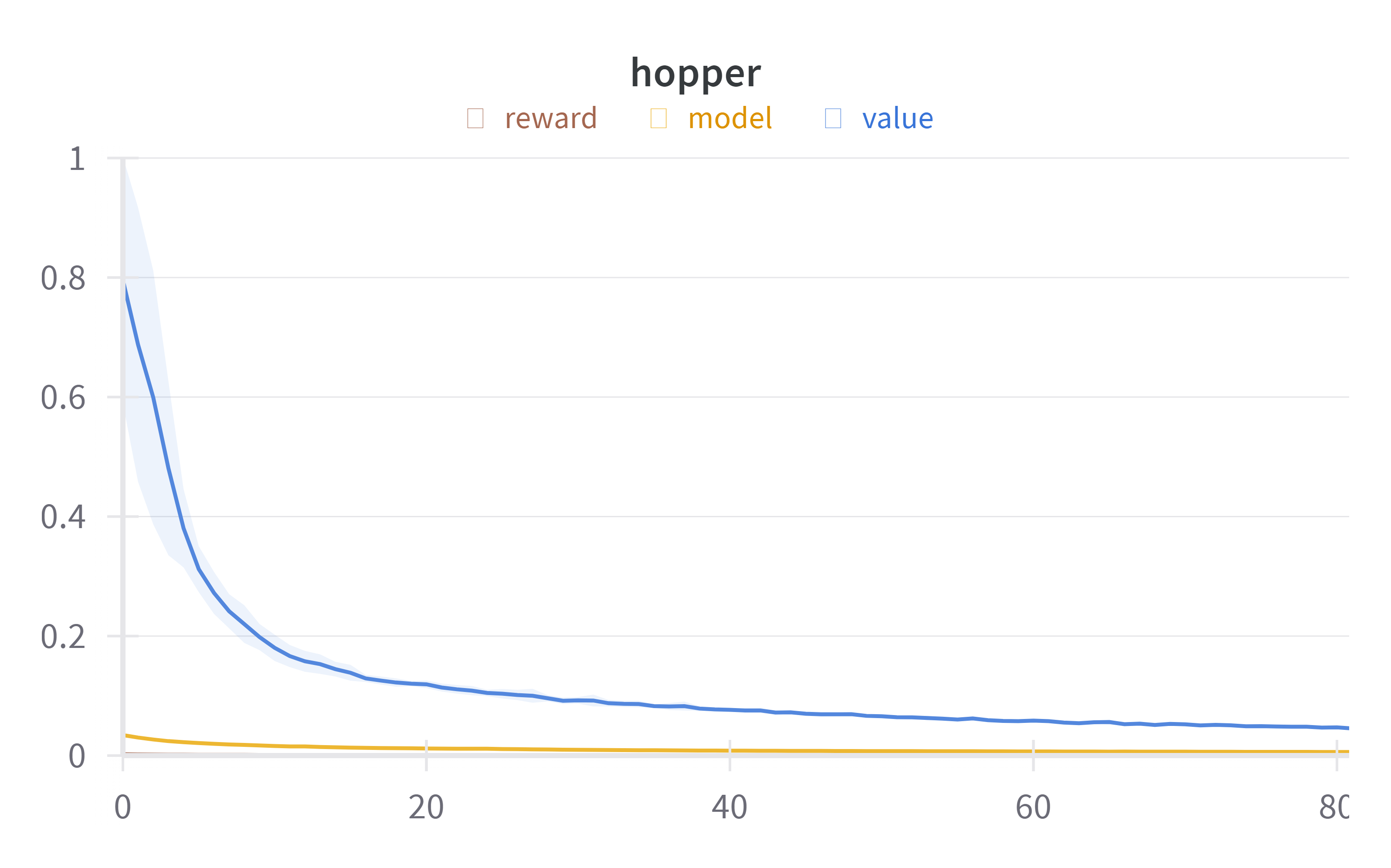}

\medskip

\includegraphics[width=.31\textwidth]{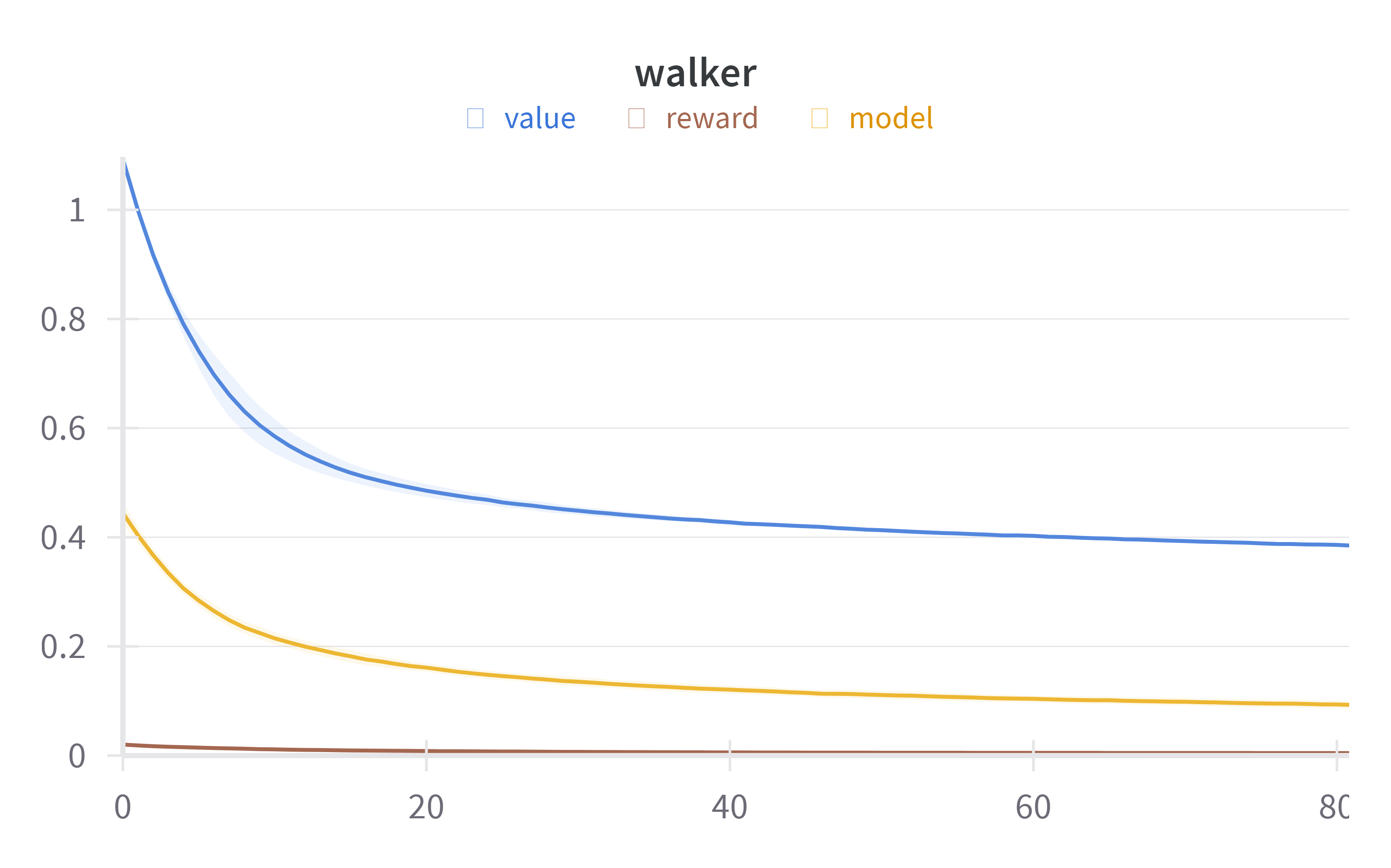}\quad
\includegraphics[width=.31\textwidth]{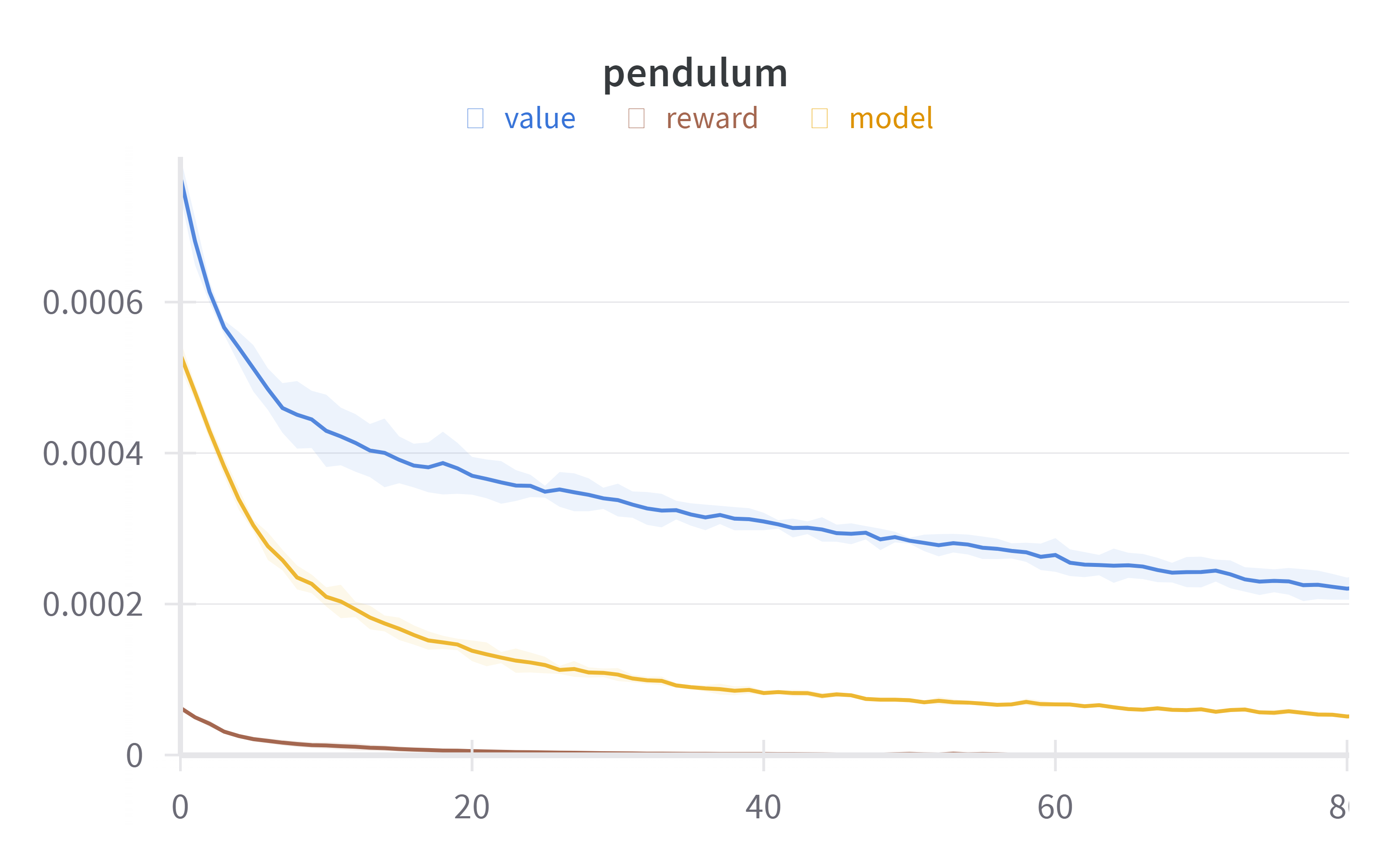}

\caption{Approximation errors of the optimal value functions (learned by actor-critic), reward functions, and transition functions in MuJoCo environments, using $3$-(hidden) layer neural networks with width $64$. In each environment, we run $5$ independent experiments and report the mean and standard deviation of the approximation errors.}
\label{fig:ac-approx-errors}
\end{figure}

\subsection{Fitting Q-values from Monte-Carlo sampling}

\label{app:sec_MC}

We also consider Monte-Carlo estimate of the Q-values of the actors learned by SAC. In Figure~\ref{fig:sac-approx-errors-mc}, we visualize the approximation errors (in log scale), which confirms the phenomenon found in Section~\ref{sec:exp}.

\begin{figure}[htp]
\centering
\includegraphics[width=.31\textwidth]{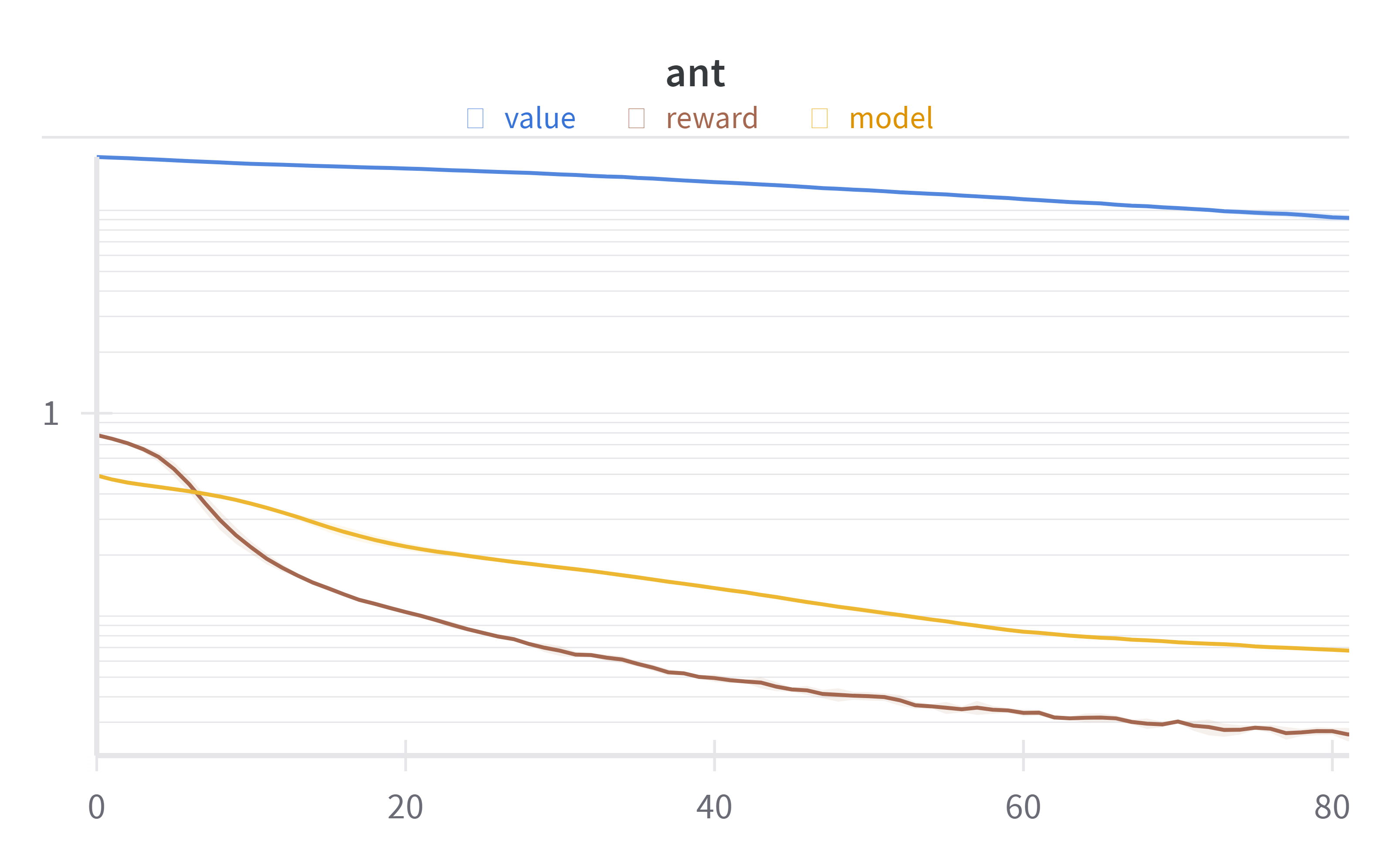}\quad
\includegraphics[width=.31\textwidth]{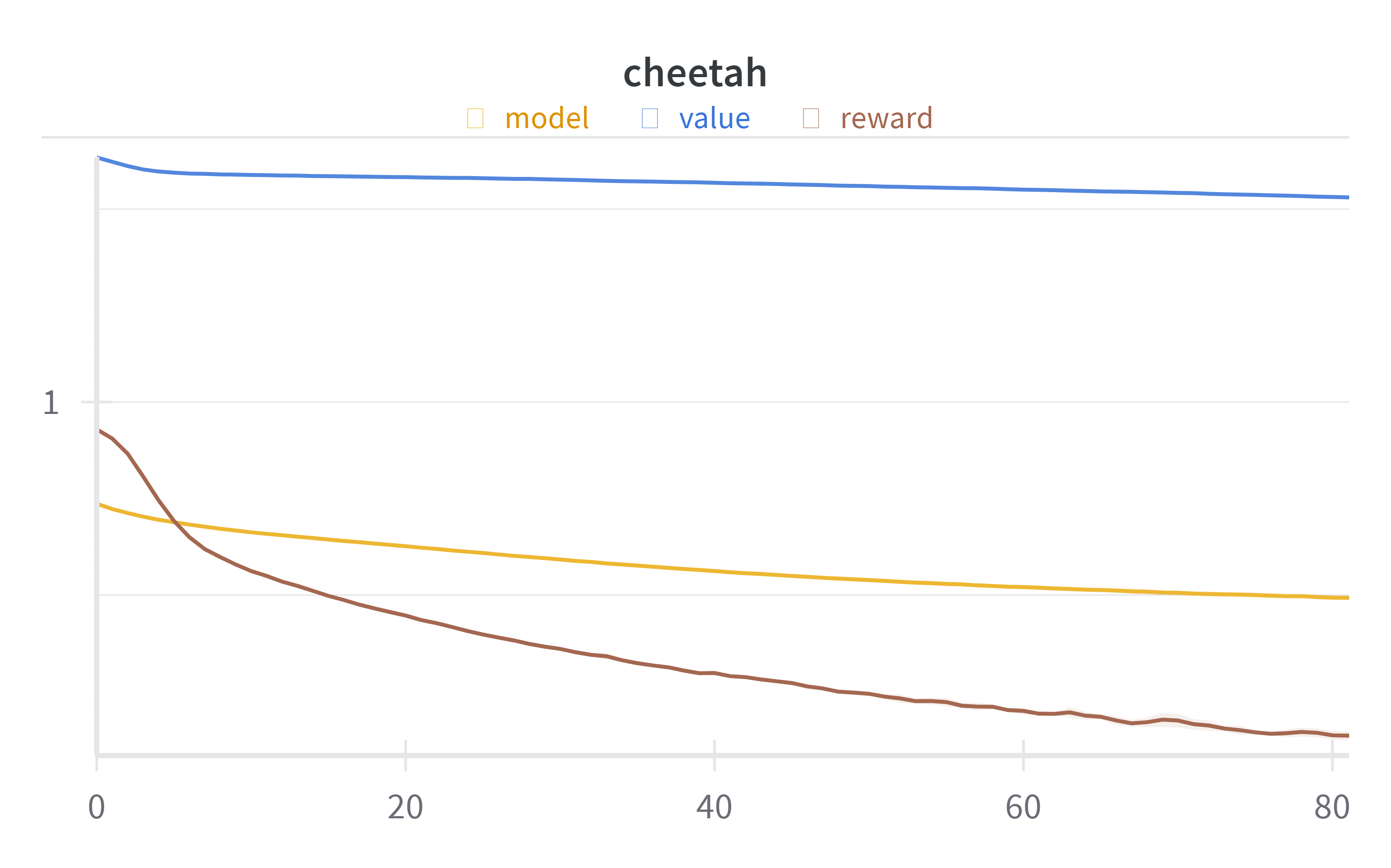}\quad
\includegraphics[width=.31\textwidth]{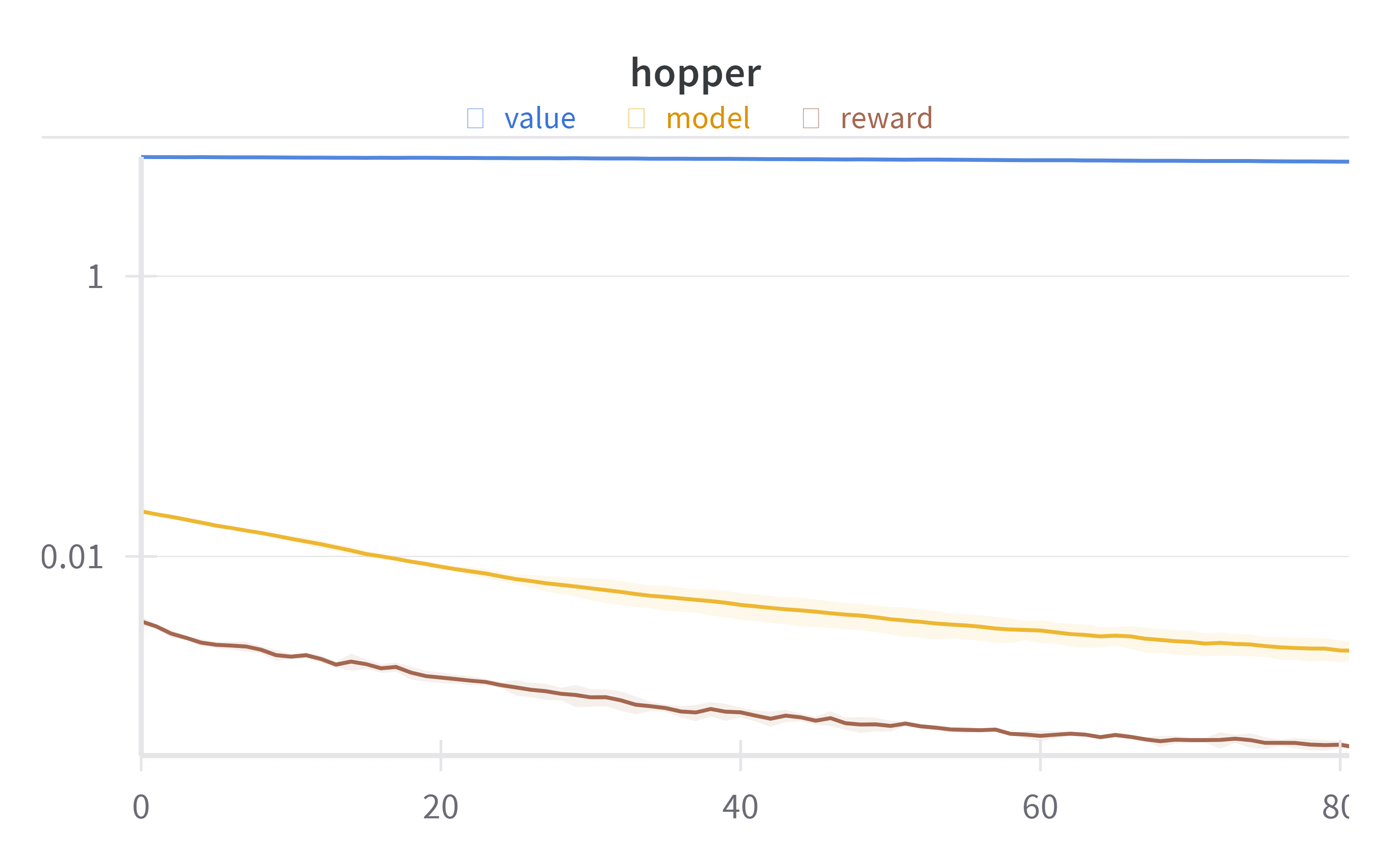}

\medskip

\includegraphics[width=.31\textwidth]{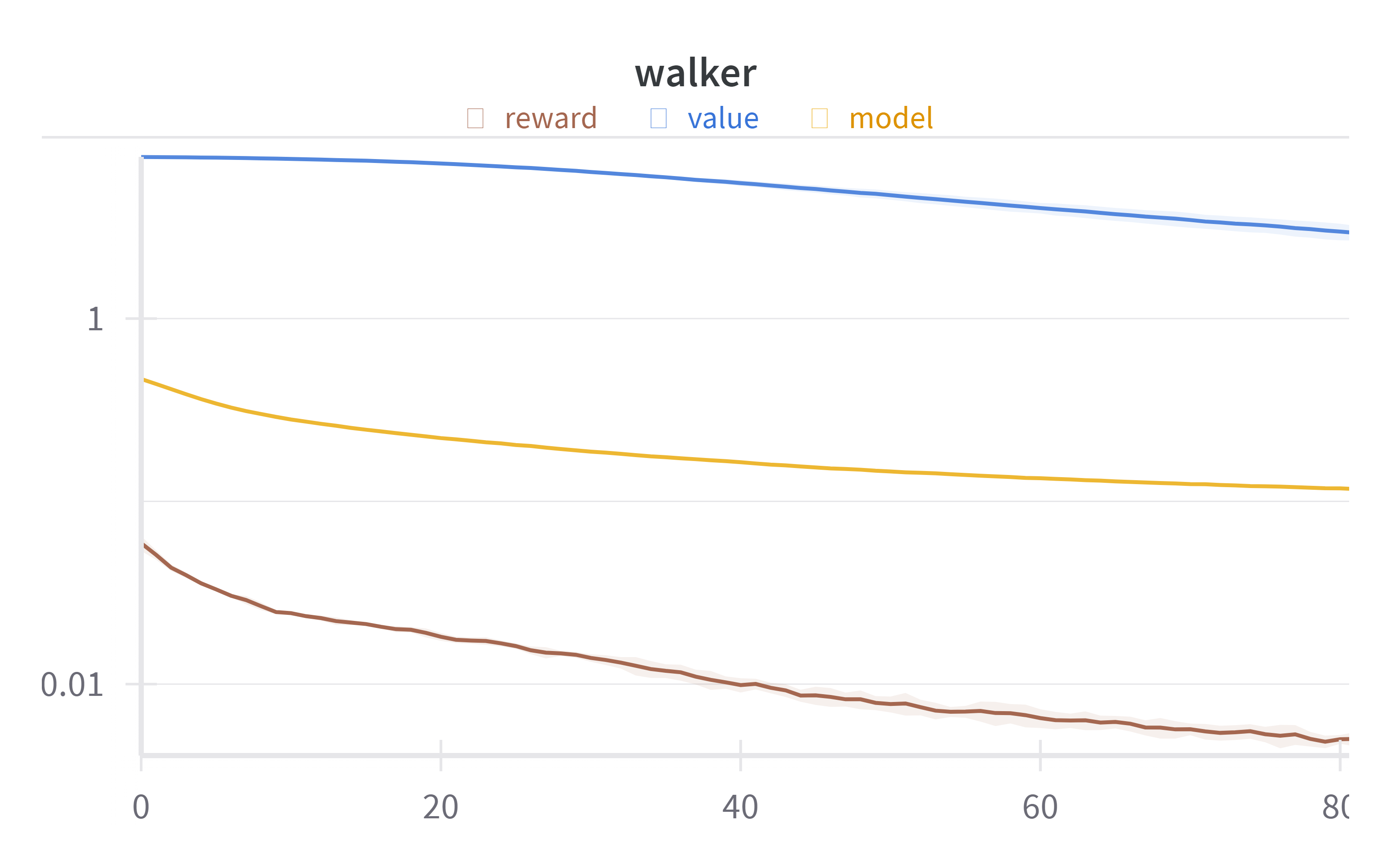}\quad
\includegraphics[width=.31\textwidth]{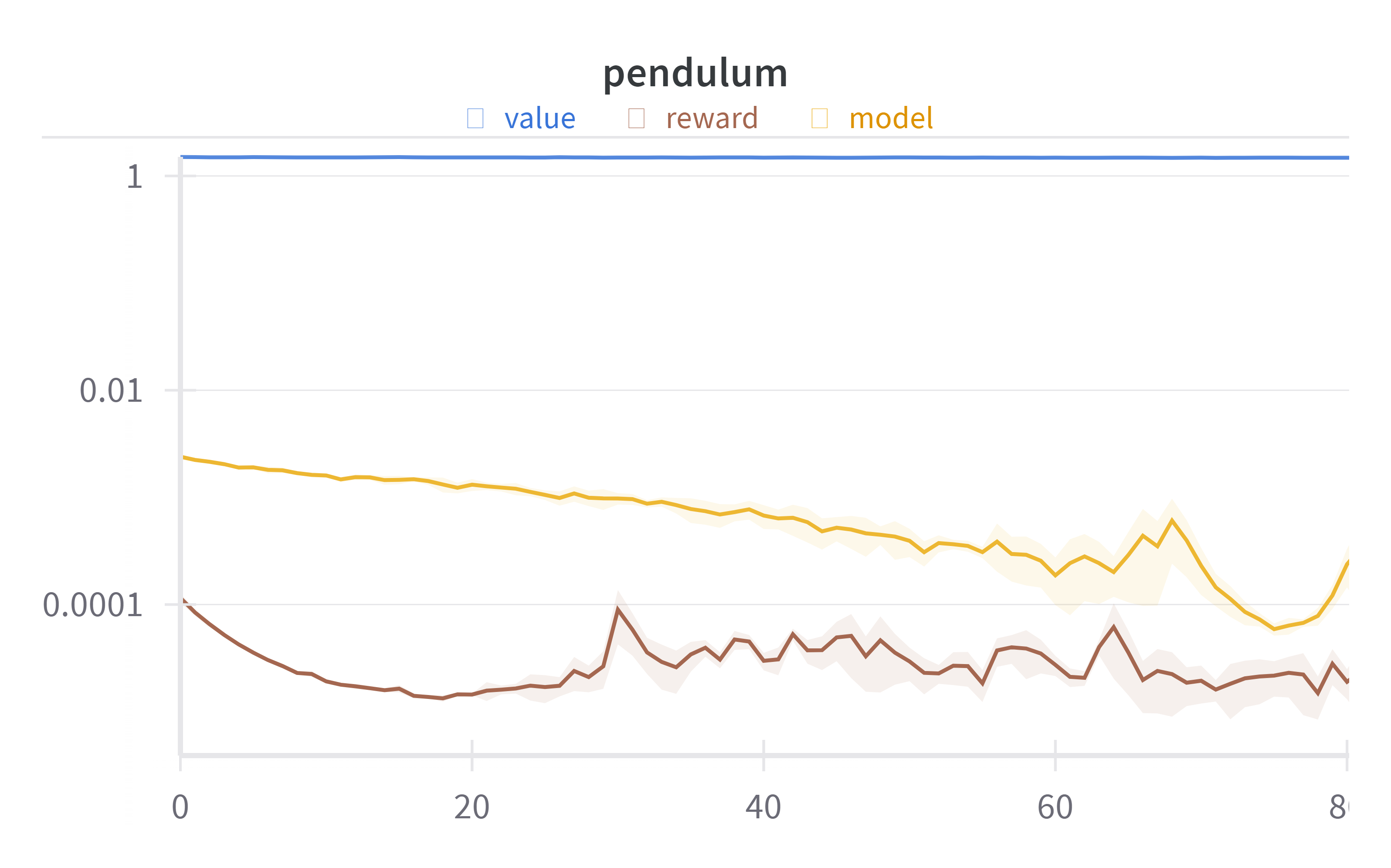}

\caption{Approximation errors of the optimal $Q$-functions, reward functions, and transition functions in MuJoCo environments, using $2$-(hidden) layer neural networks with width $128$. For each state, we take $3$ Monte-Carlo sampling and use the average discounted sum of rewards as the Q-value estimate. The approximation errors are shown in the log scale. In each environment, we run $5$ independent experiments and report the mean and standard deviation of the approximation errors.}
\label{fig:sac-approx-errors-mc}
\end{figure}

\section{Extensions to Stochastic MDP and POMDP}
\label{app:sec_stochastic_POMDO}

In this section, we briefly discuss possible approaches of generalizing our frameworks to stochastic MDP and partially observable MDP (POMDP).

\paragraph{Stochastic MDP.} Although we study the deterministic transition in this paper, it can also be extended to stochastic MDPs. In that case, $P(s,a,s’)$ represents the probability of reaching state $s’$ when choosing action $a$ on state $s$. Since the probability is represented with bounded precision in machine (e.g., 64-bit floating point), the output of $P(s,a,s’)$ can also be represented by a $\{0,1\}$-string.

Now we discuss a natural extension from the deterministic transition kernel to the stochastic transition kernel of our majority MDP. After applying $a = 1$, if $C(s[r]) = 1$ (i.e., the condition is satisfied), instead of deterministically flipping the $s[c]$-th bit of representation bits, in the stochastic version, the $s[c]$-th bit will be flipped with probability $1/2$, and nothing happens otherwise.  This resembles the real-world scenario where an action might fail with some failure probability. The representation complexity of the transition kernel $P$ remains polynomial. For the optimal $Q$ function, if we change $H=2^b + 2n$, since in expectation, each (effective) flipping will cost two steps, the representation complexity of the $Q$-function is still lower bounded by the majority function, which is exponentially large. Note that our deterministic majority MDP can be viewed as an embedded chain of a more `lazy' MDP, and the failure probability can also be arbitrary. Hence, our result (\Cref{thm:majority_mdp_condition,thm:majority_mdp_uncondition}) can be generalized to this natural class of stochastic MDPs.

\paragraph{POMDP.} For POMDP, we can additionally assume a function $O(s) = o$ (or $O(s,a) =o$), which maps a state $s$ (or a state-action pair $(s,a)$)  to a corresponding observation $o$. One possible choice of $O(s)$ could be a substring of the state $s$, which still has low representation complexity. In POMDPs, the $Q$-function is often defined on a sequence of past observations or certain belief states that reflect the posterior distribution of the hidden state. As a result, the $Q$-functions in POMDPs often have higher complexity.  Since the $Q$-function is proven to possess high circuit complexity for the majority MDP, the representation complexity for value-based quantities of a partially observable version could be even higher.

\end{document}